\theoremstyle{plain}
\newtheorem{theorem}{Theorem}[section]
\newtheorem{lemma}[theorem]{Lemma}
\newtheorem{corollary}[theorem]{Corollary}
\theoremstyle{definition}
\theoremstyle{remark}
\newcommand{\Eqmark}[2]{\stackrel{(#1)}{#2}}
\def\eqref#1{equation~\ref{#1}}
\def\Eqref#1{Equation~\ref{#1}}
\def\1{\bm{1}}
\def\vzero{{\bm{0}}}
\def\vb{{\bm{b}}}
\def\vu{{\bm{u}}}
\def\vv{{\bm{v}}}
\def\vw{{\bm{w}}}
\def\vx{{\bm{x}}}
\DeclareMathAlphabet{\mathsfit}{\encodingdefault}{\sfdefault}{m}{sl}
\SetMathAlphabet{\mathsfit}{bold}{\encodingdefault}{\sfdefault}{bx}{n}
\DeclareMathOperator*{\argmax}{arg\,max}
\DeclareMathOperator*{\argmin}{arg\,min}
\icmltitlerunning{Constant Stepsize Local GD for Logistic Regression: Acceleration by Instability}
\begin{document}

\twocolumn[
\icmltitle{Constant Stepsize Local GD for Logistic Regression: Acceleration by Instability}



\icmlsetsymbol{equal}{*}

\begin{icmlauthorlist}
\icmlauthor{Michael Crawshaw}{gmu}
\icmlauthor{Blake Woodworth}{gwu}
\icmlauthor{Mingrui Liu}{gmu}
\end{icmlauthorlist}

\icmlaffiliation{gmu}{Department of Computer Science, George Mason University, Fairfax, VA, USA}
\icmlaffiliation{gwu}{Department of Computer Science, George Washington University, Washington, DC, USA}

\icmlcorrespondingauthor{Michael Crawshaw}{mcrawsha@gmu.edu}
\icmlcorrespondingauthor{Mingrui Liu}{mingruil@gmu.edu}

\icmlkeywords{machine learning, optimization, distributed optimization, federated learning, edge of stability, local sgd}

\vskip 0.3in
]



\printAffiliationsAndNotice{}  

\begin{abstract}
Existing analysis of Local (Stochastic) Gradient Descent for heterogeneous objectives requires stepsizes $\eta \leq 1/K$ where $K$ is the communication interval, which ensures monotonic decrease of the objective. In contrast, we analyze Local Gradient Descent for logistic regression with separable, heterogeneous data using any stepsize $\eta > 0$. With $R$ communication rounds and $M$ clients, we show convergence at a rate $\mathcal{O}(1/\eta K R)$ after an initial unstable phase lasting for $\widetilde{\mathcal{O}}(\eta K M)$ rounds. This improves upon the existing $\mathcal{O}(1/R)$ rate for general smooth, convex objectives. Our analysis parallels the single machine analysis of~\cite{wu2024large} in which instability is caused by extremely large stepsizes, but in our setting another source of instability is large local updates with heterogeneous objectives.
\end{abstract}

\section{Introduction}
As the area of distributed optimization grows --- owing to recent applications in
federated learning \citep{mcmahan2017communication} and large-scale distributed deep
learning \citep{verbraeken2020survey} --- the gap between theory and practice has grown
proportionally. Local Stochastic Gradient Descent (SGD) and its variants have been
successfully used for distributed learning with heterogeneous data in practice for years
\citep{wang2021field, reddi2021adaptive, xu2023federated}, but so far we have little
theoretical understanding of this success \citep{wang2022unreasonable}.

The majority of theoretical works in distributed optimization take a \textit{worst-case}
approach to algorithm analysis: they consider the worst-case efficiency over some large
class of optimization problems, such as the class of convex, smooth objectives
satisfying some hetorogeneity requirement \citep{woodworth2020local,
woodworth2020minibatch, koloskova2020unified}. While the resulting guarantees are very
general, they do not always reflect practice, since they describe the worst-case, rather
than cases which may appear in practice.
For Local SGD and its deterministic variant, Local GD, these worst-case guarantees rely
on the potentially unrealistic condition of small step sizes $\eta \leq
\mathcal{O}(1/K)$, where $K$ is the communication interval
\citep{woodworth2020minibatch, koloskova2020unified}. For Local GD, this small step size
can guarantee monotonic decrease of the objective, but such stable convergence is far
removed from practice, as non-monotonic decrease of the objective is common in practical
machine learning \citep{jastrzebski2020the, cohen2021gradient}.

Motivated by this gap between theory and practice, we take a problem-specific approach
and analyze Local GD for logistic regression. Our central question is:
\begin{center}
\textbf{\textit{Can Local GD for logistic regression achieve accelerated convergence
with a large step size ($\eta \gg 1/K$)?}}
\end{center}
Despite the apparent simplicity of this setting, existing theory is unable to answer
this question. In the single-machine setting, GD is known to converge for logistic
regression with any step size \citep{wu2024implicit, wu2024large}, and a large enough
step size will cause non-monotonic decrease of the objective. For the distributed
setting, previous work for this problem considered a two-stage variant of Local
GD~\cite{crawshaw2025local}, that uses a small step size $\eta \leq \mathcal{O}(1/K)$
before switching to a larger step size later in training. It remains open to analyze the
vanilla Local GD with a constant stepsize in this setting.

\begin{table*}[t]
\caption{Upper bounds on the objective gap $F(\vw) - F_*$ of distributed GD variants for
logistic regression, up to constants and logarithmic factors. $R$ is the number of
communication rounds, $K$ is the number of local steps, $M$ is the number of clients,
and $\gamma$ is the maximum margin of the combined dataset. $(a)$ These bounds are
derived in \citep{crawshaw2025local} by applying the worst-case upper bounds of
\citep{woodworth2020minibatch} and \citep{koloskova2020unified} to the specific problem
of logistic regression. $(b)$ Assuming $R \geq \Omega(Mn \gamma^{-2})$. $(c)$ Assuming
$R \geq \widetilde{\Omega}(\max(Mn \gamma^{-2}, KM \gamma^{-3})))$. $(d)$ This lower
bound is included for comparison of the rate in terms of $R$ and $K$, and applies to the
class of convex, $H$-smooth objectives that have a minimizer $\vw_*$ with $\|\vw_*\|
\leq B$ and $\|\nabla F_m(\vw_*) - \nabla F(\vw_*)\| \leq \zeta_*$. It should be noted
that logistic regression with separable data is not a member of this class, because no
minimizer $\vw_*$ exists for this objective.}
\label{tab:error}
\begin{center}
\begin{tabular}{@{}cccc@{}}
\toprule
& Step size & Arbitrary $K$ & Best $K$ \\
\toprule
\makecell{Local GD \\ \citep{woodworth2020minibatch}$^{(a)}$} & $\eta = \frac{1}{\gamma^{2/3} K R^{1/3}}$ & $\frac{1}{\gamma^2 KR} + \frac{1}{\gamma^{4/3} R^{2/3}}$ & $\frac{1}{\gamma^{4/3} R^{2/3}}$ \\
\hline
\makecell{Local GD \\ \citep{koloskova2020unified}$^{(a)}$} & $\eta = \frac{1}{K}$ & $\frac{1}{\gamma^2 R}$ & $\frac{1}{\gamma^2 R}$ \\
\hline
\makecell{GD \\ \citep{wu2024large}$^{(b)}$} & $\eta = \gamma^2 R$ & - & $\frac{1}{\gamma^4 R^2}$ \\
\hline
\makecell{Two-Stage Local GD \\ \citep{crawshaw2025local}} & \makecell{$\eta_1 = \frac{1}{K}$ \\ $\eta_2 = \min \left( \frac{\gamma^4 R}{KM}, 1 \right)$} & $\max \left( \frac{1}{\gamma^2 KR}, \frac{M}{\gamma^6 R^2} \right)$ & $\frac{M}{\gamma^6 R^2}$ \\
\hline
\makecell{Local GD \\ (Corollary \ref{cor:final_error})$^{(c)}$} & $\eta \in \left( 1, \frac{\gamma^3 R}{M} \right)$ & $\frac{M}{\gamma^5 R^2}$ & $\frac{M}{\gamma^5 R^2}$ \\
\toprule
\makecell{Local GD Lower Bound \\ \citep{patel2024limits}$^{(d)}$} & - & $\frac{HB^2}{R} + \frac{(H \zeta_*^2 B^4)^{1/3}}{R^{2/3}}$ & - \\
\bottomrule
\end{tabular}
\end{center}
\end{table*}

\paragraph{Contributions}
In this paper, we prove that Local GD for distributed logistic regression converges with
any step size $\eta > 0$ and any communication interval $K \geq 1$. In particular, we
show that choosing $\eta K = \widetilde{\Theta} \left( \frac{\gamma^3 R}{M} \right)$
yields a convergence rate faster than existing lower bounds of Local GD for distributed
convex optimization (see Section \ref{sec:setup} for definitions of all parameters).

Our accelerated convergence crucially uses $\eta K \gg 1$, which violates the condition
$\eta \leq \mathcal{O}(1/K)$ from previous work and potentially creates non-monotonic
objective decrease across communication rounds. To handle this instability, we adapt
techniques from the analysis of GD with large step sizes for single-machine logistic
regression, introduced by \citet{wu2024large}, which shows that GD operates in an
initial unstable phase before entering a stable phase where the objective decreases
monotonically. We use these techniques to analyze Local GD by decomposing the
algorithm's update into the contribution from each individual data point, and
tracking this contribution throughout the local update steps, in order to relate
the trajectory of Local GD to that of GD. Consequently, we can show that Local
GD also transitions from an unstable phase to a stable phase.

We also experimentally evaluate Local GD for logistic regression with synthetic data and
MNIST data, and the results corroborate our theoretical finding that acceleration can be
achieved by allowing for non-monotonic objective decrease. To probe the limitations of
our theory, we evaluate Local GD under different regimes of $\eta$ and $K$, and
accordingly we propose open problems and directions for future research.

\paragraph{Organization} We first discuss related work (Section \ref{sec:related_work}),
then state our problem (Section \ref{sec:setup}) and give our analysis (Section
\ref{sec:analysis}). We provide experimental results (Section \ref{sec:experiments}),
then conclude with a discussion of our results and future work (Section
\ref{sec:discussion}).

\paragraph{Notation}
For $n \in \mathbb{N}$, we denote $[n] = \{1, \ldots, n\}$. We use $\|\cdot\|$ to denote
the $L_2$ norm for vectors and the spectral norm for matrices. Outside of the abstract,
we use $\mathcal{O}$, $\Omega$, and $\Theta$ to omit only universal constants.
Similarly, $\widetilde{\mathcal{O}}$, $\widetilde{\Omega}$, and $\widetilde{\Theta}$
only omit universal constants and logarithmic terms.

\section{Related Work} \label{sec:related_work}

\paragraph{General Distributed Optimization}
Early work in this area focused on distributed algorithms for solving classical learning
problems with greater efficiency through parallelization \citep{mcdonald2009efficient,
mcdonald2010distributed, zinkevich2010parallelized, dekel2012optimal,
balcan2012distributed, zhang2013information, shamir2014distributed,
arjevani2015communication}. Recent years have seen a growth of research in distributed
optimization due to applications for large-scale distributed training of neural networks
\citep{tang2020communication, verbraeken2020survey} and federated learning
\citep{mcmahan2017communication}. Federated learning is a paradigm for distributed
learning in which user devices collaboratively train a machine learning model without
sharing data; see \citep{kairouz2021advances, wang2021field} for a comprehensive survey.

\paragraph{Efficiency of Local SGD}
Local SGD (also known as Federated Averaging, or FedAvg) is a fundamental algorithm for
distributed optimization, both in theory and practice. Convergence guarantees of Local
SGD for distributed convex optimization under various conditions were proven by
\citet{stich2019local, haddadpour2019convergence, woodworth2020minibatch,
khaled2020tighter, koloskova2020unified, glasgow2022sharp}. These works consider the
worst-case efficiency of Local SGD for solving large classes of optimization problems,
such as the class of problems with smooth, convex objectives with some condition on the
heterogeneity between local objectives; we refer to these guarantees as
\textit{worst-case baselines}. Lower bounds have established that Local SGD is dominated
by Minibatch SGD in the worst case over various problem classes despite the fact that
Local SGD tends to outperform Minibatch SGD for practical problems
\citep{woodworth2020local, woodworth2020minibatch, glasgow2022sharp, patel2024limits},
and variants of Local SGD remain standard in practice \citep{wang2021field,
wang2022unreasonable, reddi2021adaptive, xu2023federated}. It remains an active topic of
research to develop a theoretical understanding of Local SGD and Minibatch SGD that
aligns with practical observations \citep{woodworth2020minibatch, glasgow2022sharp,
wang2022unreasonable, patel2023on, patel2024limits}.

\paragraph{Gradient Methods for Logistic Regression}
In this work, we narrow our focus and consider the efficiency of Local GD for solving
one particular optimization problem, continuing a line of work which shows that
gradient-based optimization algorithms have very particular behavior for certain
problems of interest in machine learning. \citet{soudry2018implicit, ji2019implicit}
showed that GD for logistic regression converges to the maximum margin solution without
explicit regularization. \citet{gunasekar2018characterizing, nacson2019stochastic,
ji2021fast} proved further implicit regularization results for general steepest descent
methods, stochastic gradient descent, and a fast momentum-based algorithm, respectively. 
A separate line of work observed that GD exhibits non-monotonic decrease in the
objective when training neural networks, a phenomenon called the Edge of Stability
\citep{cohen2021gradient, damian2023selfstabilization}.

The works which are most closely related to ours are \citep{wu2024implicit, wu2024large}
and \citep{crawshaw2025local}. \citet{wu2024implicit} showed that GD for logistic
regression can converge with any positive stepsize, despite non-monotonic decrease of
the objective, and that GD converges to the maximum margin solution. \citet{wu2024large}
showed that GD with a large stepsize can achieve accelerated convergence for logistic
regression. \citet{crawshaw2025local} proved that a two-stage variant of Local GD can
achieve accelerated convergence compared to the worst-case baselines
\cite{koloskova2020unified, woodworth2020minibatch}.

\begin{algorithm}[t]
    \caption{Local GD}
    \label{alg:local_gd}
    \begin{algorithmic}[1]
        \REQUIRE Initialization $\vw_0 \in \mathbb{R}^d$, rounds $R \in \mathbb{N}$, local steps $K \in \mathbb{N}$, learning rate $\eta > 0$
        \FOR{$r = 0, 1, \ldots, R-1$}
            \FOR{$m \in [M]$}
                \STATE $\vw_{r,0}^m \gets \vw_r$
                \FOR {$k = 0, \ldots, K-1$}
                    \STATE $\vw_{r,k+1}^m \gets \vw_{r,k}^m - \eta \nabla F_m(\vw_{r,k}^m)$
                \ENDFOR
            \ENDFOR
            \STATE $\vw_{r+1} \gets \frac{1}{M} \sum_{m=1}^M \vw_{r,K}^m$
        \ENDFOR
    \end{algorithmic}
\end{algorithm}

\section{Problem Setup} \label{sec:setup}
We consider a distributed version of binary classification with linearly separable data.
The number of clients is denoted by $M$, the number of data points per client as $n$,
and the dimension of the input data as $d$. The data consists of $M$ local datasets, one
for each client: $D_m = \{(\vx_i^m, y_i^m)\}_{i \in [n]}$ for each $m \in [M]$, where
$\vx_i^m \in \mathbb{R}^d$ and $y_i^m \in \{-1, 1\}$. We assume that the global dataset
$D = \cup_{m \in [M]} D_m$ is linearly separable, that is, there exists some $\vw \in
\mathbb{R}^d$ such that $y \langle \vw, \vx \rangle > 0$ for every $(x, y) \in D$. We
also denote by $\gamma$ and $\vw_*$ the maximum margin and the maximum margin classifier
for the global dataset, that is,
\begin{align}
    \gamma &= \max_{\vw \in \mathbb{R}^d, \|\vw\| = 1} \min_{(x, y) \in D} y \langle \vw, \vx \rangle \\
    \vw_* &= \argmax_{\vw \in \mathbb{R}^d, \|\vw\| = 1} \min_{(x, y) \in D} y \langle \vw, \vx \rangle.
\end{align}
Note that $\gamma > 0$ from the assumption of linear separability.

We are interested in studying the behavior of Local Gradient Descent (Algorithm
\ref{alg:local_gd}) for minimizing the logistic loss of this classification problem.
Denoting $\ell(z) = \log(1 + \exp(-z))$, the local objective $F_m: \mathbb{R}^d
\rightarrow \mathbb{R}$ for client $m \in [M]$ is defined as
\begin{equation}
    F_m(\vw) = \frac{1}{n} \sum_{i=1}^n \ell(y_i^m \langle \vw, \vx_i^m \rangle),
\end{equation}
and our goal is to approximately solve the following:
\begin{equation} \label{eq:opt_prob}
    \min_{\vw \in \mathbb{R}^d} \left\{ F(\vw) := \frac{1}{M} \sum_{m=1}^M F_m(\vw) \right\}.
\end{equation}
In this work, we focus on minimization of this training loss, and guarantees for the
population loss can be derived using standard techniques.

Notice that the objective depends on each data point $(\vx_i^m, y_i^m)$ only through
the product $y_i^m \vx_i^m$. Therefore, we can assume without loss of generality
that $y_i^m = 1$ for every $m \in [M], i \in [n]$, since we can replace any data point
$(\vx_i^m, -1)$ with $(-\vx_i^m, 1)$, which preserves the product $y_i^m
\vx_i^m$ and therefore does not change the trajectory of Local GD. We also assume that
$\|\vx_i^m\| \leq 1$ for every $m, i$, which can always be enforced by rescaling all
data points by $\max_{m, i} \|\vx_i^m\|$. Lastly, we will denote by $H$ the smoothness
constant of $F$, that is, $H := \sup_{\vw \in \mathbb{R}^d} \|\nabla^2 F(\vw)\|$, which
satisfies $H \leq 1/4$ when $\|\vx_i^m\| \leq 1$ \citep{crawshaw2025local}.

\section{Convergence Analysis} \label{sec:analysis}
We present two convergence results of Local GD for the logistic regression problem
stated in \Eqref{eq:opt_prob}. Our Theorem \ref{thm:stage1} gives an upper bound on the
average objective $\frac{1}{r} \sum_{s=0}^{r-1} F(\vw_r)$ over the first $r$
communication rounds, which holds for any $r$. On the other hand, Theorem
\ref{thm:stage2} provides a last-iterate upper bound on the objective $F(\vw_r)$ for
every $r$ after a transition time $\tau$. Both of these results hold for any learning
rate $\eta > 0$ and any number of local steps $K$. Corollary \ref{cor:final_error}
summarizes our results by deriving the error with the best choices of $\eta$ and $K$ for
a given communication budget $R$. We first state and discuss the results in Section
\ref{sec:statement}, then give an overview of the proofs in Section \ref{sec:proofs}.
The complete proofs are deferred to Appendix \ref{app:proofs}.

\subsection{Statement of Results} \label{sec:statement}
Theorems \ref{thm:stage1} and \ref{thm:stage2} provide guarantees in two phases: the
initial unstable phase (lasting for $\tau$ rounds), and the latter stable phase. During
the unstable phase, we cannot provide a last-iterate guarantee, but we can upper bound
the average loss over the trajectory. After the loss becomes sufficiently small, Local
GD enters the stable phase, where the loss decreases monotonically at every round. These
two phases mimic the observed behavior of Local GD in experiments (see Section
\ref{sec:experiments}), and align with the behavior of single-machine GD
\citep{wu2024large}.

\begin{theorem} \label{thm:stage1}
For every $r \geq 0$, Local GD satisfies
\begin{align}
    &\frac{1}{r} \sum_{s=0}^{r-1} F(\vw_s) \leq \nonumber \\
    &\quad 26 \frac{\|\vw_0\|^2 + 1 + \log^2(K + \eta K \gamma^2 r) + \eta^2 K^2}{\eta \gamma^4 r}. \label{eq:unstable_ub}
\end{align}
\end{theorem}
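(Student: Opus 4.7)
The plan is to adapt the single-machine potential argument of \citet{wu2024large} to the distributed setting by viewing Local GD as a perturbation of exact GD whose deviation is controlled by client drift. First I would fix a reference point $\hat{\vw} = \alpha \vw_*$ with $\alpha$ of order $\log(K + \eta K \gamma^2 r)/\gamma$, which is the standard device for converting convex-descent guarantees into rates for a loss that has no minimizer: this $\alpha$ is small enough that $\|\hat{\vw}\|^2$ contributes the desired $\log^2(\cdot)/\gamma^2$ term to the potential, yet large enough that $F(\hat{\vw}) \le 1/r$.

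Next I would expand the one-round potential
\begin{equation*}
\|\vw_{r+1} - \hat{\vw}\|^2 = \|\vw_r - \hat{\vw}\|^2 - 2\eta \langle \vg_r,\, \vw_r - \hat{\vw}\rangle + \eta^2 \|\vg_r\|^2,
\end{equation*}
where $\vg_r := \tfrac{1}{M}\sum_{m,k} \nabla F_m(\vw_{r,k}^m)$ is the aggregated Local-GD direction. Applying convexity of each $F_m$ at the \emph{local} iterate gives
\begin{equation*}
\langle \nabla F_m(\vw_{r,k}^m),\, \vw_r - \hat{\vw}\rangle \ge F_m(\vw_{r,k}^m) - F_m(\hat{\vw}) - \langle \nabla F_m(\vw_{r,k}^m),\, \vw_{r,k}^m - \vw_r\rangle,
\end{equation*}
so after averaging over $m,k$ the inner product yields $\tfrac{1}{KM}\sum_{m,k} F_m(\vw_{r,k}^m) - F(\hat{\vw})$, minus a drift correction. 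The drift $\|\vw_{r,k}^m - \vw_r\| \le \eta K$ follows from the uniform bound $\|\nabla F_m\| \le 1$ (since $\sigma<1$ and $\|\vx_i^m\|\le1$), and $\|\vg_r\|^2$ is handled by the self-bounded gradient estimate $\|\nabla F_m(\vw)\|^2 \le C\, F_m(\vw)$ for logistic loss. Together these convert the local-iterate loss back into $F(\vw_r)$ at a cost of order $\eta^2 K^2$ per round.

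Telescoping the refined one-round inequality from $s = 0$ to $r - 1$ causes the distance terms to collapse to $\|\vw_0 - \hat{\vw}\|^2$, while the accumulated drift contributes $r \cdot \eta^2 K^2$. Dividing by $r$, substituting $\|\hat{\vw}\|^2 = O(\log^2(\cdot)/\gamma^2)$, and using $r\, F(\hat{\vw}) = O(1)$ yields exactly the bound with denominator $\eta\gamma^4 r$, where one factor of $\gamma^{-2}$ comes from $\|\hat{\vw}\|^2$ and a second from the implicit $\gamma^{-2}$ needed when calibrating $\alpha$ against the target error.

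\textbf{Main obstacle.} The hard part will be closing the drift accounting in the unstable regime where $\eta K \gg 1/H$: because exact GD on $F$ itself may not monotonically decrease the loss in this regime, the standard Local-GD trick of using $\eta K \le 1/(2H)$ to contract local iterates toward $\vw_r$ is unavailable, leaving only the crude uniform gradient bound. Converting the individual drift inner products $\langle \nabla F_m(\vw_{r,k}^m),\, \vw_{r,k}^m - \vw_r\rangle$ into a single clean additive $\eta^2 K^2$ term in the numerator requires pairing them carefully with the convexity slack rather than applying Cauchy--Schwarz term-by-term, and a secondary difficulty is keeping the logarithmic factor $\log(K + \eta K \gamma^2 r)$ inside $\|\hat{\vw}\|$ self-consistent, which forces an implicit definition of $\alpha$ whose validity must be verified a posteriori.
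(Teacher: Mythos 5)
Your overall architecture (comparator $\alpha\vw_*$ with logarithmic scale, one-round potential expansion, telescoping) matches the paper's, but the step you flag as the "main obstacle" is a genuine gap, and your proposed accounting for it cannot produce the stated bound. After applying convexity at the local iterates, you are left with a per-round drift term $\tfrac{2\eta}{M}\sum_{m,k}\langle\nabla F_m(\vw_{r,k}^m),\vw_{r,k}^m-\vw_r\rangle$ and a per-round quadratic term $\eta^2\|\vg_r\|^2$, each of size up to $\Theta(\eta^2K^2)$. Absorbing them into the negative loss term via self-bounding ($\|\nabla F_m\|^2\le F_m$) requires roughly $\eta^2K^2\lesssim\eta K$, i.e.\ $\eta K=\mathcal{O}(1)$ --- exactly the small-stepsize regime the theorem is meant to escape. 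If instead you bound them crudely (Cauchy--Schwarz plus $\|\nabla F_m\|\le1$), they accumulate to $r\cdot\eta^2K^2$ over the telescope, and after dividing by $2\eta r$ you get a \emph{non-vanishing} error floor of order $\eta K^2$, whereas the theorem places $\eta^2K^2$ in a numerator that is divided by $\eta\gamma^4 r$ --- i.e.\ it must enter as a one-time cost, not a per-round cost. Your sentence "the accumulated drift contributes $r\cdot\eta^2K^2$" followed by "dividing by $r$ ... yields exactly the bound" is where the factor of $r$ is silently lost.

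The paper closes this in a structurally different way, and both ingredients are essential. First, it never applies convexity at the local iterates: it writes $\nabla F_m(\vw_{s,k}^m)=\tfrac{1}{n}\sum_i\beta_{s,i,k}^m\nabla F_{m,i}(\vw_s)$, a reweighting of per-datapoint gradients at the \emph{global} iterate, so convexity is applied at $\vw_s$ and no drift inner product ever appears; the price is needing $\beta_{s,i}^m\ge1/K$ (keep only $k=0$, which is also how the local-iterate loss "converts back" to $F(\vw_s)$ --- that conversion is free, not an $\eta^2K^2$ cost) and $\beta_{s,i}^m\le2\exp(\|\vw_s\|)$, the latter requiring a separate a priori bound on $\|\vw_s\|$ (Lemma \ref{lem:param_ub}, itself a split-comparator argument on the local trajectories) that is then cancelled by enlarging $\lambda_1$. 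Second, the quadratic term is not absorbed by self-bounding but cancelled exactly by the second comparator component $\vu_2=\tfrac{\eta K}{2\gamma}\vw_*$, using the margin: $-2\gamma\lambda_2+\eta K\le0$ gives $A_2\le0$ for \emph{any} $\eta K$, and the only residue is the one-time cost $\|\vu_2\|^2=\eta^2K^2/(4\gamma^2)$ inside $\|\vw_0-\vu\|^2$. That is precisely how $\eta^2K^2$ ends up divided by $r$ in the final bound. Without some analogue of this split-comparator cancellation (applied to both the quadratic term and your drift term), your plan does not go through in the regime $\eta K\gg1$.
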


Notice that the RHS of \Eqref{eq:unstable_ub} grows at most linearly with $\eta$ and
quadratically with $K$: this aligns with the intuition that large stepsizes and/or long
communication intervals can create instability. Indeed, even if $\eta \leq 1/H$, so that
the local objectives are guaranteed to decrease with each local step, the global
objective may not decrease monotonically over rounds when $K$ is large, due to a large
effective per-round step size $\eta K$. However, for any fixed $\eta$ and $K$, Theorem
\ref{thm:stage1} shows that the average loss can be made arbitrarily small with large
enough $r$. After at most $\tau$ rounds, $F(\vw_r)$ will decrease below a certain
threshold, after which the global objective will decrease monotonically with each
communication round, leading to the following last-iterate guarantee.

\begin{theorem} \label{thm:stage2}
Denote $\psi = \min \left( \frac{\gamma}{140 \eta KM}, \frac{1}{2Mn} \right)$ and
\begin{equation}
    \tau = \frac{4 \gamma \|\vw_0\| + 2 \sqrt{2} + 2 \eta + \log \left( 1 + \frac{\sqrt{K}}{\sqrt{\eta} \gamma \psi} \right)}{\eta \gamma^2 \psi}.
\end{equation}
For every $r \geq \tau$, Local GD satisfies
\begin{equation} \label{eq:stage2}
    F(\vw_r) \leq \frac{16}{\eta \gamma^2 K (r-\tau)}.
\end{equation}
\end{theorem}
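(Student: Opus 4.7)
My plan is to combine Theorem \ref{thm:stage1}, which controls the unstable phase, with a per-round monotonic-descent argument that engages once $F(\vw_r)$ drops below the threshold $\psi$. The last-iterate rate in \eqref{eq:stage2} will then follow from a standard reciprocal recursion on $F(\vw_r)$.

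First I would show that there exists some $r_0 \leq \tau$ with $F(\vw_{r_0}) \leq \psi$. A direct application of Theorem \ref{thm:stage1} already produces an iterate with small average loss, but the dependence of $\tau$ on $\gamma\|\vw_0\|$ and $\log(\sqrt{K}/(\sqrt{\eta}\gamma\psi))$---rather than $\|\vw_0\|^2$ and $\log^2(\cdot)$---suggests a more refined Lyapunov function, roughly $\gamma\|\vw_r - \vw_0\| + \log(1/F(\vw_r))$ in the spirit of \citet{soudry2018implicit}, whose per-round increment is bounded by a small constant until $F(\vw_r) \le \psi$, giving the sharper bound on $\tau$.

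Next I would show that once $F(\vw_r) \leq \psi$, the loss decreases monotonically with a quadratic rate. Since $\psi \leq 1/(2Mn)$, every individual loss at $\vw_r$ is bounded by $Mn \cdot F(\vw_r) \leq 1/2 < \log 2$, so every margin $\langle \vw_r, \vx_i^m\rangle$ is strictly positive. In this positive-margin regime, two standard logistic-loss inequalities (valid for $z \geq 0$) are available: $|\ell'(z)| \leq 2\ell(z)$, giving the self-bounding gradient bound $\|\nabla F_m(\vw)\| \leq 2 F_m(\vw) \leq 2M F(\vw)$; and $-\ell'(z) \geq \ell(z)/2$, giving the PL-like inequality $\|\nabla F(\vw)\| \geq -\langle\nabla F(\vw),\vw_*\rangle \geq \gamma F(\vw)/2$. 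Using the self-bounding bound together with $\psi \leq \gamma/(140 \eta K M)$, an induction over $k \in \{0,\dots,K-1\}$ shows every local iterate satisfies $\|\vw_{r,k}^m - \vw_r\| = O(\eta K M F(\vw_r))$ with margins remaining positive throughout the round; the drift then controls $\vw_{r+1} - \vw_r + \eta K \nabla F(\vw_r)$, and an $H$-smoothness descent lemma combined with the PL inequality produces $F(\vw_{r+1}) \leq F(\vw_r) - c\,\eta K \gamma^2 F(\vw_r)^2$ for some absolute constant $c$. Dividing by $F(\vw_r) F(\vw_{r+1})$ and telescoping from $r_0$ yields $1/F(\vw_r) \geq c\,\eta K \gamma^2 (r - r_0)$, matching \eqref{eq:stage2} up to constants.

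The main obstacle is the local-drift induction. Heterogeneity means a single client's loss $F_m$ can be as large as $M F$, so local gradients can remain non-trivial even when the global objective is tiny, and a long local run can in principle drive some margin negative and destroy the self-bounding inequality that the rest of the argument depends on. Maintaining positive margins at every intermediate $\vw_{r,k}^m$ together with a drift bound scaling with $M F(\vw_r)$ is precisely what forces the threshold $\psi$ to be the \emph{minimum} of $\gamma/(140 \eta K M)$ (controlling drift relative to the margin) and $1/(2Mn)$ (keeping margins positive across all $M n$ examples). Once this induction is set up correctly, the remaining ingredients---the smoothness descent step, the PL bound, and the reciprocal recursion---are routine.
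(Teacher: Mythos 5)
Your overall architecture is the same as the paper's: locate an iterate $r_0\le\tau$ with loss below a threshold via a margin/potential argument, show that below this threshold each round decreases the loss despite the drift and bias of the aggregated update, and finish with the reciprocal recursion $1/F(\vw_{r+1})\ge 1/F(\vw_r)+c\,\eta\gamma^2K$. You also correctly identify why $\psi$ is the minimum of two quantities, and the PL-type inequality $\|\nabla F(\vw)\|\ge\tfrac{\gamma}{2}F(\vw)$ under positive margins is exactly Lemma \ref{lem:obj_grad_lb}.

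The genuine gap is the descent step. You invoke ``an $H$-smoothness descent lemma,'' but with the global constant $H=\sup_{\vw}\|\nabla^2F(\vw)\|\le 1/4$ the quadratic term is roughly $\tfrac{H}{2}\eta^2K^2\|\nabla F(\vw_r)+\vb_r\|^2$, which overwhelms the first-order decrease $-\eta K\|\nabla F(\vw_r)+\vb_r\|^2$ as soon as $\eta K>2/H=8$ --- precisely the large-stepsize regime the theorem targets (Corollary \ref{cor:final_error} takes $\eta K=\widetilde\Theta(\gamma^3R/M)$). The paper's fix is to replace $H$ by the \emph{loss-dependent} local smoothness: $\|\nabla^2F_m(\vw)\|\le F_m(\vw)$ together with a Gr\"onwall-type bound showing $\|\nabla^2 F_m(\vw')\|\le 7F_m(\vw)$ for $\|\vw'-\vw\|\le1$ (Lemmas \ref{lem:obj_grad_ub}, \ref{lem:hessian_gronwall_ub}, \ref{lem:descent}), so the quadratic coefficient is $O(F(\vw_r))=O(\gamma/(\eta KM))$ and the term $4F(\vw_r)\eta^2K^2\|\cdot\|^2$ is a negligible fraction of $\eta K\|\cdot\|^2$. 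Your self-bounding gradient inequality is the first-order analogue of this, but the second-order counterpart is indispensable; without it, neither the global descent step nor the local-step monotonicity your drift induction relies on goes through once $\eta>2/H$ (even a single local step need not decrease $F_m$ under global smoothness alone). Two smaller points: the paper reaches $r_0$ by tracking $\langle\vw_r,\vw_*\rangle$ against the accumulated gradient potential $G(\vw_s)=\frac{1}{Mn}\sum_{m,i}|\ell'(\langle\vw_s,\vx_i^m\rangle)|$, which for Local GD requires the lower bound $\beta_{r,i}^m\ge1/K$ to relate a round's progress to $G(\vw_r)$, plus the norm bound of Lemma \ref{lem:param_ub}; your Lyapunov sketch would need an analogous device to handle the fact that the round's displacement is built from gradients at the local iterates rather than at $\vw_r$. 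Finally, the paper first gets $G(\vw_{r_0})\le\psi$ and then converts to $F(\vw_{r_0})\le2\psi$ using positivity of the margins, which is where the $1/(2Mn)$ branch enters, consistent with your reading.
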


Note that Theorems \ref{thm:stage1} and \ref{thm:stage2} apply for any choice of the
stepsize $\eta$ and number of local steps $K$. In contrast with the worst-case analysis
which requires that $\eta \leq \mathcal{O} \left( \frac{1}{K} \right)$, ours is the
first result showing that Local GD can converge for logistic regression without any
restrictions on $\eta$ and $K$. The following corollary shows that, by tuning $\eta$ and
$K$, we can achieve an accelerated rate with $R^{-2}$ dependence on $R$, which improves
upon the lower bounds of Local GD for general distributed convex optimization (see Table
\ref{tab:error}).

\begin{corollary} \label{cor:final_error}
Suppose $R \geq \widetilde{\Omega} \left( \max \left( \frac{Mn}{\gamma^2},
\frac{KM}{\gamma^3} \right) \right)$. With $\vw_0 = \vzero$, $\eta \geq 1$, and $\eta K =
\widetilde{\Theta} \left( \frac{\gamma^3 R}{M} \right)$, Local GD satisfies
\begin{equation}
    F(\vw_R) \leq \widetilde{\mathcal{O}} \left( \frac{M}{\gamma^5 R^2} \right).
\end{equation}
\end{corollary}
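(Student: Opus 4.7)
The plan is to invoke Theorem \ref{thm:stage2} at $r = R$: the bound $F(\vw_R) \leq 16/(\eta \gamma^2 K (R-\tau))$, combined with the choice $\eta K = \widetilde{\Theta}(\gamma^3 R/M)$, immediately gives the desired $\widetilde{\mathcal{O}}(M/(\gamma^5 R^2))$ rate, provided that $R - \tau \geq \widetilde{\Omega}(R)$. So the corollary reduces to verifying that, under the stated hypotheses, the transition time $\tau$ is at most a constant fraction of $R$ (up to log factors).

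The first subtask is to identify which branch of the minimum defines $\psi$. Substituting $\eta K = \widetilde{\Theta}(\gamma^3 R/M)$ into $\gamma/(140\eta KM)$ yields $\widetilde{\Theta}(1/(\gamma^2 R))$, and the hypothesis $R \geq \widetilde{\Omega}(Mn/\gamma^2)$ is exactly the statement that this quantity lies below $1/(2Mn)$. Hence $\psi = \gamma/(140\eta KM) = \widetilde{\Theta}(1/(\gamma^2 R))$, and the denominator of $\tau$ becomes $\eta \gamma^2 \psi = \widetilde{\Theta}(\eta/R)$.

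The main obstacle is controlling the numerator of $\tau$, which contains the potentially large term $2\eta$. The key observation is that this $\eta$ cancels the $\eta$ in $\eta \gamma^2 \psi$: with $\vw_0 = \vzero$ and $\eta \geq 1$, the ratio $(2\sqrt{2} + 2\eta)/(\eta \gamma^2 \psi)$ is at most $O(R)$, with a constant inversely proportional to the hidden constant in $\eta K = \widetilde{\Theta}(\gamma^3 R/M)$. The remaining $\log(1 + \sqrt{K}/(\sqrt{\eta}\, \gamma \psi))$ in the numerator has a polynomial-in-$(R, K, M, n, \gamma^{-1})$ argument, so it contributes only another $\widetilde{\mathcal{O}}(R)$ after division by $\eta\gamma^2\psi$. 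By absorbing these logarithmic factors into the hidden constant of $\eta K$, both contributions can be forced below $R/4$, giving $\tau \leq R/2$.

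Finally, the second hypothesis $R \geq \widetilde{\Omega}(KM/\gamma^3)$ is precisely what makes $\eta = (\eta K)/K \geq 1$ realizable with the chosen product, matching the standing assumption $\eta \geq 1$. Plugging $R - \tau \geq R/2$ and $\eta K = \widetilde{\Theta}(\gamma^3 R/M)$ into Theorem \ref{thm:stage2} then yields $F(\vw_R) \leq 32/(\eta \gamma^2 K R) = \widetilde{\mathcal{O}}(M/(\gamma^5 R^2))$, completing the proof.
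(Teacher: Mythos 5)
Your proposal is correct and follows essentially the same route as the paper's proof: bound the transition time $\tau$ by $\widetilde{\mathcal{O}}(R)$ using $\eta \geq 1$ and the condition $R \geq \widetilde{\Omega}(Mn/\gamma^2)$, tune hidden constants so that $R - \tau \geq R/2$, and then substitute $\eta K = \widetilde{\Theta}(\gamma^3 R/M)$ into the bound of Theorem~\ref{thm:stage2}. The only cosmetic difference is that you first identify which branch of the minimum defining $\psi$ is active, whereas the paper bounds $1/(\gamma^2\psi)$ by the maximum of the two branches and then applies the same condition on $R$; both also attribute the role of $R \geq \widetilde{\Omega}(KM/\gamma^3)$ to making $\eta \geq 1$ compatible with the prescribed product $\eta K$.
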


The condition $R \geq \widetilde{\Omega} \left( \max \left( \frac{Mn}{\gamma^2},
\frac{MK}{\gamma^3} \right) \right)$ ensures that $R \geq \tau$, so that training will
actually enter the stable phase and decrease the objective at the rate $1/(\eta \gamma^2
KR)$. A similar condition is used in the analysis of GD with large stepsizes for
single-machine logistic regression \citep{wu2024large}.

Also, note that aside from the condition $\eta \geq 1$, the stepsize $\eta$ and the
communication interval $K$ always appear together as the product $\eta K$. This means
that our guarantee does not distinguish the performance of Local GD as $K$ changes, so
long as the stepsize changes to keep $\eta K$ constant. Therefore, it remains open to
show whether or not Local GD can actually benefit from the use of local steps for this
problem. Indeed, the analysis of GD for single-machine logistic regression
\citep{wu2024large} immediately implies that for our distributed problem, GD
(parallelized over $M$ machines) achieves error $\widetilde{\mathcal{O}}(1/(\gamma^4
R^2))$, which improves upon our guarantee for Local GD in terms of $M$ and $1/\gamma$.
We further discuss this comparison in Section \ref{sec:discussion}.

\subsection{Proof Overview} \label{sec:proofs}
Throughout the analysis, we will denote $b_{r,i}^m = \langle \vw_r, \vx_i^m \rangle$, so
that $F_m(\vw_r) = \frac{1}{n} \sum_{i=1}^n \ell(b_{r,i}^m)$. Similarly, we will denote
$b_{r,i,k}^m = \langle \vw_{r,k}^m, \vx_i^m \rangle$.

The proofs of Theorems \ref{thm:stage1} and \ref{thm:stage2} adapt existing tools
introduced by \cite{wu2024large} and \cite{crawshaw2025local}; our application of these
tools for our setting relies on a comparison between the trajectories of GD and Local GD
by decomposing updates into the contribution from each individual data point $\vx_i^m$.
Specifically, a single GD update starting from $\vw_r$ is
\begin{equation} \label{eq:gd_update_decomp}
    -\eta \nabla F(\vw_r) = \frac{\eta}{Mn} \sum_{m=1}^M \sum_{i=1}^n |\ell'(b_{r,i}^m)| \vx_i^m.
\end{equation}
Denoting
\begin{equation}
    \beta_{r,i}^m = \frac{\frac{1}{K} \sum_{k=0}^{K-1} |\ell'(b_{r,i,k}^m)|}{|\ell'(b_{r,i}^m)|},
\end{equation}
a single round update of Local GD from $\vw_r$ can be rewritten
\begin{align}
    \vw_{r+1} - \vw_r &= -\frac{\eta}{M} \sum_{m=1}^M \sum_{k=0}^{K-1} \nabla F_m(\vw_{r,k}^m) \\
    &= \frac{\eta K}{Mn} \sum_{m=1}^M \sum_{i=1}^n \beta_{r,i}^m |\ell'(b_{r,i}^m)| \vx_i^m. \label{eq:local_gd_update_decomp}
\end{align}
Comparing \Eqref{eq:gd_update_decomp} and \Eqref{eq:local_gd_update_decomp}, the updates
for GD and Local GD can both be represented as linear combinations of the data
$\vx_i^m$, and the two trajectories can be compared by analyzing the coefficients
$\beta_{r,i}^m$. By upper and lower bounding $\beta_{r,i}^m$, we can adapt the split
comparator and gradient potential techniques of \citet{wu2024large} (which were
introduced for GD) to analyze Local GD during the unstable phase and show a transition
to stability.

For the stable phase, we leverage the relationship between the derivatives of the
objective function, namely that
\begin{equation}
    \|\nabla^2 F(\vw)\| \leq F(\vw) \quad \text{and} \quad \|\nabla F(\vw)\| \leq F(\vw),
\end{equation}
to show that a small objective value $F(\vw)$ implies a small local smoothness
$\|\nabla^2 F(\vw')\|$ for $\|\vw' - \vw\| \leq 1$, and this in turn implies monotonic
decrease of the objective. A similar argument was used by \citet{crawshaw2025local}, but
here we use a refined version that allows for any $\eta > 0$, whereas the analysis of
\citet{crawshaw2025local} requires $\eta \leq 1/H$.

Below we state key lemmas to sketch the proofs of each theorem, and full proofs are
deferred to Appendix \ref{app:proofs}.

\paragraph{Unstable Phase}
As previously mentioned, we aim to apply the split comparator technique of
\citet{wu2024large} to analyze Local GD, and we can do so if we upper and lower bound
$\beta_{r,i}^m$. Our lower bound is surprisingly simple:
\begin{equation}
    \beta_{r,i}^m = \frac{\frac{1}{K} \sum_{k=0}^{K-1} |\ell'(b_{r,i,k}^m)|}{|\ell'(b_{r,i}^m)|} \geq \frac{1}{K},
\end{equation}
where the inequality simply ignores all terms of the sum in the numerator, except that
corresponding to $k=0$. While this may appear very loose, it is not hard to show in
special cases that this bound is tight up to logarithmic factors for certain values of
$\vw_r$ (see Lemma \ref{lem:beta_ub}).

We upper bound $\beta_{r,i}^m$ as
\begin{align}
    \beta_{r,i}^m &= \frac{1}{K} \sum_{k=0}^{K-1} \frac{1 + \exp(b_{r,i}^m)}{1 + \exp(b_{r,i,k}^m)} \\
    &\leq 1 + \exp(b_{r,i}^m) = 1 + \exp(\langle \vw_r, \vx_i^m) \rangle \\
    &\leq 1 + \exp(\|\vw_r\|), \label{eq:sketch_beta_ub_inter}
\end{align}
where the last line uses $\|\vx_i^m\| \leq 1$. To bound $\|\vw_r\|$, we apply the
split comparator technique of \citet{wu2024large} to analyze the local trajectories of
each round $\{\vw_{s,k}^m\}_k$, then use this to establish a recursive bound on $\|\vw_s
- \vu\|$ over rounds, where $\vu = \vu_1 + \vu_2$ is a yet unspecified comparator. The
analysis within each round implies that
\begin{align}
    &\frac{\|\vw_{s,K}^m - \vu\|^2}{2 \eta K} + \frac{1}{K} \sum_{k=0}^{K-1} F_m(\vw_{s,k}^m) \leq \nonumber \\
    &\quad \frac{\|\vw_s - \vu\|^2}{2 \eta K} + F_m(\vu_1),
\end{align}
and in particular that
\begin{equation}
    \|\vw_{s,K}^m - \vu\| \leq \|\vw_s - \vu\| + \sqrt{2 \eta K F_m(\vu_1)}.
\end{equation}
Averaging over $m \in [M]$ and recursing over $s \in \{0, \ldots, r-1\}$ implies that
\begin{equation}
    \|\vw_r - \vu\| \leq \|\vw_0 - \vu\| + r \sqrt{2 \eta K F(\vu_1)},
\end{equation}
so
\begin{equation}
    \|\vw_r\| \leq \|\vw_0\| + 2 \|\vu\| + r \sqrt{2 \eta K F(\vu_1)}.
\end{equation}
By choosing $\vu$ to balance the last two terms on the RHS, we arrive at the following
bound.

\begin{lemma} \label{lem:param_ub}
For every $r \geq 0$,
\begin{equation}
    \|\vw_r\| \leq \|\vw_0\| + \frac{\sqrt{2} + \eta + \log(1 + \eta \gamma^2 Kr^2)}{\gamma}.
\end{equation}
\end{lemma}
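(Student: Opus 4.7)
The plan is to apply the split-comparator technique of \citet{wu2024large} separately to each client's local trajectory within a single round, then aggregate the resulting norm bounds across clients and recurse over rounds. Fix a comparator $\vu = \vu_1 + \vu_2$ with $\vu_1, \vu_2$ to be chosen, and consider one client $m$ during round $s$. The standard GD expansion gives
$$\|\vw_{s,k+1}^m - \vu\|^2 = \|\vw_{s,k}^m - \vu\|^2 - 2\eta\langle \nabla F_m(\vw_{s,k}^m), \vw_{s,k}^m - \vu\rangle + \eta^2\|\nabla F_m(\vw_{s,k}^m)\|^2.$$
Convexity controls the inner product against $\vw_{s,k}^m - \vu_1$. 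For the inner product against $-\vu_2$, I would use $\ell'(z) \leq 0$ and $\langle \vx_i^m, \vw_*\rangle \geq \gamma$ to produce a positive term proportional to $\frac{1}{n}\sum_i |\ell'(b_{s,i,k}^m)| \geq \|\nabla F_m(\vw_{s,k}^m)\|$ when $\vu_2 = (\eta/(2\gamma))\vw_*$. Combining this with the self-bounded gradient property $\|\nabla F_m\|^2 \leq F_m$ (which follows from $|\ell'(z)|^2 \leq |\ell'(z)| \leq \ell(z)$ and $\|\vx_i^m\| \leq 1$) together with $\|\nabla F_m\| \leq 1$, the choice $\vu_2 = (\eta/(2\gamma))\vw_*$ is calibrated to absorb the $\eta^2\|\nabla F_m\|^2$ term in full, yielding the clean per-step inequality
$$\|\vw_{s,k+1}^m - \vu\|^2 \leq \|\vw_{s,k}^m - \vu\|^2 - 2\eta(F_m(\vw_{s,k}^m) - F_m(\vu_1)).$$

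Summing over $k = 0, \ldots, K-1$ and dividing by $2\eta K$ recovers the per-round inequality displayed in the proof sketch. Dropping the non-negative average loss on the left, taking square roots via $\sqrt{a+b}\leq\sqrt{a}+\sqrt{b}$, averaging over $m \in [M]$ using Jensen's inequality for the concave square root, and finally invoking convexity of the norm to pass from $\tfrac{1}{M}\sum_m \|\vw_{s,K}^m - \vu\|$ to $\|\vw_{s+1} - \vu\|$ produce the one-round recursion $\|\vw_{s+1} - \vu\| \leq \|\vw_s - \vu\| + \sqrt{2\eta K F(\vu_1)}$. Telescoping over $s \in \{0, \ldots, r-1\}$ and applying the triangle inequality gives $\|\vw_r\| \leq \|\vw_0\| + 2\|\vu_1\| + 2\|\vu_2\| + r\sqrt{2\eta K F(\vu_1)}$.

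It then remains to choose $\vu_1$. I would take $\vu_1 = (a/\gamma)\vw_*$ so that $\|\vu_1\| = a/\gamma$ and $F(\vu_1) \leq \ell(a) \leq e^{-a}$ by the margin assumption. Setting $a = \log(1 + \eta\gamma^2 K r^2)$ then yields $r\sqrt{2\eta K e^{-a}} \leq \sqrt{2}/\gamma$ after a short case split on whether $\eta\gamma^2 K r^2 \geq 1$ or not, and combining with $2\|\vu_2\| = \eta/\gamma$ produces the stated bound (up to a factor of $2$ on the $\log$ term that can be tightened by optimizing $a$ more carefully).

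The main obstacle is the split-comparator step itself: showing that one can pick a $\vu_2$ of size only $\eta/(2\gamma)$ that makes the per-step inequality hold with no residual $\eta^2$ term, for arbitrary $\eta > 0$. This relies simultaneously on the margin-induced lower bound $-\langle \nabla F_m, \vu_2\rangle \geq \gamma\|\vu_2\| \cdot \tfrac{1}{n}\sum_i |\ell'(b_{s,i,k}^m)|$ and the uniform bound $\|\nabla F_m\| \leq 1$ to dominate $\eta^2 \|\nabla F_m\|^2$ by a linear-in-gradient term. Without either ingredient, the argument would collapse back to the standard small-stepsize analysis requiring $\eta \lesssim 1/H$, which is precisely what we are trying to avoid.
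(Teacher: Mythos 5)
Your proposal is correct and follows essentially the same route as the paper's proof: the same split comparator $\vu=\vu_1+\vu_2$ with $\vu_2=(\eta/(2\gamma))\vw_*$ absorbing the $\eta^2\|\nabla F_m\|^2$ term via $\|\nabla F_m\|^2\leq\|\nabla F_m\|\leq\frac{1}{n}\sum_i|\ell'|$, the same per-round recursion obtained by averaging over $k$ and then over $m$ with Jensen, and the same choice $\vu_1=(\log(1+\eta\gamma^2Kr^2)/\gamma)\vw_*$. The factor of $2$ on the $\log$ term that you flag is also present in the paper's own derivation (the bound passes through $2\|\vu\|=2\lambda_1+2\lambda_2$) and is silently dropped in its final display, so your version is, if anything, the more honest accounting of the constant.
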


We can now plug this in to \Eqref{eq:sketch_beta_ub_inter} to upper bound
$\beta_{r,i}^m$. Although the bound for $\beta_{r,i}^m$ is exponential in $\|\vw_r\|$,
Lemma \ref{lem:param_ub} shows that $\|\vw_r\|$ is only logarithmic in $r$, so the
resulting upper bound of $\beta_{r,i}^m$ is only polynomial in $r$.

With upper and lower bounds of $\beta_{r,i}^m$, the split comparator technique can be
used to analyze Local GD similarly as for GD. The full proof can be found in Appendix
\ref{app:unstable_proofs}.

\paragraph{Stable Phase}
Our error bound for the stable phase uses the following modified descent
inequality:
\begin{lemma} \label{lem:descent}
For $\vw, \vw' \in \mathbb{R}^d$, if $\|\vw' - \vw\| \leq 1$, then for every $m \in [M]$,
\begin{align}
    &F_m(\vw') - F_m(\vw) \leq \\
    &\quad F_m(\vw) + \langle \nabla F_m(\vw), \vw' - \vw \rangle + 4 F_m(\vw) \|\vw' - \vw\|^2. \nonumber
\end{align}
\end{lemma}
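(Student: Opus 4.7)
The plan is to apply a second-order Taylor expansion of $F_m$ around $\vw$ and control the Hessian term using the fact that the logistic loss enjoys the structural bounds $\|\nabla F_m(\vu)\| \leq F_m(\vu)$ and $\|\nabla^2 F_m(\vu)\| \leq F_m(\vu)$ at every point $\vu$, both of which are stated informally in the proof overview. These pointwise bounds come from the scalar inequalities $|\ell'(z)| \leq \ell(z)$ and $\ell''(z) \leq \ell(z)$; writing $u = e^{-z} \geq 0$ so that $\ell(z) = \log(1+u)$, $|\ell'(z)| = u/(1+u)$ and $\ell''(z) = u/(1+u)^2$, both scalar inequalities reduce to the elementary fact $\log(1+u) \geq u/(1+u)$. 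Summing over $i$ and using $\|\vx_i^m\| \leq 1$ then transfers them to the functional level.

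Next, I would write the integral-remainder Taylor expansion
\begin{equation*}
F_m(\vw') = F_m(\vw) + \langle \nabla F_m(\vw), \vw' - \vw \rangle + \int_0^1 (1-t)\, \langle \vw'-\vw,\, \nabla^2 F_m(\vw_t)(\vw'-\vw)\rangle\, dt,
\end{equation*}
with $\vw_t := \vw + t(\vw' - \vw)$, and bound the integrand by $F_m(\vw_t)\,\|\vw'-\vw\|^2$. The main step is then to show $F_m(\vw_t) \leq C\,F_m(\vw)$ for a small constant $C$ on $t \in [0,1]$. Setting $g(t) = F_m(\vw_t)$, differentiation gives
\begin{equation*}
|g'(t)| \;\leq\; \|\nabla F_m(\vw_t)\|\,\|\vw'-\vw\| \;\leq\; F_m(\vw_t)\,\|\vw'-\vw\| \;\leq\; g(t),
\end{equation*}
where the last step uses the hypothesis $\|\vw' - \vw\| \leq 1$. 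Gronwall's inequality yields $g(t) \leq g(0)\, e^{t} \leq e\, F_m(\vw)$ for $t \in [0,1]$.

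Plugging this into the Taylor remainder and using $\int_0^1 (1-t)\, dt = 1/2$, the quadratic term is at most $\tfrac{e}{2}\, F_m(\vw)\, \|\vw' - \vw\|^2$, which is comfortably below $4 F_m(\vw)\, \|\vw'-\vw\|^2$; rearranging gives the claim (with slack to spare, which the authors presumably budget for in order to write a clean constant).

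The most delicate part is the Gronwall step: without the assumption $\|\vw' - \vw\| \leq 1$, the factor $\|\vw' - \vw\|$ in the differential inequality $|g'(t)| \leq \|\vw'-\vw\|\, g(t)$ would propagate into the exponent, and $F_m$ along the segment could blow up. The logistic-loss identity $\ell''(z) \leq \ell(z)$, which is the real source of the self-bounding Hessian, is what makes the whole argument go through; everything else is routine Taylor/Gronwall bookkeeping.
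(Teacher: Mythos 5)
Your proof is correct, and it follows the same skeleton as the paper's (second-order Taylor expansion with integral remainder, then a self-bounding estimate on the Hessian along the segment), but the key intermediate step is handled by a genuinely different and more self-contained route. The paper invokes an imported inequality (its Lemma on the Hessian at a displaced point, Lemma 1 of \citet{crawshaw2025local}) which, specialized to $\|\vw'-\vw\|\leq 1$, gives $\|\nabla^2 F_m(\vw_t)\| \leq 7F_m(\vw)$ and hence the constant $7/2$ on the quadratic term. You instead combine the pointwise bounds $\|\nabla^2 F_m(\vu)\|\leq F_m(\vu)$ and $\|\nabla F_m(\vu)\|\leq F_m(\vu)$ with a Gronwall argument on $g(t)=F_m(\vw+t(\vw'-\vw))$ to get $F_m(\vw_t)\leq e\,F_m(\vw)$, which yields the sharper constant $e/2\approx 1.36$ and keeps the whole argument elementary (your scalar verifications $|\ell'(z)|\leq \ell(z)$ and $\ell''(z)\leq \ell(z)$ via $\log(1+u)\geq u/(1+u)$ are exactly right). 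Both routes rely on the same hypothesis $\|\vw'-\vw\|\leq 1$ for the same reason — to keep the displacement factor from entering the exponent — so nothing is lost; your version simply buys a cleaner constant and independence from the external lemma. One cosmetic remark: the main-text statement of the lemma has a stray $-F_m(\vw)$ on the left-hand side (making it weaker than the appendix restatement $F_m(\vw')\leq F_m(\vw)+\langle\nabla F_m(\vw),\vw'-\vw\rangle+4F_m(\vw)\|\vw'-\vw\|^2$); you prove the stronger appendix form, which is the one actually used downstream.
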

The above descent inequality is proven by using the facts that $\|\nabla^2 F_m(\vw)\|
\leq F_m(\vw)$ (Lemma \ref{lem:obj_grad_ub}), and $\|\vw' - \vw\| \leq \mathcal{O}(1)$
implies that $\|\nabla^2 F_m(\vw')\| \leq \mathcal{O}(\|\nabla ^2 F_m(\vw)\|)$ (Lemma
\ref{lem:hessian_gronwall_ub}). This descent inequality captures a desirable property of
the logistic loss: the local smoothness constant decreases with the objective value, so
that large stepsizes can yield monotonic objective decrease as long as the objective is
below some threshold.

To use this lemma to bound the error of Local GD, we need to do three things: (1) show
that $\|\vw_{r+1} - \vw_r\| \leq 1$ when $F(\vw_r)$ is below some threshold; (2) show
that the bias in the update direction $\vw_{r+1} - \vw_r$ compared to $-\eta K \nabla
F(\vw_r)$ is negligible when $F(\vw_r)$ is below some threshold; (3) show that
$F(\vw_r)$ becomes smaller than our desired threshold within $\tau$ rounds.

First, to show that $\|\vw_{r+1} - \vw_r\| \leq 1$ based on the magnitude of $F(\vw_r)$,
notice
\begin{align}
    \|\vw_{r+1} - \vw_r\| &= \eta \left\| \frac{1}{M} \sum_{m=1}^M \sum_{k=0}^{K-1} \nabla F_m(\vw_{r,k}^m) \right\| \\
    &\leq \frac{\eta}{M} \sum_{m=1}^M \sum_{k=0}^{K-1} \|\nabla F_m(\vw_{r,k}^m)\|. \label{eq:sketch_movement_bound}
\end{align}
We know $\|\nabla F_m(\vw_{r,k}^m)\| \leq F_m(\vw_{r,k}^m)$ (Lemma
\ref{lem:obj_grad_ub}), and if we knew that local updates monotonically decrease the
local loss, we further have $F_m(\vw_{r,k}^m) \leq F_m(\vw_r)$. Combined with
\Eqref{eq:sketch_movement_bound}, this would yield
\begin{align}
    \|\vw_{r+1} - \vw_r\| \leq \eta K F(\vw_r).
\end{align}
In fact, we can use Lemma \ref{lem:descent} to show that local updates monotonically
decrease the local objective, that is, $F_m(\vw_{r,k+1}^m) \leq F_m(\vw_{r,k}^m)$, whenever
$F_m(\vw_{r,k}^m) \leq 1/(4 \eta)$. This shows that local objectives monotonically decrease
across local steps (Lemma \ref{lem:local_stability}), and this in turn implies that
$\|\vw_{r,k}^m - \vw_r\| \leq 1$ (Lemma \ref{lem:bounded_movement}).

\begin{lemma} \label{lem:local_stability}
If $F(\vw_r) \leq 1/(4 \eta M)$ for some $r \geq 0$, then $F_m(\vw_{r,k}^m)$ is decreasing in $k$ for every $m \in [M]$.
\end{lemma}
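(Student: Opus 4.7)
The plan is to prove this by induction on $k$, simultaneously establishing $F_m(\vw_{r,k+1}^m) \leq F_m(\vw_{r,k}^m)$ and the stronger invariant $F_m(\vw_{r,k}^m) \leq 1/(4\eta)$, the latter being needed to re-invoke the descent inequality at each subsequent step. For the base case, I would use a global-to-local transfer: since $F = \frac{1}{M} \sum_{m'=1}^M F_{m'}$ and each $F_{m'} \geq 0$, we have
\begin{equation*}
F_m(\vw_{r,0}^m) \;=\; F_m(\vw_r) \;\leq\; M \cdot F(\vw_r) \;\leq\; \frac{1}{4\eta},
\end{equation*}
which is precisely why the hypothesis carries the factor of $M$ in the threshold.

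For the inductive step, assume $F_m(\vw_{r,k}^m) \leq 1/(4\eta)$. Using the self-bounding property $\|\nabla F_m(\vw)\| \leq F_m(\vw)$ (Lemma \ref{lem:obj_grad_ub}), the step length is
\begin{equation*}
\|\vw_{r,k+1}^m - \vw_{r,k}^m\| \;=\; \eta \|\nabla F_m(\vw_{r,k}^m)\| \;\leq\; \eta F_m(\vw_{r,k}^m) \;\leq\; \frac{1}{4} \;\leq\; 1,
\end{equation*}
so the movement fits inside the radius required by Lemma \ref{lem:descent}. Applying that lemma at $\vw = \vw_{r,k}^m$ and $\vw' = \vw_{r,k}^m - \eta \nabla F_m(\vw_{r,k}^m)$, then simplifying, yields
\begin{align*}
F_m(\vw_{r,k+1}^m) &\leq F_m(\vw_{r,k}^m) - \eta \|\nabla F_m(\vw_{r,k}^m)\|^2 \bigl(1 - 4\eta F_m(\vw_{r,k}^m)\bigr) \\
&\leq F_m(\vw_{r,k}^m),
\end{align*}
where the last inequality uses $4\eta F_m(\vw_{r,k}^m) \leq 1$ from the inductive hypothesis. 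This both proves the monotonicity step for $k+1$ and preserves $F_m(\vw_{r,k+1}^m) \leq 1/(4\eta)$, so the induction continues for all $k < K$.

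I do not expect any real obstacle; the argument is a clean induction chaining the modified descent inequality with the self-bounding property. The only points to be careful about are (i) the global-to-local reduction via the factor $M$, which dictates the exact form of the hypothesis $F(\vw_r) \leq 1/(4\eta M)$, and (ii) the fact that monotonicity is precisely what recycles the invariant $F_m(\vw_{r,k}^m) \leq 1/(4\eta)$ across local iterations, so that Lemma \ref{lem:descent} remains applicable throughout the $K$ local steps even for arbitrarily large $\eta$.
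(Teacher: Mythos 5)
Your proof is correct and follows essentially the same route as the paper's: an induction on $k$ maintaining the invariant $F_m(\vw_{r,k}^m) \leq 1/(4\eta)$ (the paper phrases it as $F_m(\vw_{r,k}^m) \leq F_m(\vw_r)$ combined with $F_m(\vw_r) \leq M F(\vw_r) \leq 1/(4\eta)$), bounding the step length via $\|\nabla F_m\| \leq F_m$ so that Lemma \ref{lem:descent} applies, and reading off monotonicity from the factor $1 - 4\eta F_m(\vw_{r,k}^m) \geq 0$. No gaps.
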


\begin{lemma} \label{lem:bounded_movement}
If $F(\vw_r) \leq 1/(\eta KM)$ for some $r \geq 0$, then $\|\vw_{r,k}^m -
\vw_r\| \leq 1$ for every $m \in [M], k \in [K]$.
\end{lemma}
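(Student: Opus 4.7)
The plan is to expand the local iterate as a telescoping sum of gradient steps, bound each gradient by the corresponding local objective value using $\|\nabla F_m(\vw)\|\le F_m(\vw)$ (Lemma \ref{lem:obj_grad_ub}), and then control those objective values uniformly in terms of $F_m(\vw_r)$ by appealing to the in-round monotonicity from Lemma \ref{lem:local_stability}.

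First I would unroll the local update rule to write $\vw_{r,k}^m - \vw_r = -\eta \sum_{j=0}^{k-1} \nabla F_m(\vw_{r,j}^m)$, so that the triangle inequality combined with $\|\nabla F_m(\vw)\|\le F_m(\vw)$ yields
\begin{equation*}
\|\vw_{r,k}^m - \vw_r\| \;\le\; \eta \sum_{j=0}^{k-1} F_m(\vw_{r,j}^m).
\end{equation*}
Next, the hypothesis $F(\vw_r)\le 1/(\eta K M)$ implies $F(\vw_r)\le 1/(4\eta M)$ (in the relevant regime $K\ge 4$), so Lemma \ref{lem:local_stability} applies and the sequence $\{F_m(\vw_{r,j}^m)\}_j$ is nonincreasing in $j$; in particular each $F_m(\vw_{r,j}^m) \le F_m(\vw_r)$. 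Combining with the elementary bound $F_m(\vw_r)\le M F(\vw_r)$, which holds because each $F_m$ is nonnegative and $F$ is their average, we obtain
\begin{equation*}
\|\vw_{r,k}^m - \vw_r\| \;\le\; \eta k\, F_m(\vw_r) \;\le\; \eta K M\, F(\vw_r) \;\le\; 1,
\end{equation*}
where the final inequality is exactly the hypothesis. This yields the desired claim.

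The main step requiring care is the invocation of Lemma \ref{lem:local_stability}, whose premise is $F(\vw_r)\le 1/(4\eta M)$, which is strictly implied by our hypothesis only when $K\ge 4$. The natural fix for smaller $K$ is a joint induction on $k$ that simultaneously maintains $\|\vw_{r,j}^m - \vw_r\|\le 1$ and $F_m(\vw_{r,j+1}^m)\le F_m(\vw_{r,j}^m)$: the movement bound keeps $\vw_{r,j+1}^m$ within the domain of validity of the descent inequality (Lemma \ref{lem:descent}), which in turn produces the one-step monotonicity needed to close the inductive step. I expect this coupling between the movement estimate and the one-step descent to be the only real technical subtlety in the proof; once it is in place, the summation above delivers the final bound essentially automatically.
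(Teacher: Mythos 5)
Your proof is correct and is essentially identical to the paper's: unroll the local updates, apply the triangle inequality with $\|\nabla F_m(\vw)\|\le F_m(\vw)$, invoke the in-round monotonicity of $F_m(\vw_{r,k}^m)$ from Lemma \ref{lem:local_stability}, and finish with $F_m(\vw_r)\le M F(\vw_r)$. Your observation that the hypothesis $F(\vw_r)\le 1/(\eta KM)$ only implies the premise $F(\vw_r)\le 1/(4\eta M)$ of Lemma \ref{lem:local_stability} when $K\ge 4$ is a legitimate point the paper silently glosses over; it is harmless where the lemma is actually used (there $F(\vw_r)\le \gamma/(70\eta KM)\le 1/(4\eta M)$ for all $K$), and your proposed joint induction is a valid way to patch the standalone statement for small $K$.
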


By choosing $k=K$ and averaging over $m \in [M]$, Lemma \ref{lem:bounded_movement}
implies that $\|\vw_{r+1} - \vw_r\| \leq 1$.

Next, to handle the bias of the update direction, we rewrite the update as
\begin{equation}
    \vw_{r+1} - \vw_r = -\eta K (\nabla F(\vw_r) + \vb_r),
\end{equation}
where
\begin{equation}
    \vb_r = \frac{1}{MK} \sum_{m=1}^M \sum_{k=0}^{K-1} (\nabla F_m(\vw_{r,k}^m) - \nabla F_m(\vw_r)).
\end{equation}
We can bound the magnitude of the bias as follows:
\begin{equation}
    \|\vb_r\| \leq \frac{1}{MK} \sum_{m=1}^M \sum_{k=0}^{K-1} \|\nabla F_m(\vw_{r,k}^m) - \nabla F_m(\vw_r)\|,
\end{equation}
and denoting $C = \{(1-t) \vw_r + t \vw_{r,k}^m \;|\; t \in [0, 1]\},$
\begin{align}
    &\|\nabla F_m(\vw_{r,k}^m) - \nabla F_m(\vw_r)\| \\
    &\quad \leq \left( \max_{\vw \in C} \|\nabla^2 F_m(\vw)\| \right) \|\vw_{r,k}^m - \vw_r\| \\
    &\quad \leq \left( \max_{\vw \in C} F_m(\vw) \right) \|\vw_{r,k}^m - \vw_r\| \\
    &\quad \leq \max \left( F_m(\vw_r), F_m(\vw_{r,k}^m) \right) \|\vw_{r,k}^m - \vw_r\|, \label{eq:sketch_hetero_inter}
\end{align}
where the last two inequalities use $\|\nabla^2 F_m(\vw)\| \leq F_m(\vw)$ (Lemma
\ref{lem:obj_grad_ub}) and convexity of $F_m$, respectively. Using Lemmas
\ref{lem:local_stability} and \ref{lem:bounded_movement}, we can already bound the two
terms of \Eqref{eq:sketch_hetero_inter} when $F_m(\vw_r)$ is small, which gives the
following.

\begin{lemma} \label{lem:update_bias_ub}
If $F(\vw_r) \leq \gamma/(70 \eta K M)$, then $\|\vb_r\| \leq \frac{1}{5} \|\nabla F(\vw_r)\|$.
\end{lemma}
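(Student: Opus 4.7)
The plan is to bound $\|\vb_r\|$ term by term via the triangle inequality
\begin{equation*}
\|\vb_r\| \leq \frac{1}{MK}\sum_{m=1}^M\sum_{k=0}^{K-1}\|\nabla F_m(\vw_{r,k}^m) - \nabla F_m(\vw_r)\|,
\end{equation*}
and then convert the result into a bound in terms of $\|\nabla F(\vw_r)\|$ using a margin-based lower bound. First, since $\gamma\leq 1$ and $K,M\geq 1$, the hypothesis $F(\vw_r)\leq\gamma/(70\eta KM)$ is strong enough to trigger both Lemma \ref{lem:local_stability} (which needs $F(\vw_r)\leq 1/(4\eta M)$) and Lemma \ref{lem:bounded_movement} (which needs $F(\vw_r)\leq 1/(\eta KM)$). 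Local monotonicity therefore gives $F_m(\vw_{r,k}^m)\leq F_m(\vw_r)$; combined with convexity, $F_m(\vw)\leq F_m(\vw_r)$ on the segment $C$, and the chain in \Eqref{eq:sketch_hetero_inter} (which uses $\|\nabla^2 F_m\|\leq F_m$ from Lemma \ref{lem:obj_grad_ub}) collapses to
\begin{equation*}
\|\nabla F_m(\vw_{r,k}^m) - \nabla F_m(\vw_r)\| \leq F_m(\vw_r)\,\|\vw_{r,k}^m - \vw_r\|.
\end{equation*}

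Next I would sharpen the movement bound beyond the $\leq 1$ from Lemma \ref{lem:bounded_movement}. Local stability plus $\|\nabla F_m\|\leq F_m$ gives $\|\vw_{r,k}^m-\vw_r\|\leq \eta\sum_{j<k}F_m(\vw_{r,j}^m)\leq \eta k F_m(\vw_r)$. Substituting, using $\sum_{k=0}^{K-1}k=K(K-1)/2$, and bounding $\sum_m F_m(\vw_r)^2\leq \bigl(\sum_m F_m(\vw_r)\bigr)^2 = M^2 F(\vw_r)^2$ (valid since each $F_m\geq 0$), I obtain
\begin{equation*}
\|\vb_r\| \leq \frac{\eta}{MK}\sum_{m,k} k\, F_m(\vw_r)^2 \leq \frac{\eta K M\, F(\vw_r)^2}{2}.
\end{equation*}
Applying the hypothesis once more yields $\|\vb_r\|\leq \gamma F(\vw_r)/140$.

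To finish, I would invoke a margin-based gradient lower bound. Projecting the decomposition \Eqref{eq:gd_update_decomp} onto $\vw_*$ and using $\langle \vw_*,\vx_i^m\rangle\geq \gamma$ (WLOG $y_i^m=1$) gives $\|\nabla F(\vw_r)\|\geq \gamma\cdot \frac{1}{Mn}\sum_{m,i}|\ell'(b_{r,i}^m)|$. Under the smallness of $F(\vw_r)$ (and the $\eta\geq 1$, $\eta K\gtrsim n\gamma$ regime that prevails in Corollary \ref{cor:final_error}), each margin value $b_{r,i}^m$ is positive enough that $|\ell'(b_{r,i}^m)|\geq \ell(b_{r,i}^m)/c$ for a modest universal constant $c$, yielding $\|\nabla F(\vw_r)\|\geq (\gamma/c) F(\vw_r)$. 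Picking $c\leq 28$ then combines with the previous paragraph to give $\|\vb_r\|\leq \|\nabla F(\vw_r)\|/5$.

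The main obstacle is reconciling constants across the two stages. The specific $70$ in the hypothesis is calibrated precisely so that the $1/140$ coming from the bias bound absorbs the constant in the margin-based gradient lower bound together with the factor $5$ in the conclusion. Verifying this cleanly requires careful tracking of the ratio $|\ell'|/\ell$ for logistic loss in the small-$F$ regime, and acknowledges the somewhat loose step $\sum_m F_m^2\leq M^2 F(\vw_r)^2$ (tight only when one client dominates); any refinement there buys room in the other constants.
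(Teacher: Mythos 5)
Your proof follows essentially the same route as the paper's: triangle inequality on $\vb_r$, local monotonicity (Lemma \ref{lem:local_stability}) plus $\|\nabla F_m\|\le F_m$ (Lemma \ref{lem:obj_grad_ub}) to get $\|\vb_r\|\lesssim \eta K M F(\vw_r)^2$ via $\sum_m F_m^2\le(\sum_m F_m)^2$, then the hypothesis and a margin-based gradient lower bound to close. Your constants are in fact slightly sharper (you use the clean $\|\nabla^2 F_m\|\le F_m$ chain from the sketch rather than the factor-$7$ Lemma \ref{lem:grad_diff_ub}, and you keep the $k$ in the inner sum). The one soft spot is the final step: you appeal to ``the $\eta\ge 1$, $\eta K\gtrsim n\gamma$ regime of Corollary \ref{cor:final_error}'' to justify $|\ell'(b_{r,i}^m)|\ge \ell(b_{r,i}^m)/c$, but those are not hypotheses of this lemma, so as written the argument is not self-contained. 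What you are re-deriving is exactly Lemma \ref{lem:obj_grad_lb} ($\|\nabla F(\vw)\|\ge\tfrac{\gamma}{2}F(\vw)$, i.e.\ $c=2$), whose only requirement is $\langle\vw_r,\vx_i^m\rangle\ge 0$ for all $i,m$; the paper simply cites that lemma here, and the nonnegativity is supplied by Lemma \ref{lem:transition_time} at the point where this lemma is invoked. Replacing your regime-dependent hedge with that citation (or with the elementary facts $\ell(b)\le e^{-b}$ and $|\ell'(b)|\ge e^{-b}/2$ for $b\ge 0$) makes your argument match the paper's, with room to spare in the constants.
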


Third, we must show that $F(\vw_r)$ will be sufficiently small for some $r \leq \tau$ in
order to satisfy the conditions of Lemmas \ref{lem:local_stability},
\ref{lem:bounded_movement}, and \ref{lem:update_bias_ub}. To do this, we adapt the
gradient potential argument of \citet{wu2024large}, as previously mentioned, by lower
bounding $\beta_{r,i}^m$. We use the same bound as in the proof of Theorem
\ref{thm:stage1}: $\beta_{r,i}^m \geq 1/K$. This allows us to relate the gradient
potential of Local GD to that of GD, and combining this with Lemma \ref{lem:param_ub}
shows that $F(\vw_r)$ is sufficiently small to enable stable descent after $\tau$
rounds.

\begin{lemma} \label{lem:transition_time}
There exists some $r \leq \tau$ such that $F(\vw_r) \leq \frac{\gamma}{70 \eta K M}$.
\end{lemma}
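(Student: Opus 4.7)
The plan is to prove Lemma~\ref{lem:transition_time} by contradiction: suppose $F(\vw_s) > \gamma/(70 \eta K M)$ for every $s \in \{0, \ldots, r-1\}$ and deduce that $r \leq \tau$. The key quantity I will track is $\langle \vw_r, \vw_* \rangle$, which must grow by at least $\eta \gamma \psi$ per round under the assumption, while Lemma~\ref{lem:param_ub} bounds $\|\vw_r\|$ only poly-logarithmically in $r$. Since $\langle \vw_r, \vw_* \rangle \leq \|\vw_r\|$ (as $\|\vw_*\| = 1$), these two bounds are mutually compatible only when $r$ is small.

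The first key step is a per-round progress bound in the $\vw_*$ direction. Starting from the decomposition in \Eqref{eq:local_gd_update_decomp}, the margin inequality $\langle \vx_i^m, \vw_* \rangle \geq \gamma$, and the lower bound $\beta_{s,i}^m \geq 1/K$ (the same one used in the proof of Theorem~\ref{thm:stage1}) yield
\begin{equation}
    \langle \vw_{s+1} - \vw_s, \vw_* \rangle \geq \frac{\eta \gamma}{Mn} \sum_{m=1}^M \sum_{i=1}^n |\ell'(b_{s,i}^m)|.
\end{equation}
To bound the right-hand side below by $\eta \gamma \psi$, I split on the signs of the $b_{s,i}^m$. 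If every $b_{s,i}^m \geq 0$, then $|\ell'(z)| \geq \ell(z)/2$ on $z \geq 0$ (the ratio $|\ell'|/\ell$ attains its minimum $1/(2\log 2) > 1/2$ at $z = 0$), so $\frac{1}{Mn}\sum_{m,i}|\ell'(b_{s,i}^m)| \geq F(\vw_s)/2 > \gamma/(140 \eta KM) \geq \psi$, using $\psi \leq \gamma/(140 \eta KM)$ from its definition. Otherwise some $b_{s,i^\ast}^{m^\ast} < 0$ contributes $|\ell'(b_{s,i^\ast}^{m^\ast})| \geq 1/2$, giving $\frac{1}{Mn}\sum_{m,i}|\ell'(b_{s,i}^m)| \geq 1/(2Mn) \geq \psi$ since also $\psi \leq 1/(2Mn)$. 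Either way, $\langle \vw_{s+1} - \vw_s, \vw_* \rangle \geq \eta \gamma \psi$.

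Telescoping over $s = 0, \ldots, r-1$ and applying Cauchy--Schwarz together with Lemma~\ref{lem:param_ub} gives
\begin{equation}
    \eta \gamma \psi r \leq \langle \vw_r - \vw_0, \vw_* \rangle \leq 2 \|\vw_0\| + \frac{\sqrt{2} + \eta + \log(1 + \eta \gamma^2 K r^2)}{\gamma}.
\end{equation}
Multiplying through by $\gamma$ yields an implicit bound $\eta \gamma^2 \psi r \leq 2\gamma\|\vw_0\| + \sqrt{2} + \eta + \log(1 + \eta\gamma^2 K r^2)$ that has the same structural shape as the definition of $\tau$.

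The hard part is resolving this implicit inequality, where $r$ appears both on the left and inside the logarithm on the right. I would handle it with the standard self-bootstrapping trick: first extract a crude bound $r \leq C_0/(\eta\gamma^2\psi)$ where $C_0$ depends only logarithmically on the parameters, then substitute this into $\log(1+\eta\gamma^2 K r^2)$ and use $\log(1+x^2) \leq 2 \log(1+x)$ to replace the $r$-dependent log by $\log(1 + \sqrt{K}/(\sqrt{\eta}\gamma\psi))$ up to doubly-logarithmic slack. Absorbing that slack by doubling the constants in the numerator produces exactly the stated form of $\tau$ with coefficients $4\gamma\|\vw_0\| + 2\sqrt{2} + 2\eta + \log(1+\sqrt{K}/(\sqrt{\eta}\gamma\psi))$, contradicting the assumption that $F(\vw_s)$ stays above the threshold for all $s \leq \tau$.
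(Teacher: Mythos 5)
Your proposal is correct and follows essentially the same route as the paper: the per-round progress bound $\langle \vw_{s+1}-\vw_s,\vw_*\rangle \geq \eta\gamma\cdot\frac{1}{Mn}\sum_{m,i}|\ell'(b_{s,i}^m)|$ via $\beta_{s,i}^m \geq 1/K$ and the margin, the telescoping against Lemma~\ref{lem:param_ub}, and the resolution of the same implicit logarithmic inequality (the paper packages that last step as Lemma~\ref{lem:linear_log_ineq}). The only cosmetic difference is that you argue by contradiction with a case split on the signs of the $b_{s,i}^m$ to lower-bound the gradient potential by $\psi$, whereas the paper directly extracts a round $r_0$ with $G(\vw_{r_0})\leq\psi$ and then upgrades small $G$ to small $F$ using $G \leq 1/(2Mn)\Rightarrow$ all points correctly classified $\Rightarrow F \leq 2G$ --- the contrapositive of your case analysis.
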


Finally, to prove Theorem \ref{thm:stage2}, we can apply Lemma \ref{lem:descent} for all
$r \geq \tau$. Applying Lemma \ref{lem:update_bias_ub} to control the bias of the update
direction, we obtain
\begin{equation}
    F(\vw_{r+1}) - F(\vw_r) \leq -\frac{1}{4} \eta K \|\nabla F(\vw_r)\|^2.
\end{equation}
Using $\|\nabla F(\vw_r)\| \geq \frac{\gamma}{2} F(\vw_r)$ (Lemma
\ref{lem:obj_grad_ub}), this leads to a recursion over $F(\vw_r)$, and unrolling back to
round $\tau$ gives exactly \Eqref{eq:stage2} from Theorem \ref{thm:stage2}. The full
proof is given in Appendix \ref{app:stable_proofs}.

Corollary \ref{cor:final_error}, which gives our result stated in Table \ref{tab:error},
is proved in Appendix \ref{app:corollary_proof}.

\subsection{Comparison to Single-Machine Case} When $K=1$ or $M=1$, the Local GD
algorithm reduces to GD. However, our convergence rate of $M/(\gamma^5 R^2)$ does not
exactly recover the $1/(\gamma^4 R^2)$ rate of \citet{wu2024large} in terms of the
dataset's margin $\gamma$. Here we provide some technical details on the origin of this
issue and whether it can be removed.

The issue of our $\gamma$ dependence stems from bounding the bias term $\lVert \vb_r
\rVert$ in Lemma \ref{lem:update_bias_ub}. $\vb_r$ is the difference between the update
direction for a round compared to the global gradient at the beginning of that round.
Notice that other conditions for entering the stable phase (Lemma
\ref{lem:local_stability}, Lemma \ref{lem:bounded_movement}) only require $F(\vw_r) \leq
O(1/(\eta KM))$, whereas Lemma \ref{lem:update_bias_ub} requires $F(\vw_r) \leq
O(\gamma/(\eta KM))$. This additional factor of $\gamma$ needed to bound $\lVert \vb_r
\rVert$ creates the worse dependence on $\gamma$ compared with the single-machine case.
Note that the gradient bias results from taking multiple local steps before averaging,
so it does not appear when $K=1$ or $M=1$.

Technically, the requirement $F(\vw_r) \leq O(\gamma/(\eta KM))$ might be weakened, but
with a fine-grained analysis of the Local GD trajectory. First, note that the
requirement on $F(\vw_r)$ is used in \Eqref{eq:update_bias_ub_inter} of Lemma
\ref{lem:app_update_bias_ub}, for the inequality marked $(iv)$. The need for the factor
of $\gamma$ arises from the next inequality (marked $(v)$), where we apply $F(\vw) \leq
2 \lVert \nabla F(\vw) \rVert / \gamma$ (Lemma \ref{lem:obj_grad_lb}). The additional
factor of $\gamma$ is needed to cancel out the $1/\gamma$ from Lemma
\ref{lem:obj_grad_lb}. Now, if we had a stronger bound in Lemma \ref{lem:obj_grad_lb}
--- say $F(\vw) \leq \lVert \nabla F(\vw) \rVert$ --- then we could remove the extra
$\gamma$ factor. The bound $F(\vw) \leq \lVert \nabla F(\vw) \rVert$ does not hold for
all $\vw$, but it does hold for some $\vw$, namely in the case that $\vw = t \vw_*$,
where $t$ is a large scalar. So we could possibly improve the gamma dependence if we
knew that Local GD converges to the max-margin solution, however, this kind of implicit
bias of Local GD with large $\eta$ or $K$ is not known; even in the single-machine case
the implicit bias of GD for logistic regression is unknown when the step size scales
linearly with the number of iterations \citep{wu2024large}. We consider this implicit
bias analysis as an important direction of future research.

\begin{figure*}[ht]
\vskip 0.2in
\begin{center}
\begin{subfigure}[b]{0.49\textwidth}
    \includegraphics[width=\columnwidth]{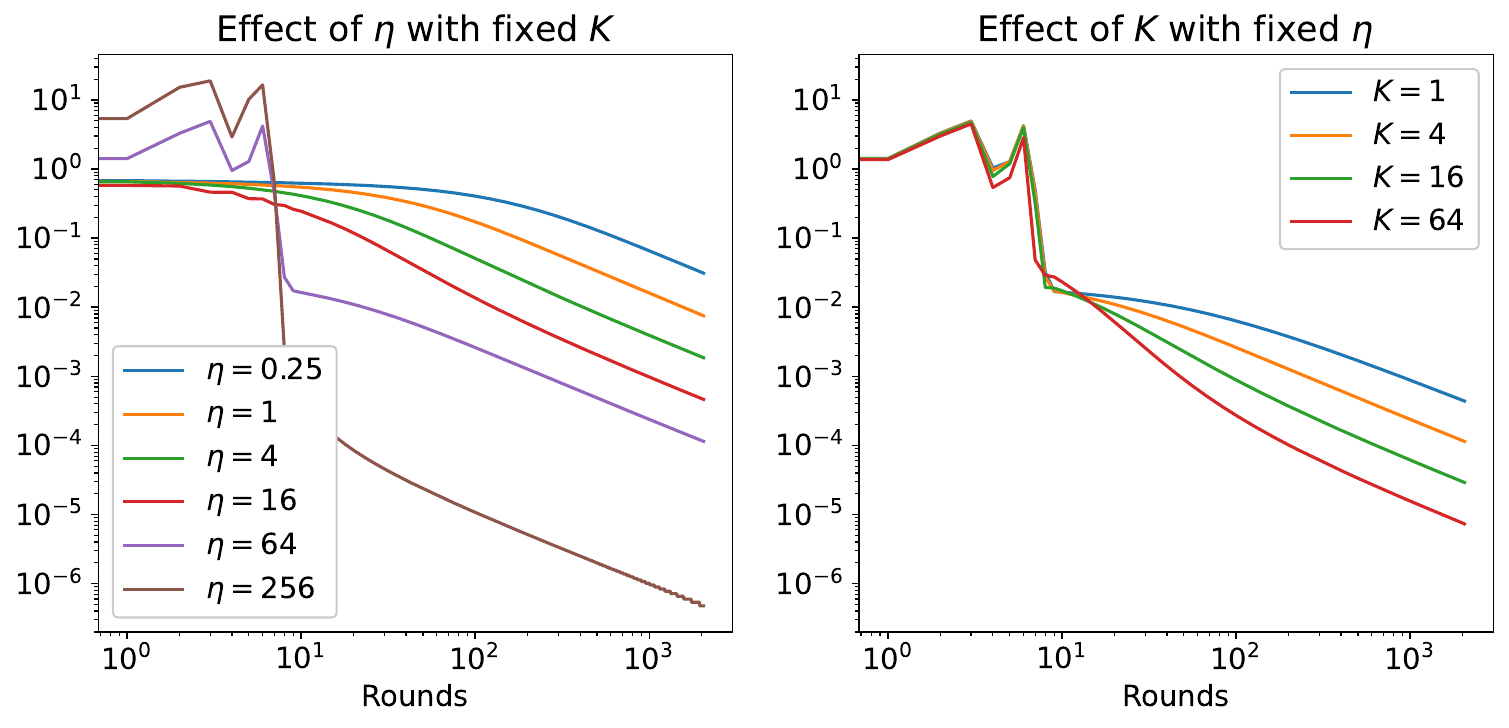}
    \caption{Synthetic data. Left: $K = 4$. Right: $\eta = 64$.}
\end{subfigure}
\begin{subfigure}[b]{0.49\textwidth}
    \includegraphics[width=\columnwidth]{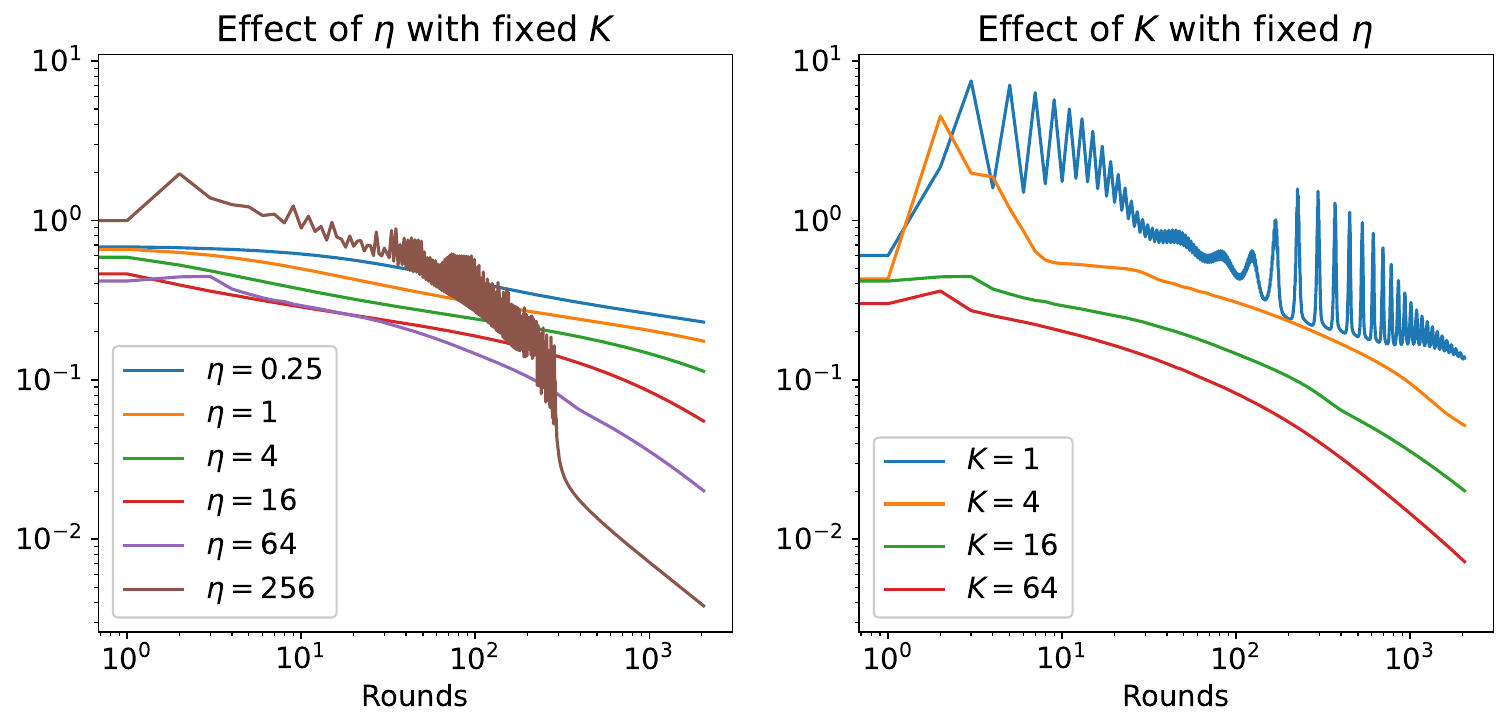}
    \caption{MNIST data. Left: $K = 16$. Right: $\eta = 64$.}
\end{subfigure}
\caption{Objective gap when varying one of $\eta, K$ and keeping the other fixed. In
general, Local GD converges faster when $\eta$ and $K$ are larger, despite the initial
instability in early rounds.}
\label{fig:train_comparison}
\end{center}
\vskip -0.2in
\end{figure*}

\begin{figure}[ht]
\vskip 0.2in
\begin{center}
\centerline{
\includegraphics[width=0.49\columnwidth]{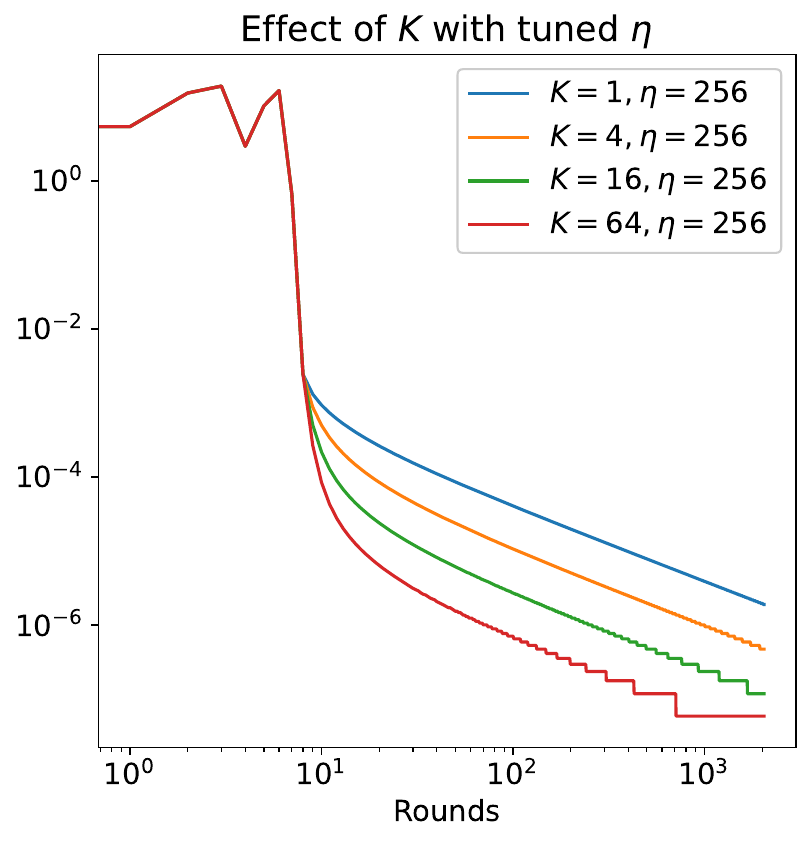}
\includegraphics[width=0.49\columnwidth]{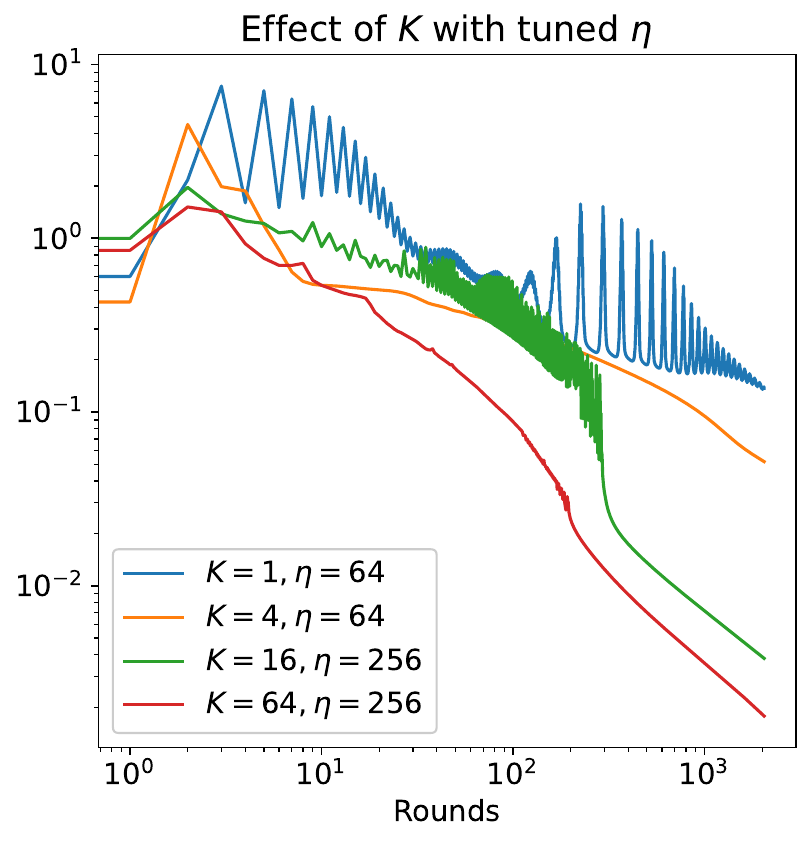}
}
\caption{Objective gap for different values of $K$ with tuned $\eta$. Left: Synthetic
data. Right: MNIST data.}
\label{fig:tune_eta}
\end{center}
\vskip -0.2in
\end{figure}

\begin{figure}[ht]
\vskip 0.2in
\begin{center}
\centerline{
\includegraphics[width=0.49\columnwidth]{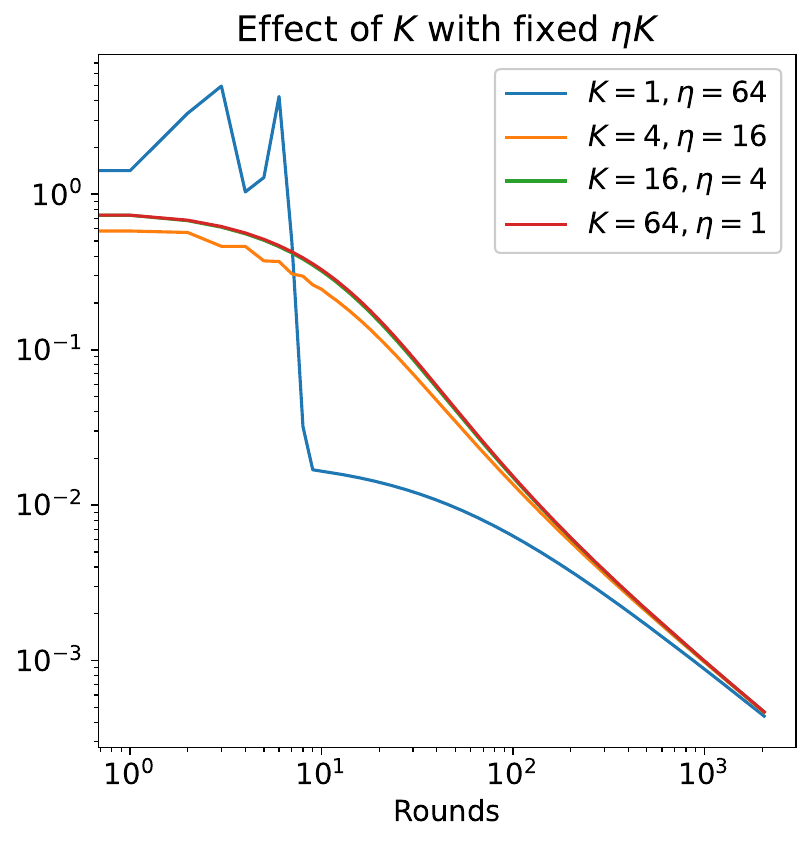}
\includegraphics[width=0.49\columnwidth]{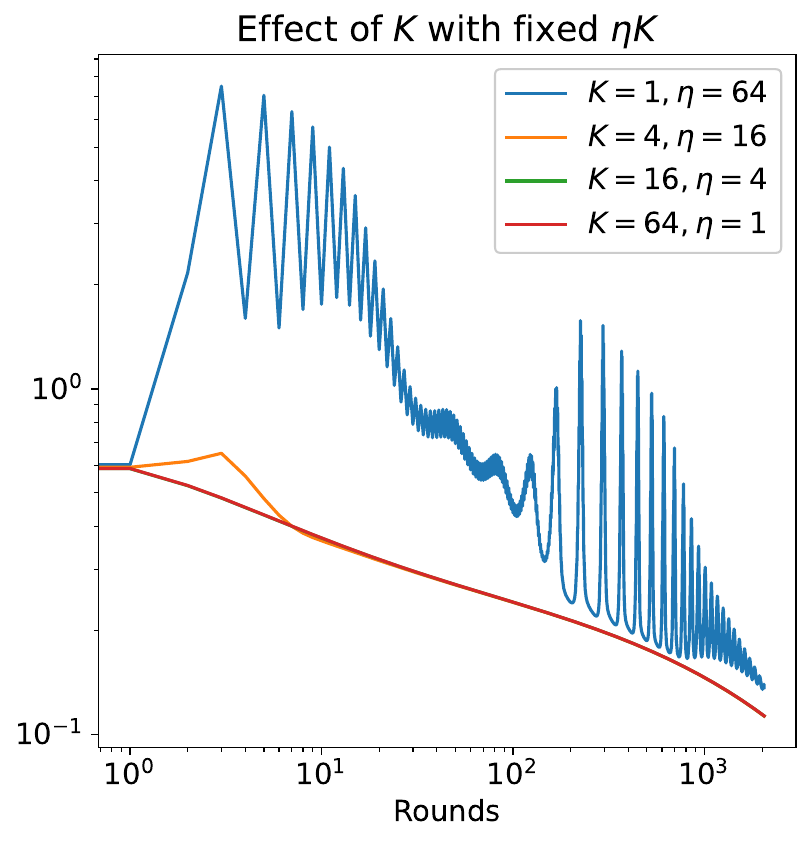}
}
\caption{Objective gap for different values of $\eta, K$ with constant $\eta K$. Left:
Synthetic data. Right: MNIST data.}
\label{fig:constant_eta_K}
\end{center}
\vskip -0.2in
\end{figure}

\section{Experiments} \label{sec:experiments}
We further investigate the behavior of Local GD for logistic regression through
experiments, in order to answer the following questions: \textbf{Q1:} Can Local GD
converge faster by choosing $\eta$ and $K$ large enough to create non-monotonic
objective decrease? \textbf{Q2:} Do local steps yield faster convergence if we tune
$\eta$ after choosing $K$? \textbf{Q3:}  Do local steps yield faster convergence if we
keep $\eta K$ constant? We investigate Q1 to empirically verify our theoretical
findings, whereas Q2 and Q3 are meant to probe the limitations of our theory: our
guarantee (Corollary \ref{cor:final_error}) does not show any benefit of local steps,
and we ask whether such a benefit occurs in practice. We further discuss this limitation
of our theory in Section \ref{sec:discussion}. Lastly, we provide an additional
experiment with synthetic data in Appendix \ref{app:hetero_exp} to evaluate how
optimization behavior is affected by heterogeneity among the margins of each client's
local dataset.

\paragraph{Setup} We evaluate Local GD for a synthetic dataset used by
\citet{crawshaw2025local} and for a subset of the MNIST dataset with binarized labels,
following \citep{wu2024large} and \citep{crawshaw2025local}. The synthetic dataset is a
simple testbed with $M=2$ clients and $n=1$ data point per client. For MNIST, we use a
common protocol \citep{karimireddy2020scaffold, crawshaw2025local} to partition 1000
MNIST images among $M=5$ clients with $n=200$ images each, in a way that induces
heterogeneous feature distributions among clients. Note that $H \leq 1/4$ for these
datasets. See Appendix \ref{app:experiment_details} for complete details of each
dataset. Additionally, we provide results with the CIFAR-10 dataset in Appendix
\ref{app:cifar_exp}. 

We run Local GD with a wide range of values for the parameters: $\eta \in \{2^{-2}, 2^0,
2^2, 2^4, 2^6, 2^8, 2^{10}\}$ and $K \in \{2^0, 2^2, 2^4, 2^6\}$. Note that the
traditional choice of $\eta = 1/H = 2^2$ is in the middle of the search range for
$\eta$, so a large number of these experiments fall outside of the scope of conventional
theory. All experiments have a communication budget of $R = 2048$ rounds.

\paragraph{Results} Our investigations of Q1, Q2, and Q3 are shown in Figures
\ref{fig:train_comparison}, \ref{fig:tune_eta}, and \ref{fig:constant_eta_K}. Note that
the results for $\eta = 2^{10}$ are not shown because all such trajectories diverged.

The loss curves in Figure \ref{fig:train_comparison} show that the final error reached
by Local GD is always made smaller when either $\eta$ or $K$ is increased while the
other is held fixed, even when such changes create instability. This answers Q1
affirmatively and is consistent with our theory. Unsurprisingly, increases to $\eta$
create higher loss spikes and require more communication rounds to reach stability,
which aligns with our theory. More surprising is that increases to $K$ actually preserve
or decrease the rounds required to reach stability while also leading to a smaller final
loss! This is consistent across both datasets and is stronger than predicted by our
theory, since the transition time $\tau$ in Theorem \ref{thm:stage2} is proportional to
$\eta K$.

Figure \ref{fig:tune_eta} shows that a larger communication interval $K$ can accelerate
convergence when $\eta$ is tuned to $K$, which answers Q2 positively. For the synthetic
data, larger choices of $K$ do not increase the time to reach stability, but they lead
to a smaller final error. In the MNIST case, we see another stabilizing effect of $K$:
larger choices of $K$ permit larger choices of $\eta$! Indeed, setting $\eta = 256$ when
$K=1$ or $K=4$ caused divergence, whereas this choice led to fast (albeit unstable)
convergence when $K=16$ or $K=64$.

Lastly, since our Theorem \ref{thm:stage2} does not distinguish the error of Local GD
when $\eta K$ is constant, Figure \ref{fig:constant_eta_K} evaluates different parameter
choices which have a common value of $\eta K$. For both datasets, the final error
reached by Local GD is nearly identical for all parameter choices, which leans toward a
negative answer for Q3. However, we can see that the number of rounds required to reach
the stable phase tends to decrease as $K$ increases, which still suggests that there may
be room for improvement in our bound of the transition time in Theorem \ref{thm:stage2}.

Together, our experimental results confirm that instability is an important ingredient
for the fast convergence of Local GD for logistic regression. Further, they suggest that
Local GD with $K > 1$ may be able to outperform GD under the same communication budget,
which is even stronger than our current guarantees. We discuss this possibility as a
direction of future research in Section \ref{sec:discussion}.

\section{Discussion} \label{sec:discussion}
We have presented the first results showing that Local GD for logistic regression can
converge with any step size $\eta > 0$ and any communication interval $K$, and our
convergence rate improves upon that guaranteed by the worst-case analysis which is known
to be tight \citep{koloskova2020unified, woodworth2020minibatch, patel2024limits}. Below
we discuss the problem-specific approach, limitations of our results, and suggest
directions for follow up work.

\paragraph{Choice of Problem Class} The conventional optimization analysis of
distributed learning focuses on providing guarantees of efficiency in the worst-case
over large classes of optimization problems. The question is, which class of problems
should we analyze? Certain classes of problems lend themselves well to theoretical
analysis, such as those satisfying a heterogeneity condition like uniformly bounded
gradient dissimilarity \citep{woodworth2020minibatch}, or bounded gradient dissimilarity
at the optimum \citep{koloskova2020unified}; however, such conditions have come into
question, since they lead to worst-case complexities that do not explain algorithm
behavior for practical problems \citep{wang2022unreasonable, patel2023on,
patel2024limits}. These works have attempted to find the ``right" heterogeneity
condition, but so far (to the best of our knowledge), no such condition has explained
the significant advantage enjoyed by Local SGD over Minibatch SGD in practice. In this
work, by focusing on a specific problem, we investigate the possibility that algorithm
performance can be explained according to the specific problem structure rather than
general heterogeneity conditions, as discussed by \citet{patel2024limits} and
\citet{crawshaw2025local}. Even though this approach is less general than the
conventional style, we believe that a narrow analysis which accurately describes
practice has a different kind of value than a general analysis which does not, and is an
important direction for the community to pursue.

\paragraph{Usefulness of Local Steps} The main limitation of our results is that our
error bound for Local GD is strictly worse than that of GD for $R$ steps
\citep{wu2024large} in terms of $M$ and $1/\gamma$ (see Table \ref{tab:error}). If we
are to accept these results, one should set $K=1$ and parallelize GD over $M$ machines
rather than use Local GD with $K > 1$, but it remains open whether our analysis for
Local GD can be improved to match (or even dominate) GD. Based on our experiments, we
conjecture that Local GD with $K > 1$ can converge faster than GD, and this suggests two
open problems: (1) Provide a lower bound of GD for logistic regression, and (2)
Determine whether Local GD with $K > 1$ can converge with error smaller than
$R^{-\alpha}$ for some $\alpha > 2$. Our current results are insufficient to show any
advantage to setting $K > 1$, not only because of the unfavorable comparison with GD,
but also because $\eta$ and $K$ appear in our Theorem \ref{thm:stage2} only through the
product $\eta K$ (excluding non-dominating terms of the transition time $\tau$). This
means that any error guaranteed by choosing stepsize $\eta$ and communication interval
$K$ can also be guaranteed with stepsize $\eta K$ and communication interval $1$, so
that an interval larger than $1$ does not produce any advantage. The challenge of
proving an advantage from local steps is fundamental in distributed optimization
\cite{woodworth2020minibatch, glasgow2022sharp, patel2024limits}, and we hope to address
this in future work.

\paragraph{Future Extensions} There are several natural extensions of our work, given
the narrow focus of the problem setting. Since SGD for logistic regression was analyzed
by \citet{wu2024large} using similar techniques as we have leveraged in this work, one
direction is to extend our analysis for Local SGD. These same techniques were applied by
\citet{cai2024large} to analyze GD for training two-layer neural networks with
approximately homogeneous activations, so another direction is to analyze the
distributed training of two-layer networks with Local GD. Lastly, one could attempt to
generalize our analysis for a larger class of problems, by formulating some general
problem class for which Local GD outperforms the existing worst-case lower bounds. We
leave these directions for future work.

\section*{Acknowledgements}
Thank you to the anonymous reviewers for the valuable feedback. Michael Crawshaw
is supported by the Institute for Digital Innovation fellowship. Mingrui Liu is supported by a ORIEI seed funding, an IDIA P3 fellowship from
George Mason University, and NSF awards \#2436217, \#2425687.

\section*{Impact Statement}
This paper presents work whose goal is to advance the field of Machine Learning. There
are many potential societal consequences of our work, none which we feel must be
specifically highlighted here.

\bibliography{references}
\bibliographystyle{icml2025}

\newpage
\appendix
\onecolumn

\section{Proofs of Main Results} \label{app:proofs}

\subsection{Proof of Theorem \ref{thm:stage1}} \label{app:unstable_proofs}

\begin{lemma}[Restatement of Lemma \ref{lem:param_ub}] \label{lem:app_param_ub}
For every $r \geq 0$,
\begin{equation}
    \|\vw_r\| \leq \|\vw_0\| + \frac{\sqrt{2} + \eta + \log(1 + \eta \gamma^2 K r^2)}{\gamma}.
\end{equation}
\end{lemma}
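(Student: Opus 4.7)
The plan is to adapt the split-comparator argument of \citet{wu2024large} to the per-round local trajectories of Algorithm~\ref{alg:local_gd}, and then stitch the resulting per-round bounds together. For a fixed round $s$ and client $m$, I would analyze the local iterates $\{\vw_{s,k}^m\}_{k=0}^{K}$ against a comparator of the form $\vu = \vu_1 + \vu_2$ (to be chosen later). Expanding $\|\vw_{s,k+1}^m - \vu\|^2$ via the local update rule, bounding the inner-product term by convexity of $F_m$, and bounding the squared-gradient term by the self-bounded property $\|\nabla F_m(\vw)\|^2 \leq F_m(\vw)$ (which follows from $|\ell'(z)| \leq \ell(z)$ and $\|\vx_i^m\| \leq 1$), then summing over $k = 0, \ldots, K-1$, yields the per-round inequality stated in the sketch:
\begin{equation*}
    \frac{\|\vw_{s,K}^m - \vu\|^2}{2\eta K} + \frac{1}{K} \sum_{k=0}^{K-1} F_m(\vw_{s,k}^m) \leq \frac{\|\vw_s - \vu\|^2}{2\eta K} + F_m(\vu_1).
\end{equation*}
The split-comparator trick is what allows $F_m(\vu_1)$ (and not $F_m(\vu)$) to appear on the RHS; the role of $\vu_2$ is to absorb the second-order contribution coming from large $\eta$.

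Dropping the non-negative loss sum on the LHS and using $\sqrt{a^2 + c^2} \leq a + c$ for $a,c \geq 0$ gives $\|\vw_{s,K}^m - \vu\| \leq \|\vw_s - \vu\| + \sqrt{2 \eta K F_m(\vu_1)}$. Because $\vw_{s+1}$ is the client-average of $\vw_{s,K}^m$, the triangle inequality combined with Jensen's inequality applied to the concave function $\sqrt{\cdot}$ yields $\|\vw_{s+1} - \vu\| \leq \|\vw_s - \vu\| + \sqrt{2 \eta K F(\vu_1)}$. Telescoping this over $s = 0, \ldots, r-1$ and applying the triangle inequality produces $\|\vw_r\| \leq \|\vw_0\| + 2\|\vu\| + r \sqrt{2 \eta K F(\vu_1)}$. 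I then choose $\vu_1 = (A/\gamma) \vw_*$ with $A = \log(1 + \eta \gamma^2 K r^2)$ and $\vu_2$ of norm $\mathcal{O}(\eta/\gamma)$ along $\vw_*$ as required by the split-comparator construction. Since $\|\vw_*\| = 1$ has margin $\gamma$ on every data point, the bound $\ell(z) \leq e^{-z}$ gives $F(\vu_1) \leq e^{-A} \leq 1/(\eta \gamma^2 K r^2)$, so $r \sqrt{2 \eta K F(\vu_1)} \leq \sqrt{2}/\gamma$. Combined with $\|\vu_1\| = A/\gamma$ and $\|\vu_2\| \lesssim \eta/\gamma$, this recovers the three terms $\sqrt{2}/\gamma$, $\eta/\gamma$, and $\log(1 + \eta \gamma^2 K r^2)/\gamma$ claimed in the lemma.

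The main obstacle is establishing the per-round split-comparator inequality in the first step. A standard convexity-based one-round analysis would absorb the $\eta^2 \|\nabla F_m(\vw_{s,k}^m)\|^2$ term using $\eta \leq 1/H$, which we explicitly disallow here. The split device sidesteps this by allocating $\vu_2$ to offset the second-order contribution at each local step, but one has to verify that the same $\vu_2$ works simultaneously for all $k \in [K]$ and all clients $m$, even though each local trajectory may itself be unstable under a large per-round step size $\eta K$. Once this per-round inequality is established, the remaining pieces --- client averaging, telescoping across rounds, and the logarithmic choice of $\vu_1$ --- are essentially bookkeeping.
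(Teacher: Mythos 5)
Your overall route is the same as the paper's: a split comparator $\vu = \vu_1 + \vu_2$ applied to each local trajectory, the per-round inequality, client averaging via Jensen, telescoping over rounds, and the logarithmic choice $\|\vu_1\| = \log(1+\eta\gamma^2 K r^2)/\gamma$ together with $\|\vu_2\| = \Theta(\eta/\gamma)$. The final bookkeeping is correct.

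The one step that would fail as written is your treatment of the squared-gradient term. You propose to bound $\eta^2\|\nabla F_m(\vw_{s,k}^m)\|^2$ via the self-bounding inequality $\|\nabla F_m(\vw)\|^2 \leq F_m(\vw)$ and then let $\vu_2$ absorb it. But the only negative quantity that $\vu_2 = \lambda_2\vw_*$ produces is
\begin{equation*}
2\eta\left\langle \nabla F_m(\vw_{s,k}^m), \vu_2\right\rangle = -\frac{2\eta\lambda_2}{n}\sum_{i=1}^n \frac{\left\langle \vx_i^m, \vw_*\right\rangle}{1+\exp\left(\langle \vw_{s,k}^m,\vx_i^m\rangle\right)} \leq -2\eta\gamma\lambda_2\, G_m(\vw_{s,k}^m),
\end{equation*}
where $G_m(\vw) = \frac{1}{n}\sum_{i=1}^n |\ell'(\langle\vw,\vx_i^m\rangle)|$ is the gradient potential, not the loss. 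Since $F_m/G_m$ is unbounded (for a badly misclassified point, $\ell(b)\approx |b|$ while $|\ell'(b)|\approx 1$), no fixed $\lambda_2$ can dominate an $\eta^2 F_m(\vw_{s,k}^m)$ term uniformly along a possibly unstable local trajectory. The fix, and what the paper actually does, is to bound the squared gradient by the gradient potential itself: by Jensen and $\|\vx_i^m\|\leq 1$, one has $\|\nabla F_m(\vw)\|^2 \leq \frac{1}{n}\sum_{i=1}^n \bigl\|\vx_i^m/(1+\exp(\langle\vw,\vx_i^m\rangle))\bigr\|^2 \leq G_m(\vw)$. Then $2\langle\nabla F_m,\vu_2\rangle + \eta\|\nabla F_m\|^2 \leq (-2\gamma\lambda_2+\eta)\,G_m \leq 0$ with $\lambda_2 = \eta/(2\gamma)$, and the cancellation holds term by term for every $k$, every $m$, and every data point --- which also disposes of the uniformity concern you raise at the end. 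With that substitution your argument matches the paper's proof.
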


\begin{proof}
Recall that \citet{wu2024large} introduced a large stepsize analysis of GD for logistic
regression, which provides an upper bound on the norm of the parameter at each step. We
wish to achieve a similar bound for the norm of the parameter found by Local GD. To
accomplish this, we treat the local training of each client during each round as GD on a
logistic regression problem, and we apply the "split comparator" technique of
\citet{wu2024large}. This leads to a recursive upper bound on the norm of the global
parameter $\|\vw_r\|$, and unrolling yields the desired bound. We demonstrate this
argument below.

Let $0 \leq s < r$ and $m \in [M]$. Define $\vu_1 = \lambda_1 \vw_*, \vu_2 = \lambda_2
\vw_*$, and $\vu = \vu_1 + \vu_2$, where $\lambda_1, \lambda_2$ will be chosen later and
will not depend on $s$ or $m$. Note that $\vu$ is a scalar multiple of $\vw_*$, which is
the maximum margin predictor of the global dataset, not that of any local dataset. We
start by applying the split comparator technique of \cite{wu2024large} to the local
updates of client $m$ at round $s$, which takes $K$ gradient steps with learning rate
$\eta$ on the objective $F_m$, initialized from $\vw_s$. For every $0 \leq k < K$,
\begin{align}
    \|\vw_{s,k+1}^m - \vu\|^2 &= \|(\vw_{s,k}^m - \vu) + (\vw_{s,k+1}^m - \vw_{s,k}^m)\|^2 \\
    &= \|\vw_{s,k}^m - \vu\|^2 + 2 \left\langle \vw_{s,k+1}^m - \vw_{s,k}^m, \vw_{s,k}^m - \vu \right\rangle + \|\vw_{s,k+1}^m - \vw_{s,k}^m\|^2 \\
    &= \|\vw_{s,k}^m - \vu\|^2 + 2 \eta \left\langle \nabla F_m(\vw_{s,k}^m), \vu - \vw_{s,k}^m \right\rangle + \eta^2 \|\nabla F_m(\vw_{s,k}^m)\|^2 \\
    &= \|\vw_{s,k}^m - \vu\|^2 + \underbrace{2 \eta \left\langle \nabla F_m(\vw_{s,k}^m), \vu_1 - \vw_{s,k}^m \right\rangle}_{A_1} \\
    &\quad + \underbrace{2 \eta \left\langle \nabla F_m(\vw_{s,k}^m), \vu_2 \right\rangle + \eta^2 \|\nabla F_m(\vw_{s,k}^m)\|^2}_{A_2} \label{eq:param_ub_inter}
\end{align}
The first term $A_1$ is easily bounded by convexity of $F_m$:
\begin{equation}
    A_1 = 2 \eta \left\langle \nabla F_m(\vw_{s,k}^m), \vu_1 - \vw_{s,k}^m \right\rangle \leq 2 \eta (F_m(\vu_1) - F_m(\vw_{s,k}^m)).
\end{equation}
The second term $A_2$ can be bounded by the Lipschitz property of $F_m$ together with a choice of $\vu_2$:
\begin{align}
    A_2 &= \eta \left( 2 \left\langle \nabla F_m(\vw_{s,k}^m), \vu_2 \right\rangle + \eta \|\nabla F_m(\vw_{s,k}^m)\|^2 \right) \\
    &\Eqmark{i}{=} \eta \left( -\frac{2}{n} \sum_{i=1}^n \frac{\left \langle \vx_i^m, \vu_2 \right\rangle}{1 + \exp(\langle \vw_{s,k}^m, \vx_i^m \rangle)} + \eta \left\| \frac{1}{n} \sum_{i=1}^n \frac{\vx_i^m}{1 + \exp(\langle \vw_{s,k}^m, \vx_i^m \rangle)} \right\|^2 \right) \\
    &\Eqmark{ii}{\leq} \eta \left( -\frac{2 \lambda_2}{n} \sum_{i=1}^n \frac{\left \langle \vx_i^m, \vw_* \right\rangle}{1 + \exp(\langle \vw_{s,k}^m, \vx_i^m \rangle)} + \frac{\eta}{n} \sum_{i=1}^n \left\| \frac{\vx_i^m}{1 + \exp(\langle \vw_{s,k}^m, \vx_i^m \rangle)} \right\|^2 \right) \\
    &\Eqmark{iii}{\leq} \eta \left( -\frac{2 \gamma \lambda_2}{n} \sum_{i=1}^n \frac{1}{1 + \exp(\langle \vw_{s,k}^m, \vx_i^m \rangle)} + \frac{\eta}{n} \sum_{i=1}^n \left\| \frac{\vx_i^m}{1 + \exp(\langle \vw_{s,k}^m, \vx_i^m \rangle)} \right\| \right) \\
    &\Eqmark{iv}{\leq} \eta \left( -\frac{2 \gamma \lambda_2}{n} \sum_{i=1}^n \frac{1}{1 + \exp(\langle \vw_{s,k}^m, \vx_i^m \rangle)} + \frac{\eta}{n} \sum_{i=1}^n \frac{1}{1 + \exp(\langle \vw_{s,k}^m, \vx_i^m \rangle)} \right) \\
    &= \frac{\eta}{n} \sum_{i=1}^n \frac{-2 \gamma \lambda_2 + \eta}{1 + \exp(\langle \vw_{s,k}^m, \vx_i^m \rangle)},
\end{align}
where $(i)$ uses the definition of $\nabla F_m$, $(ii)$ uses the definition of $\vu_2$
and Jensen's inequality, and both $(iii)$ and $(iv)$ use $\|\vx_i^m\| \leq 1$.
Therefore, choosing $\lambda_2 = \eta/(2 \gamma)$ implies that $A_2 \leq 0$. Plugging
back to \Eqref{eq:param_ub_inter},
\begin{align}
    \|\vw_{s,k+1}^m - \vu\|^2 &\leq \|\vw_{s,k}^m - \vu\|^2 + 2 \eta (F_m(\vu_1) - F_m(\vw_{s,k}^m)) \\
    F_m(\vw_{s,k}^m) &\leq \frac{\|\vw_{s,k}^m - \vu\|^2 - \|\vw_{s,k+1}^m - \vu\|^2}{2 \eta} + F_m(\vu_1).
\end{align}
Averaging over $k \in \{0, \ldots, K-1\}$,
\begin{align}
    \frac{1}{K} \sum_{k=0}^{K-1} F_m(\vw_{s,k}^m) &\leq \frac{\|\vw_s - \vu\|^2 - \|\vw_{s,K}^m - \vu\|^2}{2 \eta K} + F_m(\vu_1) \\
    \frac{\|\vw_{s,K}^m - \vu\|^2}{2 \eta K} + \frac{1}{K} \sum_{k=0}^{K-1} F_m(\vw_{s,k}^m) &\leq \frac{\|\vw_s - \vu\|^2}{2 \eta K} + F_m(\vu_1).
\end{align}
In particular, this implies
\begin{equation}
    \frac{\|\vw_{s,K}^m - \vu\|^2}{2 \eta K} \leq \frac{\|\vw_s - \vu\|^2}{2 \eta K} + F_m(\vu_1),
\end{equation}
so
\begin{equation}
    \|\vw_{s,K}^m - \vu\| \leq \sqrt{\|\vw_s - \vu\|^2 + 2 \eta K F_m(\vu_1)} \leq \|\vw_s - \vu\| + \sqrt{2 \eta K F_m(\vu_1)}.
\end{equation}
Recall that $\vw_{s+1} = \frac{1}{M} \sum_{m=1}^M \vw_{s,K}^m$. So averaging over $m$,
\begin{align}
    \|\vw_{s+1} - \vu\| &= \left\| \frac{1}{M} \sum_{m=1}^M \vw_{s,k}^m - \vu \right\| \leq \frac{1}{M} \sum_{m=1}^M \left\| \vw_{s,k}^m - \vu \right\| \\
    &\leq \|\vw_s - \vu\| + \frac{1}{M} \sum_{m=1}^M \sqrt{2 \eta K F_m(\vu_1)} \\
    &\Eqmark{i}{\leq} \|\vw_s - \vu\| + \sqrt{2 \eta K F(\vu_1)},
\end{align}
where $(i)$ uses the fact that $\sqrt{\cdot}$ is concave together with Jensen's
inequality. We can now unroll this recursion over $s \in \{0, \ldots, r-1\}$ to obtain
\begin{equation}
    \|\vw_r - \vu\| \leq \|\vw_0 - \vu\| + \sqrt{2 \eta K r^2 F(\vu_1)} \leq \|\vw_0\| + \|\vu\| + \sqrt{2 \eta K r^2 F(\vu_1)}.
\end{equation}
so
\begin{align}
    \|\vw_r\| &\leq \|\vw_r - \vu\| + \|\vu\| \leq \|\vw_0\| + 2 \|\vu\| + \sqrt{2 \eta K r^2 F(\vu_1)} \\
    &= \|\vw_0\| + 2 \lambda_1 + 2 \lambda_2 + \sqrt{2 \eta K r^2 F(\lambda_1 \vw_*)}.
\end{align}
It only remains to choose $\lambda_1$. Note that
\begin{align}
    F(\lambda_1 \vw_*) &= \frac{1}{Mn} \sum_{m=1}^M \sum_{i=1}^n \log(1 + \exp(-\lambda_1 \langle \vw_*, \vx_i^m \rangle)) \\
    &\Eqmark{i}{\leq} \frac{1}{Mn} \sum_{m=1}^M \sum_{i=1}^n \exp(-\lambda_1 \langle \vw_*, \vx_i^m \rangle) \\
    &\Eqmark{ii}{\leq} \exp(-\lambda_1 \gamma),
\end{align}
where $(i)$ uses $\log(1 + x) \leq x$ for $x \geq 0$ and $(ii)$ uses the definition of
$\vw_*$. Therefore, choosing $\lambda_1 = \frac{1}{\gamma} \log(1 + \eta \gamma^2 K
r^2)$ yields
\begin{equation}
    F(\lambda_1 \vw_*) \leq \frac{1}{1 + \eta \gamma^2 K r^2} \leq \frac{1}{\eta \gamma^2 K r^2},
\end{equation}
so
\begin{align}
    \|\vw_r\| &\leq \|\vw_0\| + \frac{2}{\gamma} \log(1 + \eta \gamma^2 K r^2) + \frac{\eta}{\gamma} + \sqrt{2 \eta K r^2 \frac{1}{\eta \gamma^2 K r^2}} \\
    &= \|\vw_0\| + \frac{\sqrt{2} + \eta + \log(1 + \eta \gamma^2 K r^2)}{\gamma}.
\end{align}
\end{proof}

\begin{theorem}[Restatement of Theorem \ref{thm:stage1}] \label{thm:app_stage1}
For every $r \geq 0$, Local GD satisfies
\begin{equation}
    \frac{1}{r} \sum_{s=0}^{r-1} F(\vw_s) \leq 26 \frac{\|\vw_0\|^2 + 1 + \log^2(K + \eta K \gamma^2 r) + \eta^2 K^2}{\eta \gamma^4 r}.
\end{equation}
\end{theorem}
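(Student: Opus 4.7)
The plan is to generalize the within-round split-comparator argument of Lemma~\ref{lem:app_param_ub} to an across-round argument on the global iterates $\{\vw_s\}$, via the round-level decomposition $\vw_{s+1}-\vw_s = \frac{\eta K}{Mn}\sum_{m,i}\beta_{s,i}^m|\ell'(b_{s,i}^m)|\vx_i^m$ given in \Eqref{eq:local_gd_update_decomp}. This rewrites one Local GD round as a GD-like step with per-sample weights $\beta_{s,i}^m$, which satisfy $\tfrac{1}{K}\leq\beta_{s,i}^m\leq 1+e^{\|\vw_s\|}$: the lower bound comes from keeping only the $k=0$ term in the numerator, and the upper bound from $1+e^{b_{s,i,k}^m}\geq 1$ together with $\|\vx_i^m\|\leq 1$.

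Expanding the standard potential
\begin{equation*}
\|\vw_{s+1}-\vu\|^2 - \|\vw_s-\vu\|^2 = 2\langle\vw_{s+1}-\vw_s,\vw_s-\vu\rangle + \|\vw_{s+1}-\vw_s\|^2,
\end{equation*}
and splitting the comparator $\vu=\vu_1+\vu_2:=\lambda_1\vw_*+\lambda_2\vw_*$ along the max-margin direction, I would handle the cross-term against $\vu_1$ by convexity of $\ell$: $|\ell'(b_{s,i}^m)|(b_{s,i}^m-\langle\vu_1,\vx_i^m\rangle)\leq \ell(\langle\vu_1,\vx_i^m\rangle)-\ell(b_{s,i}^m)$. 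Applying $\beta_{s,i}^m\leq 1+e^{\|\vw_s\|}$ to the nonnegative $\ell(\langle\vu_1,\vx_i^m\rangle)$ piece and $\beta_{s,i}^m\geq 1/K$ to the nonpositive $-\ell(b_{s,i}^m)$ piece yields approximately $2\eta K(1+e^{\|\vw_s\|})F(\vu_1)-2\eta F(\vw_s)$. The cross-term against $\vu_2$ uses $\langle\vx_i^m,\vw_*\rangle\geq\gamma$ and $\beta_{s,i}^m\geq 1/K$ to produce a nonpositive reservoir term of order $-\eta\lambda_2\gamma\cdot(Mn)^{-1}\sum_{m,i}|\ell'(b_{s,i}^m)|$. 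Bounding the quadratic by
\begin{equation*}
\|\vw_{s+1}-\vw_s\|^2 \leq \frac{\eta^2 K(1+e^{\|\vw_s\|})}{Mn}\sum_{m,i}|\ell'(b_{s,i}^m)|
\end{equation*}
(using $\|\vx_i^m\|\leq 1$, $\|\vw_{s+1}-\vw_s\|\leq \eta K$ from $|\ell'|\leq 1$, and the $\beta$ decomposition), this reservoir cancels the quadratic when $\lambda_2 = \Theta(\eta K^2\bar\beta_+/\gamma)$, where $\bar\beta_+:=\max_{s=0,\ldots,r-1}(1+e^{\|\vw_s\|})$. Telescoping $s=0,\ldots,r-1$ then gives
\begin{equation*}
2\eta\sum_{s=0}^{r-1}F(\vw_s) \leq \|\vw_0-\vu\|^2 + 2\eta K r\bar\beta_+ F(\vu_1).
\end{equation*}
Choosing $\lambda_1=\Theta(\log(\eta K\bar\beta_+\gamma^2 r^2)/\gamma)$ makes $F(\vu_1)\leq e^{-\lambda_1\gamma}$ small enough that the second term is $O(1/r)$, and expanding $\|\vu\|^2\leq 2(\lambda_1^2+\lambda_2^2)$ and dividing by $r$ produces the final rate.

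The main obstacle is controlling $\bar\beta_+$, which by Lemma~\ref{lem:app_param_ub} could in principle be as large as $(\eta K\gamma^2 r^2)^{1/\gamma}$; feeding this directly into $\lambda_2\sim \eta K^2\bar\beta_+/\gamma$ would blow up $\|\vu\|^2$ far beyond the claimed $\log^2+\eta^2 K^2$. Resolving this tension requires a self-consistent bookkeeping so that $\bar\beta_+$ enters the final bound only through $\log\bar\beta_+$ (via the choice of $\lambda_1$), together with an extra factor of $1/\gamma$ picked up from substituting Lemma~\ref{lem:app_param_ub}'s bound on $\|\vw_s\|$; this is precisely how the $1/\gamma^4$ denominator and the $\log^2(K+\eta K\gamma^2 r)$ and $\eta^2 K^2$ numerator terms emerge. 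Getting this self-consistency right---possibly by replacing my quadratic bound with a tighter one that avoids $\bar\beta_+$ entirely in the $\lambda_2$ scaling---is the delicate step I expect to need, and is what distinguishes the Local GD analysis from the single-machine GD analysis of Wu et al.
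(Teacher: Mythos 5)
Your overall architecture matches the paper's: a split comparator $\vu=\lambda_1\vw_*+\lambda_2\vw_*$ applied to the global iterates, the bound $\beta_{s,i}^m\le 1+e^{\|\vw_s\|}$ (controlled via Lemma~\ref{lem:app_param_ub}) on the $F(\vu_1)$ piece, the bound $\beta_{s,i}^m\ge 1/K$ on the $-F(\vw_s)$ piece, and a choice $\lambda_1=\Theta(\log(\cdot)/\gamma)$ so that $\exp(\|\vw_s\|)$ enters only through a logarithm. That part is right. The genuine gap is exactly where you flag it: your treatment of the $\vu_2$ reservoir and the quadratic term. By converting both back to the global-iterate weights $|\ell'(b_{s,i}^m)|$ via the $\beta$ bounds, you are forced into $\lambda_2=\Theta(\eta K^2\bar\beta_+/\gamma)$ with $\bar\beta_+=\max_s(1+e^{\|\vw_s\|})$; since Lemma~\ref{lem:app_param_ub} only gives $\|\vw_s\|\lesssim \log(\eta\gamma^2Ks^2)/\gamma$, the factor $\bar\beta_+$ is polynomial in $r$ with exponent of order $1/\gamma$, and $\lambda_2^2$ then swamps the claimed $\eta^2K^2$ numerator. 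No ``self-consistent bookkeeping'' rescues this as written, because $\lambda_2$ enters the final bound quadratically, not logarithmically.

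The paper's resolution is the tighter route you guessed might exist: do not pass the $A_2$ terms through $\beta$ at all. Keep both the reservoir and the quadratic expressed at the local iterates $\vw_{s,k}^m$. Concretely, write
\begin{align}
A_2 \;\le\; \frac{2\eta}{M}\sum_{m,k}\langle\nabla F_m(\vw_{s,k}^m),\vu_2\rangle \;+\;\frac{\eta^2 K}{M}\sum_{m,k}\|\nabla F_m(\vw_{s,k}^m)\|^2,
\end{align}
where the second term comes from Jensen's inequality applied to $\|\tfrac{1}{MK}\sum_{m,k}\nabla F_m(\vw_{s,k}^m)\|^2$; then use $\|\nabla F_m(\vw_{s,k}^m)\|^2\le\|\nabla F_m(\vw_{s,k}^m)\|\le\frac{1}{n}\sum_i\frac{1}{1+\exp(\langle\vx_i^m,\vw_{s,k}^m\rangle)}$ and $\langle\vx_i^m,\vw_*\rangle\ge\gamma$. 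Both terms are now sums over $(m,k,i)$ of the \emph{same} weights $\frac{1}{1+\exp(\langle\vx_i^m,\vw_{s,k}^m\rangle)}$ with coefficients $-2\gamma\lambda_2+\eta K$, so they cancel pointwise with $\lambda_2=\eta K/(2\gamma)$ --- independent of $\beta$ and of $\|\vw_s\|$. This is what yields exactly the $\eta^2K^2/\gamma^2$ contribution from $\lambda_2^2$ (your version would give at best $\eta^2K^4\bar\beta_+^2/\gamma^2$), and it is the one place where the Local GD argument genuinely diverges from your sketch. With this replacement the rest of your telescoping and the choice of $\lambda_1$ go through as in the paper.
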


\begin{proof}
To achieve this bound on the loss of Local GD, we again adapt the split comparator
technique of \cite{wu2024large}. This time, we consider the trajectory of the global
model $\vw_r$, instead of the trajectory of locally updated models $\vw_{r,k}^m$ as in
Lemma \ref{lem:param_ub}. To apply this technique for Local GD, we have to account for
the fact that the update direction $\vw_{r+1} - \vw_r$ is not equal to the global
gradient $\nabla F(\vw_r)$. However, both the update direction and the global gradient
are linear combinations of the data $\{\vx_i^m\}_{m,i}$, and we account for the
discrepancy between the two by bounding the ratio of their linear combination
coefficients.

Let $\vu_1 = \lambda_1 \vw_*, \vu_2 = \lambda_2 \vw_*$, where $\lambda_1$ and
$\lambda_2$ will be determined later, and let $\vu = \vu_1 + \vu_2$. Then
\begin{align}
    \|\vw_{s+1} - \vu\|^2 &= \left\| (\vw_s - \vu) + (\vw_{s+1} - \vw_s) \right\|^2 \\
    &= \|\vw_s - \vu\|^2 + 2 \langle \vw_{s+1} - \vw_s, \vw_s - \vu \rangle + \|\vw_{s+1} - \vw_s\|^2 \\
    &= \|\vw_s - \vu\|^2 + \frac{2 \eta}{M} \sum_{m=1}^M \sum_{k=0}^{K-1} \left\langle \nabla F_m(\vw_{s,k}^m), \vu - \vw_s \right\rangle + \eta^2 \left\| \frac{1}{M} \sum_{m=1}^M \sum_{k=0}^{K-1} \nabla F_m(\vw_{s,k}^m) \right\|^2 \\
    &= \|\vw_s - \vu\|^2 + \underbrace{\frac{2 \eta}{M} \sum_{m=1}^M \sum_{k=0}^{K-1} \left\langle \nabla F_m(\vw_{s,k}^m), \vu_1 - \vw_s \right\rangle}_{A_1} \\
    &\quad + \underbrace{\frac{2 \eta}{M} \sum_{m=1}^M \sum_{k=0}^{K-1} \left\langle \nabla F_m(\vw_{s,k}^m), \vu_2 \right\rangle + \eta^2 \left\| \frac{1}{M} \sum_{m=1}^M \sum_{k=0}^{K-1} \nabla F_m(\vw_{s,k}^m) \right\|^2}_{A_2}. \label{eq:comparator_inter}
\end{align}

To bound $A_1$, we express the local gradient of the local models $\nabla
F_m(\vw_{s,k}^m)$ in terms of the local gradient of the preceding global model $\nabla
F_m(\vw_s)$. For any $\vw$,
\begin{equation} \label{eq:local_grad}
    \nabla F_m(\vw) = \frac{1}{n} \sum_{i=1}^n \nabla F_{m,i}(\vw) = \frac{-1}{n} \sum_{i=1}^n \frac{\vx_i^m}{1 + \exp(\langle \vx_i^m, \vw \rangle)}.
\end{equation}
So denoting $\beta_{s,i,k}^m = (1 + \exp(b_{s,i}^m))/(1 + \exp(b_{s,i,k}^m))$ and $F_{m,i}(\vw) = \log(1 + \exp(-\langle \vw, \vx_i^m \rangle))$,
\begin{equation}
    \nabla F_m(\vw_{s,k}^m) = \frac{1}{n} \sum_{i=1}^n \frac{-\vx_i^m}{1 + \exp(b_{s,i,k}^m)} = \frac{1}{n} \sum_{i=1}^n \frac{1 + \exp(b_{s,i}^m)}{1 + \exp(b_{s,i,k}^m)} \frac{-\vx_i^m}{1 + \exp(b_{s,i}^m)} = \frac{1}{n} \sum_{i=1}^n \beta_{s,i,k}^m \nabla F_{m,i}(\vw_s).
\end{equation}
Notice, from the definition of $\beta_{s,k}^m$,
\begin{equation}
    \beta_{s,k}^m := \frac{1}{K} \sum_{k=0}^{K-1} \frac{|\ell'(b_{s,i,k}^m)|}{|\ell'(b_{s,i}^m)|} = \frac{1}{K} \sum_{k=0}^{K-1} \frac{1 + \exp(b_{s,i}^m)}{1 + \exp(b_{s,i,k}^m)} = \frac{1}{K} \sum_{k=0}^{K-1} \beta_{s,i,k}^m,
\end{equation}
so
\begin{align}
    A_1 &= \frac{2 \eta}{Mn} \sum_{m=1}^M \sum_{k=0}^{K-1} \sum_{i=1}^n \beta_{s,i,k}^m \langle \nabla F_{m,i}(\vw_s), \vu_1 - \vw_s \rangle \\
    &\Eqmark{i}{\leq} \frac{2 \eta}{Mn} \sum_{m=1}^M \sum_{k=0}^{K-1} \sum_{i=1}^n \beta_{s,i,k}^m (F_{m,i}(\vu_1) - F_{m,i}(\vw_s)) \\
    &= \frac{2 \eta K}{Mn} \sum_{m=1}^M \sum_{i=1}^n \beta_{s,i}^m F_{m,i}(\vu_1) - \frac{2 \eta K}{Mn} \sum_{m=1}^M \sum_{i=1}^n \beta_{s,i}^m F_{m,i}(\vw_s). \label{eq:A1_ub_inter}
\end{align}
where $(i)$ uses the convexity of $F_{m,i}$. We can now bound the two terms of
\Eqref{eq:A1_ub_inter} with upper and lower bounds of $\beta_{s,i}^m$, respectively.
Denoting $\phi = \|\vw_0\| + \frac{\sqrt{2} + \eta + \log(1 + \eta \gamma^2 K
r^2)}{\gamma}$,
\begin{align}
    \beta_{s,i}^m &= \frac{1}{K} \sum_{k=0}^{K-1} \frac{1 + \exp(b_{s,i}^m)}{1 + \exp(b_{s,i,k}^m)} \leq 1 + \exp(b_{s,i}^m) = 1 + \exp(\langle \vw_s, \vx_i^m \rangle) \\
    &\Eqmark{i}{\leq} 1 + \exp(\|\vw_s\|) \Eqmark{ii}{\leq} 1 + \exp \left( \|\vw_0\| + \frac{\sqrt{2} + \eta + \log(1 + \eta \gamma^2 K s^2)}{\gamma} \right) \\
    &\leq 2 \exp(\phi), \label{eq:A1_beta_ub}
\end{align}
where $(i)$ uses Cauchy-Schwarz together with $\|\vx_i^m\| \leq 1$ and $(ii)$ uses
Lemma \ref{lem:param_ub}. Also,
\begin{equation} \label{eq:A1_beta_lb}
    \beta_{s,i}^m = \frac{1}{K} \sum_{k=0}^{K-1} \frac{1 + \exp(b_{s,i}^m)}{1 + \exp(b_{s,i,k}^m)} \geq \frac{1}{K} \frac{1 + \exp(b_{s,i}^m)}{1 + \exp(b_{s,i,0}^m)} = \frac{1}{K}.
\end{equation}
The step $\beta_{s,i}^m \geq \frac{1}{K}$ was mentioned in our proof overview, and it
will be used again in the proof of Lemma \ref{lem:transition_time}. See Lemma
\ref{lem:beta_ub} for a discussion on the tightness of this bound. Plugging \Eqref{eq:A1_beta_ub} and \Eqref{eq:A1_beta_lb} into \Eqref{eq:A1_ub_inter},
\begin{align}
    A_1 &\leq \frac{4 \eta K \exp(\phi)}{Mn} \sum_{m=1}^M \sum_{i=1}^n F_{m,i}(\vu_1) - \frac{2 \eta}{Mn} \sum_{m=1}^M \sum_{i=1}^n F_{m,i}(\vw_s) \\
    &\leq 4 \eta K \exp(\phi) F(\vu_1) - 2 \eta F(\vw_s). \label{eq:A1_ub}
\end{align}
This bounds $A_1$. For $A_2$,
\begin{align}
    A_2 &= \frac{2 \eta}{M} \sum_{m=1}^M \sum_{k=0}^{K-1} \left\langle \nabla F_m(\vw_{s,k}^m), \vu_2 \right\rangle + \eta^2 K^2 \left\| \frac{1}{MK} \sum_{m=1}^M \sum_{k=0}^{K-1} \nabla F_m(\vw_{s,k}^m) \right\|^2 \\
    &\leq \frac{2 \eta}{M} \sum_{m=1}^M \sum_{k=0}^{K-1} \left\langle \nabla F_m(\vw_{s,k}^m), \vu_2 \right\rangle + \frac{\eta^2 K}{M} \sum_{m=1}^M \sum_{k=0}^{K-1} \left\| \nabla F_m(\vw_{s,k}^m) \right\|^2 \\
    &= \frac{\eta}{M} \sum_{m=1}^M \sum_{k=0}^{K-1} \left( 2 \left\langle \nabla F_m(\vw_{r,k}^m), \vu_2 \right\rangle + \eta K \left\| \nabla F_m(\vw_{s,k}^m) \right\|^2 \right) \\
    &\Eqmark{i}{\leq} \frac{\eta}{M} \sum_{m=1}^M \sum_{k=0}^{K-1} \left( 2 \left\langle \nabla F_m(\vw_{s,k}^m), \vu_2 \right\rangle + \eta K \left\| \nabla F_m(\vw_{s,k}^m) \right\| \right) \\
    &= \frac{\eta}{Mn} \sum_{m=1}^M \sum_{k=0}^{K-1} \sum_{i=1}^n \left( -\frac{2 \langle \vx_i^m, \vu_2 \rangle}{1 + \exp(\langle \vx_i^m, \vw_{s,k}^m)} + \frac{\eta K \|\vx_i^m\|}{1 + \exp(\langle \vx_i^m, \vw_{s,k}^m \rangle)} \right) \\
    &\Eqmark{ii}{=} \frac{\eta}{Mn} \sum_{m=1}^M \sum_{k=0}^{K-1} \sum_{i=1}^n \frac{-2 \lambda_2 \langle \vx_i^m, \vw_* \rangle + \eta K \|\vx_i^m\|}{1 + \exp(\langle \vx_i^m, \vw_{s,k}^m \rangle)} \\
    &\leq \frac{\eta}{Mn} \sum_{m=1}^M \sum_{k=0}^{K-1} \sum_{i=1}^n \frac{-2 \gamma \lambda_2 + \eta K}{1 + \exp(\langle \vx_i^m, \vw_{s,k}^m \rangle)},
\end{align}
where $(i)$ uses the fact that $\|\nabla F_m(\vw)\| \leq 1$, coming from
\Eqref{eq:local_grad} and $\|\vx_i^m\| \leq 1$, and $(ii)$ uses the definition of
$\vu_2$. Choosing $\lambda_2 = \eta K / (2 \gamma)$ then implies that $A_2 \leq 0$.

Plugging $A_2 \leq 0$ and \Eqref{eq:A1_ub} into \Eqref{eq:comparator_inter},
\begin{align}
    \|\vw_{s+1} - \vu\|^2 &\leq \|\vw_s - \vu\|^2 + 4 \eta K \exp(\phi) F(\vu_1) - 2 \eta F(\vw_s) \\
    F(\vw_s) &\leq \frac{\|\vw_s - \vu\|^2 - \|\vw_{s+1} - \vu\|^2}{2 \eta} + 2 K \exp(\phi) F(\vu_1),
\end{align}
and averaging over $s \in \{0, \ldots, r-1\}$ yields
\begin{align}
    \frac{1}{r} \sum_{s=0}^{r-1} F(\vw_s) &\leq \frac{\|\vw_0 - \vu\|^2 - \|\vw_r - \vu\|^2}{2 \eta r} + 2 K \exp(\phi) F(\vu_1) \\
    &\leq \frac{\|\vw_0 - (\vu_1 + \vu_2)\|^2}{2 \eta r} + 2 K \exp(\phi) F(\vu_1) \\
    &\leq \frac{3}{2} \frac{\|\vw_0\|^2 + \|\vu_1\|^2 + \|\vu_2\|^2}{\eta r} + 2 K \exp(\phi) F(\vu_1) \\
    &\leq \frac{3}{2} \frac{\|\vw_0\|^2 + \lambda_1^2 + \lambda_2^2}{\eta r} + 2 K \exp(\phi) F(\lambda_1 \vw_*).
\end{align}
Recall that
\begin{equation}
    F(\lambda_1 \vw_*) = \frac{1}{Mn} \sum_{m=1}^M \sum_{i=1}^n \log(1 + \exp(-\lambda_1 \langle \vx_i^m, \vw_* \rangle)) \leq \log(1 + \exp(-\lambda_1 \gamma)) \Eqmark{i}{\leq} \exp(-\lambda_1 \gamma),
\end{equation}
where $(i)$ uses $\log(1 + x) \leq x$ for $x \geq 0$. So
\begin{align}
    \frac{1}{r} \sum_{s=0}^{r-1} F(\vw_s) &\leq \frac{3}{2} \frac{\|\vw_0\|^2 + \lambda_1^2 + \lambda_2^2}{\eta r} + 2 K \exp(\phi - \lambda_1 \gamma) \\
    &= \frac{3}{2} \frac{\|\vw_0\|^2 + \lambda_1^2 + \lambda_2^2}{\eta r} + 2 \exp(\log K + \phi - \lambda_1 \gamma).
\end{align}
Here we choose $\lambda_1 = (\phi + \log(K + \eta K \gamma^2 r)) / \gamma$. Finally,
together with the previous choice of $\lambda_2 = \eta K / (2 \gamma)$, we have
\begin{align}
    \frac{1}{r} \sum_{s=0}^{r-1} F(\vw_s) &\leq \frac{3 \|\vw_0\|^2}{2 \eta r} + \frac{3(\phi^2 + \log^2(K + \eta K \gamma^2 r))}{\eta \gamma^2 r} + \frac{3 \eta K^2}{8 \gamma^2 r} + \frac{2}{1 + \eta \gamma^2 r} \\
    &\leq \frac{14 \|\vw_0\|^2}{\eta \gamma^4 r} + \frac{12 \eta}{\gamma^4 r} + \frac{15 \log^2(K + \eta K \gamma^2 r)}{\eta \gamma^4 r} + \frac{3 \eta K^2}{8 \gamma^2 r} + \frac{26}{\eta \gamma^4 r} \\
    &\leq 26 \frac{\|\vw_0\|^2 + 1 + \log^2(K + \eta K \gamma^2 r) + \eta^2 K^2}{\eta \gamma^4 r}.
\end{align}
\end{proof}

\subsection{Proof of Theorem \ref{thm:stage2}} \label{app:stable_proofs}

\begin{lemma}[Restatement of Lemma \ref{lem:local_stability}] \label{lem:app_local_stability}
If $F(\vw_r) \leq 1/(4 \eta M)$ for some $r \geq 0$, then $F_m(\vw_{r,k}^m)$ is
decreasing in $k$ for every $m$.
\end{lemma}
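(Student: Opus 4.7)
The plan is to proceed by induction on $k \in \{0, 1, \ldots, K-1\}$, maintaining the stronger invariant $F_m(\vw_{r,k}^m) \leq 1/(4\eta)$ throughout; this is exactly the condition that both guarantees monotone descent of $F_m$ at each local step and propagates itself to the next iterate.

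For the base case, since $\vw_{r,0}^m = \vw_r$ and every $F_j$ is non-negative, I have $F_m(\vw_r) \leq M \cdot F(\vw_r) \leq 1/(4\eta)$ by the hypothesis $F(\vw_r) \leq 1/(4\eta M)$. For the inductive step, I would assume $F_m(\vw_{r,k}^m) \leq 1/(4\eta)$. The local update is $\vw_{r,k+1}^m - \vw_{r,k}^m = -\eta \nabla F_m(\vw_{r,k}^m)$, so applying Lemma \ref{lem:obj_grad_ub},
\[
\|\vw_{r,k+1}^m - \vw_{r,k}^m\| = \eta \|\nabla F_m(\vw_{r,k}^m)\| \leq \eta F_m(\vw_{r,k}^m) \leq 1/4 \leq 1,
\]
which verifies the movement hypothesis of the modified descent inequality (Lemma \ref{lem:descent}). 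Substituting the local GD update into Lemma \ref{lem:descent} then yields
\[
F_m(\vw_{r,k+1}^m) - F_m(\vw_{r,k}^m) \leq -\eta \|\nabla F_m(\vw_{r,k}^m)\|^2 \bigl(1 - 4\eta F_m(\vw_{r,k}^m)\bigr),
\]
and the invariant $F_m(\vw_{r,k}^m) \leq 1/(4\eta)$ makes the bracketed factor non-negative. Hence $F_m(\vw_{r,k+1}^m) \leq F_m(\vw_{r,k}^m) \leq 1/(4\eta)$, which simultaneously proves the monotone descent asserted by the lemma and preserves the invariant needed to continue the induction.

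The main thing to keep track of is the coupling between the invariant $F_m(\vw_{r,k}^m) \leq 1/(4\eta)$ and the movement bound required by Lemma \ref{lem:descent}. A priori, a large $\eta$ could push $\eta \|\nabla F_m\|$ past $1$ and invalidate Lemma \ref{lem:descent}, but this is ruled out by the structural inequality $\|\nabla F_m(\vw)\| \leq F_m(\vw)$ specific to the logistic loss (Lemma \ref{lem:obj_grad_ub}), which forces the displacement $\eta \|\nabla F_m\|$ to stay small whenever $F_m$ itself is small. The hypothesis $F(\vw_r) \leq 1/(4\eta M)$ is calibrated precisely to endow each local objective with the $1/(4\eta)$ headroom needed to initiate this self-sustaining cycle across all $K$ local steps.
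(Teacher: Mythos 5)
Your proof is correct and follows essentially the same route as the paper's: induction on $k$, using $\|\nabla F_m(\vw)\| \leq F_m(\vw)$ to verify the movement condition of Lemma \ref{lem:descent}, then applying that descent inequality to get the factor $1 - 4\eta F_m(\vw_{r,k}^m) \geq 0$. The only cosmetic difference is that the paper carries the invariant $F_m(\vw_{r,k}^m) \leq F_m(\vw_r)$ while you carry $F_m(\vw_{r,k}^m) \leq 1/(4\eta)$; these are interchangeable here and yield the same argument.
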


\begin{proof}
Recall that for each $r, m$, the sequence of local steps $\{\vw_{r,k}^m\}_k$ is
generated by GD for a single-machine logistic regression problem. To show decrease of
the objective, we use the modified descent inequality from Lemma \ref{lem:descent}.

We want to show that $F_m(\vw_{r,k+1}^m) \leq F_m(\vw_{r,k}^m)$ for every $k$. To do
this, we prove $F_m(\vw_{r,k}^m) \leq F_m(\vw_r)$ by induction on $k$. Clearly it holds
for $k=0$, so suppose that it holds for some $0 \leq k < K$. Then
\begin{equation}
    \|\vw_{r,k+1}^m - \vw_{r,k}^m\| = \eta \|\nabla F_m(\vw_{r,k}^m)\| \Eqmark{i}{\leq} \eta F_m(\vw_{r,k}^m) \Eqmark{ii}{\leq} \eta F_m(\vw_r) \Eqmark{iii}{\leq} 1/4,
\end{equation}
where $(i)$ uses Lemma \ref{lem:obj_grad_ub}, $(ii)$ uses the inductive hypothesis, and
$(iii)$ uses $F_m(\vw_r) \leq M F(\vw_r) \leq 1/(4 \eta)$. This bound on
$\|\vw_{r,k+1}^m - \vw_{r,k}^m\|$ shows that the condition of Lemma \ref{lem:descent} is
satisfied, so
\begin{align}
    F_m(\vw_{r,k+1}^m) - F_m(\vw_{r,k}^m) &\leq \langle \nabla F_m(\vw_{r,k}^m), \vw_{r,k+1}^m - \vw_{r,k}^m \rangle + 4 F_m(\vw_{r,k}^m) \|\vw_{r,k+1}^m - \vw_{r,k}^m\|^2 \\
    &\leq -\eta \left\| \nabla F_m(\vw_{r,k}^m) \right\|^2 + 4 \eta^2 F_m(\vw_{r,k}^m) \left\| \nabla F_m(\vw_{r,k}^m) \right\|^2 \\
    &\leq -\eta \left( 1 - 4 \eta F_m(\vw_{r,k}^m) \right) \left\| \nabla F_m(\vw_{r,k}^m) \right\|^2 \\
    &\Eqmark{i}{\leq} 0, \label{eq:local_stability_inter}
\end{align}
where $(i)$ uses the inductive hypothesis $F_m(\vw_{r,k}^m) \leq F_m(\vw_r) \leq 1/(4
\eta)$. This completes the induction, so that $F_m(\vw_{r,k}^m) \leq F_m(\vw_r)$.
Additionally, \Eqref{eq:local_stability_inter} shows that $F_m(\vw_{r,k}^m)$ is
decreasing in $k$.
\end{proof}

\begin{lemma}[Restatement of Lemma \ref{lem:bounded_movement}] \label{lem:app_bounded_movement}
If $F(\vw_r) \leq 1/(\eta K M)$ for some $r \geq 0$, then $\|\vw_{r,k}^m - \vw_r\| \leq
1$ for every $m \in [M], k \in \{0, \ldots, K-1\}$.
\end{lemma}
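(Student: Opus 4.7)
The plan is to write the displacement $\vw_{r,k}^m - \vw_r$ as a telescoping sum of local gradient updates, bound the norm of each local gradient by the corresponding local function value (Lemma \ref{lem:obj_grad_ub}), and then invoke the monotonicity of $F_m$ along the local trajectory (Lemma \ref{lem:local_stability}) to replace every $F_m(\vw_{r,j}^m)$ by $F_m(\vw_r)$. The resulting sum collapses to something proportional to $\eta k F_m(\vw_r)$, which we bound using $F_m \leq M F$ and the hypothesis $F(\vw_r) \leq 1/(\eta K M)$.

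Concretely, unrolling the local update rule gives
\begin{equation}
    \vw_{r,k}^m - \vw_r = -\eta \sum_{j=0}^{k-1} \nabla F_m(\vw_{r,j}^m),
\end{equation}
so by the triangle inequality and Lemma \ref{lem:obj_grad_ub},
\begin{equation}
    \|\vw_{r,k}^m - \vw_r\| \leq \eta \sum_{j=0}^{k-1} \|\nabla F_m(\vw_{r,j}^m)\| \leq \eta \sum_{j=0}^{k-1} F_m(\vw_{r,j}^m).
\end{equation}
Assuming for the moment that Lemma \ref{lem:local_stability} applies, every $F_m(\vw_{r,j}^m)$ is bounded by $F_m(\vw_r) \leq M F(\vw_r) \leq 1/(\eta K)$, and the sum telescopes to $\eta k /(\eta K) = k/K \leq 1$.

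The main obstacle is reconciling the hypotheses: Lemma \ref{lem:local_stability} needs $F(\vw_r) \leq 1/(4\eta M)$, whereas we only assume $F(\vw_r) \leq 1/(\eta K M)$, and these coincide only for $K \geq 4$. When $K \geq 4$ the proof goes through immediately as above. For $K \in \{1,2,3\}$ the result must be verified by a short direct computation: for $K = 1$ the claim is trivial since $\vw_{r,0}^m = \vw_r$, and for $K \in \{2,3\}$ one can iterate the modified descent inequality (Lemma \ref{lem:descent}) at most twice to control $F_m(\vw_{r,j}^m)$ (which can grow by at most a constant factor over so few steps, since $\|\vw_{r,j}^m - \vw_r\|$ remains much less than $1$ along the way), and then use the same telescoping bound. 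In either regime we conclude $\|\vw_{r,k}^m - \vw_r\| \leq 1$ for all $m \in [M]$ and $k \in \{0,\ldots,K-1\}$, which is the desired statement.
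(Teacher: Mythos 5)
Your proof is correct and essentially identical to the paper's: unroll the local updates, bound each gradient norm by $F_m(\vw_{r,t}^m)$ via Lemma \ref{lem:obj_grad_ub}, and use the monotonicity from Lemma \ref{lem:local_stability} together with $F_m(\vw_r) \leq M F(\vw_r) \leq 1/(\eta K)$ to collapse the sum to $\eta k F_m(\vw_r) \leq 1$. The hypothesis mismatch you flag is real --- the paper's own proof invokes Lemma \ref{lem:local_stability} without noting that $F(\vw_r) \leq 1/(\eta K M)$ implies its hypothesis $F(\vw_r) \leq 1/(4\eta M)$ only when $K \geq 4$ --- so your separate treatment of $K \in \{1,2,3\}$ (trivial for $K=1$, a short direct computation otherwise) is a small but genuine tightening of the presentation; in the lemma's actual use within Theorem \ref{thm:stage2}, the stronger condition $F(\vw_r) \leq \gamma/(70 \eta K M)$ is in force, which satisfies both hypotheses simultaneously, so nothing downstream is affected.
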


\begin{proof}
To bound the per-round movement $\|\vw_{r,k}^m - \vw_r\|$, we simply use the property
$\|\nabla F_m(\vw)\| \leq F_m(\vw)$ from Lemma \ref{lem:obj_grad_ub}, combined with the
fact that the local loss is decreasing during the round from Lemma
\ref{lem:local_stability}. Specifically,
\begin{align}
    \|\vw_{r,k}^m - \vw_r\| &= \eta \left\| \sum_{t=0}^{k-1} \nabla F_m(\vw_{r,t}^m) \right\| = \eta \sum_{t=0}^{k-1} \left\| \nabla F_m(\vw_{r,t}^m) \right\| \\
    &\Eqmark{i}{\leq} \eta \sum_{t=0}^{k-1} F_m(\vw_{r,t}^m) \Eqmark{ii}{\leq} \eta K F_m(\vw_r) \Eqmark{iii}{\leq} 1,
\end{align}
where $(i)$ uses Lemma \ref{lem:obj_grad_ub}, $(ii)$ uses $F_m(\vw_{r,t}^m) \leq
F_m(\vw_r)$ from Lemma \ref{lem:local_stability}, and $(iii)$ uses the condition
$F_m(\vw_r) \leq M F(\vw_r) \leq 1/(\eta K)$.
\end{proof}

\begin{lemma}[Restatement of Lemma \ref{lem:update_bias_ub}] \label{lem:app_update_bias_ub}
If $F(\vw_r) \leq \gamma/(70 \eta K M)$, then $\|\vb_r\| \leq \frac{1}{5} \|\nabla F(\vw_r)\|$.
\end{lemma}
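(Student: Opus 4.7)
The plan is to combine the inequalities already laid out in the proof overview (in particular \Eqref{eq:sketch_hetero_inter}) with the two stability lemmas, and then convert the resulting bound on $\|\vb_r\|$ into one involving $\|\nabla F(\vw_r)\|$ via Lemma \ref{lem:obj_grad_lb}. First I would write
\begin{equation*}
    \|\vb_r\| \leq \frac{1}{MK} \sum_{m=1}^M \sum_{k=0}^{K-1} \|\nabla F_m(\vw_{r,k}^m) - \nabla F_m(\vw_r)\|,
\end{equation*}
and bound each summand using the mean value theorem together with $\|\nabla^2 F_m(\vw)\| \leq F_m(\vw)$ (Lemma \ref{lem:obj_grad_ub}) and convexity of $F_m$, exactly as in \Eqref{eq:sketch_hetero_inter}:
\begin{equation*}
    \|\nabla F_m(\vw_{r,k}^m) - \nabla F_m(\vw_r)\| \leq \max(F_m(\vw_r), F_m(\vw_{r,k}^m)) \cdot \|\vw_{r,k}^m - \vw_r\|.
\end{equation*}

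Next, since the hypothesis $F(\vw_r) \leq \gamma/(70 \eta K M)$ implies both $F(\vw_r) \leq 1/(4\eta M)$ and $F(\vw_r) \leq 1/(\eta K M)$ (noting $\gamma \leq 1$), both Lemma \ref{lem:local_stability} and Lemma \ref{lem:bounded_movement} apply. The former gives $F_m(\vw_{r,k}^m) \leq F_m(\vw_r)$, collapsing the max. For the displacement, I would use the quantitative version inside the proof of Lemma \ref{lem:bounded_movement}, namely $\|\vw_{r,k}^m - \vw_r\| \leq \eta K F_m(\vw_r)$. Substituting back, each summand is at most $\eta K F_m(\vw_r)^2$, so
\begin{equation*}
    \|\vb_r\| \leq \frac{\eta K}{M} \sum_{m=1}^M F_m(\vw_r)^2 \leq \frac{\eta K}{M} \cdot (\max_m F_m(\vw_r)) \cdot \sum_{m=1}^M F_m(\vw_r) \leq \eta K M F(\vw_r)^2,
\end{equation*}
where the last step uses the trivial bound $\max_m F_m(\vw_r) \leq M F(\vw_r)$ (valid because each $F_m \geq 0$) together with $\sum_m F_m(\vw_r) = M F(\vw_r)$.

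Finally, I would invoke Lemma \ref{lem:obj_grad_lb} in the form $F(\vw_r) \leq 2\|\nabla F(\vw_r)\|/\gamma$, and use the hypothesis $\eta K M F(\vw_r) \leq \gamma/70$ once:
\begin{equation*}
    \|\vb_r\| \leq (\eta K M F(\vw_r)) \cdot F(\vw_r) \leq \frac{\gamma}{70} F(\vw_r) \leq \frac{1}{35}\|\nabla F(\vw_r)\| \leq \frac{1}{5}\|\nabla F(\vw_r)\|.
\end{equation*}
The only genuinely lossy step is bounding $\max_m F_m(\vw_r) \leq M F(\vw_r)$, which costs a factor of $M$ and is exactly what forces the hypothesis threshold to carry the extra $\gamma/M$ relative to the thresholds in Lemmas \ref{lem:local_stability} and \ref{lem:bounded_movement}; this is the obstacle highlighted in the paper's Comparison to Single-Machine Case subsection, and I do not expect it to be avoidable without a finer handle on the per-client losses $F_m(\vw_r)$.
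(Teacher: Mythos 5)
Your proposal is correct and follows essentially the same route as the paper: decompose $\|\vb_r\|$ by the triangle inequality, bound each gradient difference by $F_m(\vw_r)\|\vw_{r,k}^m-\vw_r\|$ using the Hessian--objective relation plus the monotonic decrease from Lemma \ref{lem:local_stability}, bound the displacement by $\eta K F_m(\vw_r)$ as in Lemma \ref{lem:bounded_movement}, collapse $\sum_m F_m(\vw_r)^2 \leq M^2 F(\vw_r)^2$, and finish with the hypothesis and Lemma \ref{lem:obj_grad_lb}. The only (cosmetic) difference is that you bound the Hessian on the segment via convexity and the endpoint values rather than invoking Lemma \ref{lem:grad_diff_ub}, which drops the paper's factor of $7$ and yields $\frac{1}{35}\|\nabla F(\vw_r)\|$ instead of $\frac{1}{5}\|\nabla F(\vw_r)\|$.
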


\begin{proof}
Our bound of $\|\vb_r\|$ is essentially a direct calculation that leverages Lemmas
\ref{lem:grad_diff_ub}, \ref{lem:obj_grad_ub}, and \ref{lem:local_stability}.
\begin{align}
    \|\vb_r\| &= \left\| \frac{1}{MK} \sum_{m=1}^M \sum_{k=0}^{K-1} (\nabla F_m(\vw_{r,k}^m) - \nabla F_m(\vw_r)) \right\| \leq \frac{1}{MK} \sum_{m=1}^M \sum_{k=0}^{K-1} \left\| \nabla F_m(\vw_{r,k}^m) - \nabla F_m(\vw_r) \right\| \\
    &\Eqmark{i}{\leq} \frac{1}{MK} \sum_{m=1}^M \sum_{k=0}^{K-1} 7 F_m(\vw_r) \|\vw_{r,k}^m - \vw_r\| = \frac{7}{MK} \sum_{m=1}^M F_m(\vw_r) \sum_{k=0}^{K-1} \left\| \sum_{t=0}^{k-1} \eta \nabla F_m(\vw_{r,t}^m) \right\| \\
    &\leq \frac{7 \eta}{MK} \sum_{m=1}^M F_m(\vw_r) \sum_{k=0}^{K-1} \sum_{t=0}^{k-1} \left\| \nabla F_m(\vw_{r,t}^m) \right\| \Eqmark{ii}{\leq} \frac{7 \eta}{MK} \sum_{m=1}^M F_m(\vw_r) \sum_{k=0}^{K-1} \sum_{t=0}^{k-1} F_m(\vw_{r,t}^m) \\
    &\Eqmark{iii}{\leq} \frac{7 \eta K}{M} \sum_{m=1}^M F_m(\vw_r)^2 \leq \frac{7 \eta K}{M} \left( \sum_{m=1}^M F_m(\vw_r) \right)^2 = 7 \eta K M F(\vw_r)^2 \\
    &\Eqmark{iv}{\leq} \frac{\gamma}{10} F(\vw_r) \Eqmark{v}{\leq} \frac{1}{5} \|\nabla F(\vw_r)\|, \label{eq:update_bias_ub_inter}
\end{align}
where $(i)$ uses Lemma \ref{lem:grad_diff_ub} to bound the change in the local gradient
during the round, $(ii)$ applies $\|\nabla F_m(\vw)\| \leq F_m(\vw)$ from Lemma
\ref{lem:obj_grad_ub}, $(iii)$ uses the fact that $F_m(\vw_{r,t}^m)$ is decreasing in
$t$ (Lemma \ref{lem:local_stability}), $(iv)$ uses the assumption $F(\vw_r) \leq
\gamma/(70 \eta KM)$, and $(v)$ uses $F(\vw) \leq \frac{2}{\gamma} \|\nabla F(\vw)\|$
from Lemma \ref{lem:obj_grad_lb}.
\end{proof}

\begin{lemma} \label{lem:app_transition_time}
There exists some $r \leq \tau$ such that $F(\vw_r) \leq \frac{\gamma}{70 \eta K M}$.
\end{lemma}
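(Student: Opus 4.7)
The plan is to adapt the gradient potential argument of \citet{wu2024large} to the Local GD setting, following the proof overview and exploiting the lower bound $\beta_{r,i}^m \ge 1/K$ already used in the proof of Theorem \ref{thm:stage1}. The key idea is that projecting the Local GD update onto the max-margin direction $\vw_*$ and using $\beta_{r,i}^m \ge 1/K$ reduces the per-round progress inequality to one resembling a single GD step, with the factor of $K$ local steps absorbed by the $\beta$ bound.

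First, taking the inner product of the decomposed Local GD update \Eqref{eq:local_gd_update_decomp} with $\vw_*$, and using $\langle \vw_*, \vx_i^m \rangle \ge \gamma$ together with $\beta_{r,i}^m \ge 1/K$, I obtain the per-round progress bound
\[
\langle \vw_*, \vw_{r+1} - \vw_r \rangle \ge \frac{\eta \gamma}{Mn} \sum_{m,i} |\ell'(b_{r,i}^m)| \ge \eta \gamma \|\nabla F(\vw_r)\|,
\]
where the last step uses $\|\vx_i^m\| \le 1$ and the triangle inequality. Summing from $s = 0$ to $r-1$ and using $\|\vw_*\| = 1$ to bound $\langle \vw_*, \vw_r - \vw_0 \rangle \le \|\vw_r\| + \|\vw_0\|$ gives the gradient potential bound
\[
\sum_{s=0}^{r-1} \|\nabla F(\vw_s)\| \le \frac{\|\vw_r\| + \|\vw_0\|}{\eta \gamma}.
\]

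To convert this into an objective bound, I apply Lemma \ref{lem:obj_grad_lb} in the form $F(\vw) \le \tfrac{2}{\gamma}\|\nabla F(\vw)\|$, giving $\sum_{s=0}^{r-1} F(\vw_s) \le 2(\|\vw_r\|+\|\vw_0\|)/(\eta\gamma^2)$, and then substitute Lemma \ref{lem:param_ub} to make the right-hand side explicit in $r$ with only logarithmic dependence. Arguing by contradiction, suppose $F(\vw_s) > \gamma/(70\eta KM)$ for every $s < \tau$; since $\gamma/(70\eta KM) \ge 2\psi$ in both branches of the definition of $\psi$, the left-hand side strictly exceeds $2\psi\tau$. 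Rearranging yields an inequality of the form $\tau \le C(\|\vw_\tau\|+\|\vw_0\|)/(\eta\gamma^2\psi)$ which, after substituting Lemma \ref{lem:param_ub}, reproduces (up to constants and a simple rearrangement inside the logarithm) the expression defining $\tau$, producing the desired contradiction.

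The main obstacle is the self-referential occurrence of $\tau$ inside the term $\log(1 + \eta\gamma^2 K\tau^2)$ coming from Lemma \ref{lem:param_ub}. I would resolve this either by a bootstrap argument (first obtain a crude polynomial bound on $\tau$, then substitute back into the logarithm) or by directly verifying that the stated $\tau$ satisfies the resulting implicit inequality; note that under the expected scaling $\tau \asymp 1/(\eta\gamma^2\psi)$ one has $\sqrt{\eta\gamma^2 K\tau^2} \asymp \sqrt{K}/(\sqrt{\eta}\gamma\psi)$, which is precisely the argument of the logarithm in the stated $\tau$, so the two log expressions match up to a factor of two. A secondary technicality is the two-branch definition of $\psi$ as a minimum, but since $2\psi \le \gamma/(70\eta KM)$ in either case, the same calculation covers both regimes without modification.
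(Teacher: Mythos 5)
Your overall strategy matches the paper's: project the round update onto $\vw_*$, use $\beta_{r,i}^m \geq 1/K$ to reduce to a single-GD-step progress bound, telescope, invoke Lemma \ref{lem:param_ub} to control $\|\vw_r\|$, and resolve the self-referential logarithm via the same mechanism as Lemma \ref{lem:linear_log_ineq}. However, there is one genuine gap in the step where you convert the gradient potential into an objective bound.

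You apply Lemma \ref{lem:obj_grad_lb} in the form $F(\vw_s) \leq \tfrac{2}{\gamma}\|\nabla F(\vw_s)\|$ to \emph{every} iterate $s = 0, \ldots, r-1$ along the trajectory. But Lemma \ref{lem:obj_grad_lb} is conditional: it requires $\langle \vw, \vx_i^m \rangle \geq 0$ for all $m, i$, i.e.\ that $\vw$ correctly classifies every data point. During the unstable phase (and certainly at $\vw_0 = \vzero$) this hypothesis fails, and without it the inequality is false in general, since gradients of misclassified points can cancel in the sum $\nabla F$ while their losses accumulate. This is precisely why the paper works with the potential $G(\vw) = \frac{1}{Mn}\sum_{m,i}|\ell'(\langle \vw, \vx_i^m\rangle)|$ (the quantity appearing in your first display, which dominates $\|\nabla F(\vw)\|$ but, unlike it, controls each summand individually) all the way through the telescoping, and only converts $G$ to $F$ at the single best round $r_0 = \argmin_{s<\tau} G(\vw_s)$. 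At that round, the branch $\psi \leq \frac{1}{2Mn}$ of the definition of $\psi$ forces $|\ell'(\langle \vw_{r_0}, \vx_i^m\rangle)| \leq \tfrac12$ for each individual data point, hence $\langle \vw_{r_0}, \vx_i^m\rangle \geq 0$, and only then does $F(\vw_{r_0}) \leq 2G(\vw_{r_0}) \leq 2\psi$ follow (via $\log(1+e^{-z}) \leq e^{-z} \leq 2/(1+e^z)$, not via Lemma \ref{lem:obj_grad_lb}). Your proof is repaired by making exactly this modification: keep the sum in terms of $G(\vw_s)$ rather than $F(\vw_s)$, take the minimizing round, and use the $\tfrac{1}{2Mn}$ branch of $\psi$ to justify the final conversion. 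You correctly intuited that the two-branch definition of $\psi$ matters, but its actual role is to license this conversion at $r_0$, not merely to make both regimes satisfy $2\psi \leq \gamma/(70\eta KM)$.
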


\begin{proof}
We use a potential function argument inspired by Lemma 9 of \cite{wu2024large}.
Similarly to our proof of Theorem \ref{thm:stage1}, we have to account for the change in
the local gradient $\nabla F_m(\vw_{r,k}^m)$ during each round.

Define
\begin{equation}
    G_m(\vw) = \frac{1}{n} \sum_{i=1}^n |\ell'(\langle \vw, \vx_{m,i} \rangle)|,
\end{equation}
and $G(\vw) = \frac{1}{M} \sum_{m=1}^M G_m(\vw)$. Then for every $r \geq 0$,
\begin{align}
    \langle \vw_{r+1}, \vw_* \rangle &= \langle \vw_r, \vw_* \rangle + \langle \vw_{r+1} - \vw_r, \vw_* \rangle \\
    &= \langle \vw_r, \vw_* \rangle - \frac{\eta}{M} \sum_{m=1}^M \sum_{k=0}^{K-1} \langle \nabla F_m(\vw_{r,k}), \vw_* \rangle \\
    &= \langle \vw_r, \vw_* \rangle + \frac{\eta}{Mn} \sum_{m=1}^M \sum_{k=0}^{K-1} \sum_{i=1}^n |\ell'(\langle \vw_{r,k}^m, \vx_{m,i} \rangle)| \langle \vx_{m,i}, \vw_* \rangle \\
    &\geq \langle \vw_r, \vw_* \rangle + \frac{\eta \gamma}{Mn} \sum_{m=1}^M \sum_{k=0}^{K-1} \sum_{i=1}^n |\ell'(\langle \vw_{r,k}^m, \vx_{m,i} \rangle)| \\
    &= \langle \vw_r, \vw_* \rangle + \frac{\eta \gamma K}{Mn} \sum_{m=1}^M \sum_{i=1}^n \beta_{r,i}^m |\ell'(\langle \vw_{r,k}^m, \vx_{m,i} \rangle)| \langle \vx_{m,i}, \vw_* \rangle,
\end{align}
where $\beta_{r,i}^m := \frac{1}{K} \sum_{k=0}^{K-1}
\frac{|\ell'(b_{r,i,k}^m)|}{|\ell'(b_{r,i}^m)|}$. We can lower bound $\beta_{r,i}^m
\geq 1/K$ by ignoring all terms of the sum except the one corresponding to $k=0$. This
step was mentioned in our proof overview in Section \ref{sec:analysis}. See Lemma
\ref{lem:beta_ub} for a discussion of the tightness of this step. $\beta_{r,i}^m \geq
1/K$ implies
\begin{align}
    \langle \vw_{r+1}, \vw_* \rangle &\geq \langle \vw_r, \vw_* \rangle + \frac{\eta \gamma}{Mn} \sum_{m=1}^M \sum_{i=1}^n |\ell'(\langle \vw_r, \vx_{m,i} \rangle)| \\
    &= \langle \vw_r, \vw_* \rangle + \frac{\eta \gamma}{M} \sum_{m=1}^M G_m(\vw_r) \\
    &= \langle \vw_r, \vw_* \rangle + \eta \gamma G(\vw_r),
\end{align}
Rearraging and averaging over $r$,
\begin{align}
    \frac{1}{r} \sum_{s=0}^{r-1} G(\vw_s) &\leq \frac{\langle \vw_r, \vw_* \rangle - \langle \vw_0, \vw_* \rangle}{\eta \gamma r} \\
    &\leq \frac{\|\vw_r - \vw_0\|}{\eta \gamma r} \\
    &\Eqmark{i}{\leq} \frac{2 \gamma \|\vw_0\| + \sqrt{2} + \eta + \log(1 + \eta \gamma^2 K r^2)}{\eta \gamma^2 r}, \label{eq:potential_inter}
\end{align}
where $(i)$ uses Lemma \ref{lem:param_ub} together with $\|\vw_r - \vw_0\| \leq
\|\vw_r\| + \|\vw_0\|$. Recall that $\psi = \min \left( \frac{\gamma}{140 \eta K M},
\frac{1}{2 Mn} \right)$; we want to the RHS of \Eqref{eq:potential_inter} to be smaller
than $\psi$. So we want
\begin{align}
    \psi &\geq \frac{2 \gamma \|\vw_0\| + \sqrt{2} + \eta + \log(1 + \eta \gamma^2 K r^2)}{\eta \gamma^2 r} \\
    r &\geq \frac{2 \gamma \|\vw_0\| + \sqrt{2} + \eta + \log(1 + \eta \gamma^2 K r^2)}{\eta \gamma^2 \psi}. \label{eq:potential_inter_2}
\end{align}
Applying Lemma \ref{lem:linear_log_ineq} with
\begin{align}
    A = \frac{2 \gamma \|\vw_0\| + \sqrt{2} + \eta}{\eta \gamma^2 \psi}, \quad B = \frac{1}{\eta \gamma^2 \psi}, \quad C = \eta \gamma^2 K,
\end{align}
\Eqref{eq:potential_inter_2} is satisfied when
\begin{align}
    r \geq \tau := \frac{1}{\eta \gamma^2 \psi} \left( 4 \gamma \|\vw_0\| + 2 \sqrt{2} + 2 \eta + \log \left( 1 + \frac{\sqrt{K}}{\sqrt{\eta} \gamma \psi} \right) \right).
\end{align}
In particular, \Eqref{eq:potential_inter_2} is satisfied with $r = \tau$. So, letting $r_0
= \argmin_{0 \leq s < \tau} G(\vw_s)$,
\begin{equation}
    G(\vw_{r_0}) \leq \frac{1}{\tau} \sum_{s=0}^{\tau-1} G(\vw_s) \leq \psi.
\end{equation}
We can now bound $F(\vw_{r_0})$ in terms of $G(\vw_{r_0})$. First, since
$G(\vw_{r_0}) \leq \frac{1}{2 Mn}$, we have for each $m \in [M], i \in [n]$,
\begin{equation}
    \frac{1}{Mn} |\ell'(\langle \vw_{r_0}, \vx_{m,i} \rangle)| \leq \frac{1}{Mn} \sum_{m=1}^M \sum_{i=1}^n |\ell'(\langle \vw_{r_0}, \vx_{m,i} \rangle)| = G(\vw_{r_0}) \leq \frac{1}{2 Mn},
\end{equation}
so
\begin{align}
    |\ell'(\langle \vw_{r_0}, \vx_{m,i} \rangle)| &\leq \frac{1}{2} \\
    \frac{1}{1 + \exp(\langle \vw_{r_0}, \vx_{m,i} \rangle)} &\leq \frac{1}{2} \\
    \langle \vw_{r_0}, \vx_{m,i} \rangle &\geq 0,
\end{align}
so that every point is classified correctly by $\vw_{r_0}$. Therefore
\begin{align}
    F(\vw_{r_0}) &= \frac{1}{Mn} \sum_{m=1}^M \sum_{i=1}^n \log(1 + \exp(-\langle \vw_{r_0}, \vx_{m,i} \rangle)) \\
    &\leq \frac{1}{Mn} \sum_{m=1}^M \sum_{i=1}^n \exp(-\langle \vw_{r_0}, \vx_{m,i} \rangle) \\
    &\Eqmark{i}{\leq} \frac{1}{Mn} \sum_{m=1}^M \sum_{i=1}^n \frac{2}{1 + \exp(\langle \vw_{r_0}, \vx_{m,i} \rangle)} \\
    &\leq 2 G(\vw_{r_0}) \leq 2 \psi = \min \left( \frac{\gamma}{70 \eta KM}, \frac{1}{Mn} \right),
\end{align}
where $(i)$ uses $1 \leq \exp(\langle \vw_{r_0}, \vx_{m,i} \rangle)$.
\end{proof}

\begin{theorem}[Restatement of Theorem \ref{thm:stage2}] \label{thm:app_stage2}
Denote $\psi = \min \left( \frac{\gamma}{140 \eta KM}, \frac{1}{2Mn} \right)$ and
\begin{equation}
    \tau = \frac{4 \gamma \|\vw_0\| + 2 \sqrt{2} + 2 \eta + \log \left( 1 + \frac{\sqrt{K}}{\sqrt{\eta} \gamma \psi} \right)}{\eta \gamma^2 \psi}.
\end{equation}
For every $r \geq \tau$, Local GD satisfies
\begin{equation}
    F(\vw_r) \leq \frac{16}{\eta \gamma^2 K (r-\tau)}.
\end{equation}
\end{theorem}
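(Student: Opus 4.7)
The plan is to combine Lemma \ref{lem:transition_time} with a monotone descent argument that activates once the objective enters the small-loss regime. I would first invoke Lemma \ref{lem:transition_time} to obtain a round $r_0 \leq \tau$ with $F(\vw_{r_0}) \leq \gamma/(70 \eta K M)$. The goal is then to show that this threshold condition is preserved for all subsequent rounds, which lets me apply Lemmas \ref{lem:local_stability}, \ref{lem:bounded_movement}, and \ref{lem:update_bias_ub} at every round $r \geq r_0$ to drive monotonic decrease of $F(\vw_r)$ at a quadratic rate.

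Next, I would execute the per-round descent step. Writing the round update as $\vw_{r+1} - \vw_r = -\eta K (\nabla F(\vw_r) + \vb_r)$, Lemma \ref{lem:bounded_movement} gives $\|\vw_{r+1} - \vw_r\| \leq 1$, validating the application of the modified descent inequality (Lemma \ref{lem:descent}) to $F$. Substituting the update, bounding $\langle \nabla F(\vw_r), \vb_r \rangle \geq -\|\nabla F(\vw_r)\|\, \|\vb_r\|$ and $\|\nabla F(\vw_r) + \vb_r\|^2 \leq (6/5)^2 \|\nabla F(\vw_r)\|^2$ via Lemma \ref{lem:update_bias_ub}, and absorbing the quadratic error term using $\eta K F(\vw_r) \leq \gamma/(70 M) \ll 1$, should yield
\begin{equation*}
F(\vw_{r+1}) - F(\vw_r) \leq -\tfrac{1}{4}\, \eta K \, \|\nabla F(\vw_r)\|^2.
\end{equation*}
This inequality both establishes $F(\vw_{r+1}) \leq F(\vw_r)$, so the threshold condition $F(\vw_r) \leq \gamma/(70\eta KM)$ propagates inductively for all $r \geq r_0$, and provides the decrement needed for the rate.

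Finally, I would convert the descent inequality into a rate by applying $\|\nabla F(\vw)\| \geq \tfrac{\gamma}{2} F(\vw)$ from Lemma \ref{lem:obj_grad_lb}, producing the quadratic recursion $F(\vw_{r+1}) \leq F(\vw_r) - (\eta K \gamma^2/16) F(\vw_r)^2$. The standard inversion $1/F(\vw_{r+1}) \geq 1/F(\vw_r) + \eta K \gamma^2/16$ (valid since $(\eta K \gamma^2/16) F(\vw_r) < 1$ under the threshold) then gives $F(\vw_r) \leq 16 / (\eta \gamma^2 K (r - r_0))$, and since $r_0 \leq \tau$ we recover the claim. The main technical obstacle is the constant bookkeeping: I need the bias contributions and the second-order term $4 F(\vw_r) \|\vw_{r+1} - \vw_r\|^2$ together to leave a net descent coefficient no worse than $1/4$, so that the $1/16$ constant emerging from the $\gamma/2$ gradient lower bound matches the $16$ in the final numerator. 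Verifying that the threshold $\gamma/(70 \eta K M)$ is tight enough to enforce exactly this balance (and not a weaker threshold which would worsen the final constant or force a recursive adjustment of $\tau$) will require careful inequality chasing.
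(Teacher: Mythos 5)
Your proposal is correct and follows essentially the same route as the paper's proof: invoke Lemma \ref{lem:transition_time} to find $r_0 \leq \tau$ below the threshold, propagate the threshold inductively via the modified descent inequality (Lemma \ref{lem:descent}) combined with Lemmas \ref{lem:bounded_movement} and \ref{lem:update_bias_ub} to obtain $F(\vw_{r+1}) - F(\vw_r) \leq -\tfrac{1}{4}\eta K \|\nabla F(\vw_r)\|^2$, and then convert to a rate via Lemma \ref{lem:obj_grad_lb} and the standard inversion of the quadratic recursion. The only cosmetic difference is that you track the error terms against $\|\nabla F(\vw_r)\|$ with a $(6/5)^2$ factor while the paper keeps them in terms of $\|\nabla F(\vw_r) + \vb_r\|$; the constants work out the same either way.
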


\begin{proof}
The proof of this theorem has a similar structure as that of Lemma
\ref{lem:local_stability}. When the loss $F(\vw_s)$ is small, the total movement
$\|\vw_{s+1} - \vw_s\|$ can be bounded (Lemma \ref{lem:bounded_movement}); when the
movement is bounded, we can apply a modified descent inequality (Lemma
\ref{lem:descent}), which shows decrease of the loss when $F(\vw_s)$ is small. The main
difference compared to Lemma \ref{lem:local_stability} is that the update $\vw_{s+1} -
\vw_s$ is not necessarily parallel with the gradient $\nabla F(\vw_s)$. However, Lemma
\ref{lem:update_bias_ub} shows that the magnitude of this bias is negligible compared to
the magnitude of the gradient. Finally, Lemma \ref{lem:transition_time} implies that the
conditions of these lemmas (that $F(\vw_r)$ is below some threshold) are met for some $r
\leq \tau$. We execute this argument below.

By Lemma \ref{lem:transition_time}, there exists some $r_0 \leq \tau$ such that
$F(\vw_{r_0}) \leq \frac{\gamma}{70 \eta KM}$. We will prove $F(\vw_r) \leq
F(\vw_{r_0})$ for all $r \geq r_0$ by induction. Clearly it holds for $r = r_0$, so
suppose it holds for some $r \geq r_0$. Notice that the condition $\|\vw_{r+1} - \vw_r\|
\leq 1$ of Lemma \ref{lem:descent} is satisfied, since
\begin{equation}
    \|\vw_{r+1} - \vw_r\| = \left\| \frac{1}{M} \sum_{m=1}^M \vw_{r,K}^m - \vw_r \right\| \leq \frac{1}{M} \sum_{m=1}^M \left\| \vw_{r,K}^m - \vw_r \right\| \Eqmark{i}{\leq} 1,
\end{equation}
where $(i)$ uses Lemma \ref{lem:bounded_movement}. Recall that $\vw_{r+1} - \vw_r =
-\eta K (\nabla F(\vw_r) + \vb_r)$. By applying Lemma \ref{lem:descent}:
\begin{align}
    &F(\vw_{r+1}) - F(\vw_r) \\
    &\quad\leq \left\langle \nabla F(\vw_r), \vw_{r+1} - \vw_r \right\rangle + 4 F(\vw_r) \|\vw_{r+1} - \vw_r\|^2 \\
    &\quad= -\eta K \left\langle \nabla F(\vw_r), \nabla F(\vw_r) + \vb_r \right\rangle + 4 \eta^2 K^2 F(\vw_r) \left\| \nabla F(\vw_r) + \vb_r \right\|^2 \\
    &\quad= -\eta K \left\| \nabla F(\vw_r) + \vb_r \right\|^2 + \eta K \left\langle \vb_r, \nabla F(\vw_r) + \vb_r \right\rangle + 4 \eta^2 K^2 F(\vw_r) \left\| \nabla F(\vw_r) + \vb_r \right\|^2 \\
    &\quad= -\eta K \left( 1 - 4 \eta K F(\vw_r) \right) \left\| \nabla F(\vw_r) + \vb_r \right\|^2 + \eta K \left\langle \vb_r, \nabla F(\vw_r) + \vb_r \right\rangle \\
    &\quad\leq -\eta K \left( 1 - 4 \eta K F(\vw_r) \right) \left\| \nabla F(\vw_r) + \vb_r \right\|^2 + \eta K \|\vb_r\| \left\| \nabla F(\vw_r) + \vb_r \right\| \label{eq:stage2_inter}
\end{align}
By Lemma \ref{lem:update_bias_ub}, we have $\|\vb_r\| \leq \frac{1}{5} \|\nabla F(\vw_r)\|$. Therefore
\begin{equation}
    \|\nabla F(\vw_r) + \vb_r\| \geq \|\nabla F(\vw_r)\| - \|\vb_r\| \geq 4 \|\vb_r\|,
\end{equation}
so $\|\vb_r\| \leq \|\nabla F(\vw_r) + \vb_r\|/4$. Plugging this back into
\Eqref{eq:stage2_inter},
\begin{align}
    F(\vw_{r+1}) - F(\vw_r) &\leq -\eta K \left( 1 - 4 \eta K F(\vw_r) - \frac{1}{4} \right) \left\| \nabla F(\vw_r) + \vb_r \right\|^2 \\
    &\Eqmark{i}{\leq} -\frac{1}{2} \eta K \left\| \nabla F(\vw_r) + \vb_r \right\|^2 \\
    &\Eqmark{ii}{\leq} -\frac{1}{4} \eta K \left\| \nabla F(\vw_r) \right\|^2 \\
    &\Eqmark{iii}{\leq} -\frac{1}{16} \eta \gamma^2 K F(\vw_r)^2, \label{eq:stage2_inter2}
\end{align}
where $(i)$ uses the condition $F(\vw_r) \leq \gamma/(70 \eta KM)$, $(ii)$ uses
\begin{equation}
    \|\nabla F(\vw_r) + \vb_r\| \geq \|\nabla F(\vw_r)\| - \|\vb_r\| \geq \frac{4}{5} \|\nabla F(\vw_r)\|,
\end{equation}
and $(iii)$ uses $\|\nabla F(\vw)\| \geq \frac{\gamma}{2} F(\vw)$ from Lemma
\ref{lem:obj_grad_lb}. \Eqref{eq:stage2_inter} completes the induction, so $F(\vw_r)
\leq F(\vw_{r_0})$ for all $r \geq r_0$. Further, \Eqref{eq:stage2_inter} holds for all
$r \geq r_0$, so we can unroll it to get an upper bound on $F(\vw_r)$. Diving both sides
of \Eqref{eq:stage2_inter} by $F(\vw_r) F(\vw_{r+1})$,
\begin{align}
    \frac{1}{F(\vw_r)} - \frac{1}{F(\vw_{r+1})} &\leq -\frac{1}{16} \eta \gamma^2 K \frac{F(\vw_r)}{F(\vw_{r+1})} \\
    \frac{1}{F(\vw_{r+1})} &\geq \frac{1}{F(\vw_r)} + \frac{1}{16} \eta \gamma^2 K \frac{F(\vw_r)}{F(\vw_{r+1})} \\
    \frac{1}{F(\vw_{r+1})} &\Eqmark{i}{\geq} \frac{1}{F(\vw_r)} + \frac{1}{16} \eta \gamma^2 K.
\end{align}
Unrolling from $r$ to $r_0$,
\begin{equation}
    \frac{1}{F(\vw_r)} \geq \frac{1}{F(\vw_{r_0})} + \frac{1}{16} \eta \gamma^2 K(r-r_0) \geq \frac{1}{16} \eta \gamma^2 K(r-r_0),
\end{equation}
so
\begin{equation}
    F(\vw_r) \leq \frac{16}{\eta \gamma^2 K(r-r_0)}.
\end{equation}
Recall that $r_0 \leq \tau$, so $r-r_0 \geq r-\tau$, and finally
\begin{equation}
    F(\vw_r) \leq \frac{16}{\eta \gamma^2 K (r-\tau)}.
\end{equation}
\end{proof}

\subsection{Proof of Corollary \ref{cor:final_error}} \label{app:corollary_proof}

\begin{corollary}[Restatement of Corollary \ref{cor:final_error}] \label{cor:app_final_error}
Suppose $R \geq \widetilde{\Omega} \left( \max \left( \frac{Mn}{\gamma^2},
\frac{KM}{\gamma^3} \right) \right)$. With $\vw_0 = \vzero$, $\eta \geq 1$, and $\eta K
= \widetilde{\Theta}(\frac{\gamma^3 R}{M})$, Local GD satisfies
\begin{equation}
    F(\vw_R) \leq \widetilde{\mathcal{O}} \left( \frac{M}{\gamma^5 R^2} \right).
\end{equation}
\end{corollary}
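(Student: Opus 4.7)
The plan is to derive this corollary by applying Theorem~\ref{thm:stage2} at $r = R$; the only real work is to verify that under the stated hypotheses one has $R \geq \tau$, and in fact $R - \tau \geq R/2$, so that the $R^{-2}$ dependence survives and the promised rate is achieved.

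First I would simplify $\psi$. With $\eta K = \widetilde{\Theta}(\gamma^3 R/M)$, the first argument of the $\min$ defining $\psi$ becomes $\gamma/(140 \eta KM) = \widetilde{\Theta}(1/(\gamma^2 R))$, and the hypothesis $R \geq \widetilde{\Omega}(Mn/\gamma^2)$ guarantees this is at most $1/(2Mn)$, so $\psi = \gamma/(140 \eta KM)$. The key algebraic observation is then that
\[
\eta \gamma^2 \psi \;=\; \gamma^3/(140 KM),
\]
which is independent of $\eta$. Consequently the prefactor $1/(\eta \gamma^2 \psi)$ that appears throughout the definition of $\tau$ is simply $140 KM/\gamma^3$.

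Next I would bound $\tau$. Setting $\vw_0 = \vzero$,
\[
\tau \;=\; \bigl(2\sqrt{2} + 2\eta + \log\bigl(1 + \sqrt{K}/(\sqrt{\eta}\gamma \psi)\bigr)\bigr)\cdot \frac{140 KM}{\gamma^3}.
\]
Since $\eta KM/\gamma^3 = \widetilde{\Theta}(R)$ straight from $\eta K = \widetilde{\Theta}(\gamma^3 R/M)$, the dominant term $2\eta \cdot 140 KM/\gamma^3$ is of order $R$; a direct substitution shows $\sqrt{K}/(\sqrt{\eta}\gamma\psi)$ is polynomial in $K, M, R, 1/\gamma$, so the log term is $\widetilde{O}(\log(KMR/\gamma))$ and its contribution to $\tau$ is again $\widetilde{O}(R)$ after multiplying by $140 KM/\gamma^3$ (using $\eta \geq 1$). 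Altogether $\tau = cR \cdot \widetilde{O}(1)$, where $c$ is the hidden constant in $\eta K = \widetilde{\Theta}(\gamma^3 R/M)$. Choosing $c$ sufficiently small---which is permitted since we are free to absorb polylogarithmic factors into the $\widetilde{\Theta}$---guarantees $\tau \leq R/2$. The second half of the hypothesis, $R \geq \widetilde{\Omega}(KM/\gamma^3)$, is exactly what forces $\eta = \gamma^3 R/(KM) \geq 1$.

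With $R - \tau \geq R/2$ verified, plugging into Theorem~\ref{thm:stage2} gives
\[
F(\vw_R) \;\leq\; \frac{16}{\eta \gamma^2 K (R - \tau)} \;\leq\; \frac{32}{\eta K \gamma^2 R} \;=\; \widetilde{\mathcal{O}}\bigl(M/(\gamma^5 R^2)\bigr),
\]
where the last step uses $\eta K = \widetilde{\Theta}(\gamma^3 R/M)$. The main obstacle is therefore not conceptual but one of bookkeeping: one must track the polylogarithmic dependence inside the $\log(1 + \sqrt{K}/(\sqrt{\eta}\gamma\psi))$ term carefully enough to confirm that a single choice of hidden constant in $\eta K = \widetilde{\Theta}(\gamma^3 R/M)$ simultaneously keeps $\tau \leq R/2$ and preserves the target rate $\widetilde{\mathcal{O}}(M/(\gamma^5 R^2))$. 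The cancellation $\eta \gamma^2 \psi = \gamma^3/(140 KM)$ is what makes this tuning possible, since it decouples the transition time from $\eta$ and leaves only the combined quantity $\eta K$ to absorb the remaining $R$-dependence.
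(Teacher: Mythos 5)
Your proposal is correct and follows essentially the same route as the paper's proof: apply Theorem~\ref{thm:stage2} at $r=R$, verify $\tau \leq R/2$ under the stated hypotheses (the paper bounds $\tau = \widetilde{\mathcal{O}}((1+\eta)/(\eta\gamma^2\psi)) = \widetilde{\mathcal{O}}(1/(\gamma^2\psi)) = \widetilde{\mathcal{O}}(R)$ using $\eta \geq 1$ and the lower bound on $R$, which is the same computation as your observation that $\eta\gamma^2\psi = \gamma^3/(140KM)$), and then substitute $\eta K = \widetilde{\Theta}(\gamma^3 R/M)$ into $32/(\eta\gamma^2 KR)$. Your explicit term-by-term accounting of $\tau$ and the remark that $R \geq \widetilde{\Omega}(KM/\gamma^3)$ is what makes $\eta \geq 1$ compatible with the choice of $\eta K$ both match the paper.
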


\begin{proof}
With our choices of $\vw_0$, $\eta$, and $\eta K$, the transition time $\tau$ becomes
\begin{align}
    \tau &= \frac{2 \sqrt{2} + 2 \eta + \log \left( 1 + \frac{\sqrt{K}}{\sqrt{\eta} \gamma \psi} \right)}{\eta \gamma^2 \psi} \\
    &= \widetilde{\mathcal{O}} \left( \frac{1 + \eta}{\eta \gamma^2 \psi} \right) \\
    &\Eqmark{i}{=} \widetilde{\mathcal{O}} \left( \frac{1}{\gamma^2 \psi} \right) \\
    &\Eqmark{ii}{=} \widetilde{\mathcal{O}} \left( \max \left( \frac{\eta KM}{\gamma^3}, \frac{Mn}{\gamma^2} \right) \right) \\
    &\Eqmark{iii}{=} \widetilde{\mathcal{O}} \left( \max \left( R, \frac{Mn}{\gamma^2} \right) \right) \\
    &\Eqmark{iv}{=} \widetilde{\mathcal{O}}(R),
\end{align}
where $(i)$ uses $\eta \geq 1$, $(ii)$ uses the definition of $\psi$, $(iii)$ uses the
choice of $\eta K$, and $(iv)$ uses the condition
\begin{equation} \label{eq:corollary_R_req}
    R \geq \widetilde{\Omega} \left( \frac{Mn}{\gamma^2} \right).
\end{equation}
Therefore, we can ensure that $R \geq 2 \tau$ with the appropriate choice of
constant/logarithmic multiplicative factors on the RHS of \Eqref{eq:corollary_R_req}.
Since $R \geq \tau$, Theorem \ref{thm:stage2} implies
\begin{align}
    F(\vw_r) &\leq \frac{16}{\eta \gamma^2 K (R - \tau)} \\
    &\Eqmark{i}{\leq} \frac{32}{\eta \gamma^2 KR} \\
    &\Eqmark{ii}{\leq} \widetilde{\mathcal{O}} \left( \frac{M}{\gamma^5 R^2} \right),
\end{align}
where $(i)$ uses $R - \tau \geq R/2$, since $R \geq 2 \tau$, and $(ii)$ uses the choice
$\eta K = \widetilde{\Theta} \left( \frac{\gamma^3 R}{M} \right)$. Note that the
condition $R \geq \widetilde{\Omega} \left( \frac{KM}{\gamma^3} \right)$ is necessary to
ensure that the choice $\eta K = \widetilde{\Theta} \left( \frac{\gamma^3 R}{M} \right)$
is compatible with the requirement $\eta \geq 1$.
\end{proof}

\section{Auxiliary Lemmas} \label{app:aux_lemmas}

\begin{lemma}[Lemma 25 from \cite{crawshaw2025local}] \label{lem:obj_grad_ub}
For every $\vw \in \mathbb{R}^d$,
\begin{equation}
    \|\nabla F_m(\vw)\| \leq F_m(\vw) \quad \text{and} \quad \|\nabla F(\vw)\| \leq F(\vw).
\end{equation}
\end{lemma}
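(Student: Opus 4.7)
The plan is to reduce both inequalities to a single pointwise scalar inequality for the logistic loss, namely $|\ell'(z)| \le \ell(z)$ for all $z \in \mathbb{R}$, and then assemble the vector bounds by the triangle inequality together with the assumption $\|\vx_i^m\| \le 1$.

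First, I would write out the gradient explicitly using $\ell'(z) = -1/(1+\exp(z))$:
\begin{equation}
    \nabla F_m(\vw) = \frac{1}{n} \sum_{i=1}^n \ell'(\langle \vw, \vx_i^m \rangle)\, \vx_i^m.
\end{equation}
By the triangle inequality and $\|\vx_i^m\| \le 1$,
\begin{equation}
    \|\nabla F_m(\vw)\| \le \frac{1}{n} \sum_{i=1}^n |\ell'(\langle \vw, \vx_i^m \rangle)|,
\end{equation}
so the first claim reduces to showing $|\ell'(z)| \le \ell(z)$ pointwise. The second claim then follows immediately by applying the triangle inequality once more to $\nabla F(\vw) = \frac{1}{M}\sum_m \nabla F_m(\vw)$ and using the first claim on each $m$.

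The only real content is the scalar inequality $\frac{1}{1+\exp(z)} \le \log(1+\exp(-z))$, and I view this as the main (albeit small) obstacle since it is not simply algebraic. The approach I would use is to define $g(z) = \log(1+\exp(-z)) - \frac{1}{1+\exp(z)}$ and show $g(z) \ge 0$ for every $z$. A direct computation gives
\begin{equation}
    g'(z) = -\frac{1}{1+\exp(z)} + \frac{\exp(z)}{(1+\exp(z))^2} = -\frac{1}{(1+\exp(z))^2} < 0,
\end{equation}
so $g$ is strictly decreasing on $\mathbb{R}$. Since $\log(1+\exp(-z)) \sim \exp(-z)$ and $\frac{1}{1+\exp(z)} \sim \exp(-z)$ as $z \to \infty$, we have $\lim_{z \to \infty} g(z) = 0$, hence $g(z) > 0$ for all finite $z$. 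This establishes $|\ell'(z)| \le \ell(z)$ pointwise.

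Combining these pieces yields $\|\nabla F_m(\vw)\| \le \frac{1}{n}\sum_i \ell(\langle \vw, \vx_i^m\rangle) = F_m(\vw)$, and averaging over $m$ gives $\|\nabla F(\vw)\| \le \frac{1}{M}\sum_m \|\nabla F_m(\vw)\| \le \frac{1}{M}\sum_m F_m(\vw) = F(\vw)$. Since the argument is entirely pointwise and the result is already attributed to prior work, I would keep the writeup compact: one sentence setting up the gradient, one paragraph proving the scalar inequality via the monotonicity of $g$, and one line assembling the two bounds.
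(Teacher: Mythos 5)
Your proof is correct. Note that the paper does not prove this lemma at all --- it is imported verbatim as Lemma 25 of \citet{crawshaw2025local} --- so there is no in-paper argument to compare against; your reduction to the pointwise inequality $|\ell'(z)| \le \ell(z)$ followed by the triangle inequality and $\|\vx_i^m\| \le 1$ is the standard route. As a minor simplification, the scalar inequality follows in one line from $\log(1+x) \ge x/(1+x)$ for $x > -1$ applied at $x = \exp(-z)$, which avoids the limit argument at $z \to \infty$, though your monotonicity argument is also valid.
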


\begin{lemma}[Lemma 26 of \cite{crawshaw2025local}] \label{lem:obj_grad_lb}
If $\vw \in \mathbb{R}^d$ such that $\langle \vw, \vx_i^m \rangle \geq 0$ for a given $m
\in [M]$ and all $i \in [n]$, then
\begin{equation}
    \|\nabla F_m(\vw)\| \geq \frac{\gamma}{2} F_m(\vw).
\end{equation}
Similarly, if $\langle \vw, \vw_{m,i} \rangle \geq 0$ for all $m \in [M]$ and all $i \in
[n]$, then
\begin{equation}
    \|\nabla F(\vw)\| \geq \frac{\gamma}{2} F(\vw).
\end{equation}
\end{lemma}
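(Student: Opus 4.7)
The plan is to derive the lower bound by (i) projecting the gradient onto the direction of the max-margin classifier $\vw_*$ to extract the $\gamma$ factor, and then (ii) using an elementary scalar inequality between the sigmoid and the logistic loss that holds exactly when the margin hypothesis $\langle \vw, \vx_i^m \rangle \geq 0$ is in force.

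First, since $\|\vw_*\| = 1$, Cauchy-Schwarz yields $\|\nabla F_m(\vw)\| \geq -\langle \nabla F_m(\vw), \vw_*\rangle$. The explicit form $\nabla F_m(\vw) = -\frac{1}{n}\sum_{i=1}^n \frac{\vx_i^m}{1+\exp(\langle \vw,\vx_i^m\rangle)}$ together with the margin bound $\langle \vx_i^m, \vw_*\rangle \geq \gamma$ (coming from the assumption $y_i^m = 1$ after the sign-flip reduction, and the definition of $\vw_*$) gives
\begin{equation}
\|\nabla F_m(\vw)\| \;\geq\; \frac{\gamma}{n}\sum_{i=1}^n \frac{1}{1+\exp(\langle \vw,\vx_i^m\rangle)}.
\end{equation}
So it remains to show that this sum dominates $\tfrac{1}{2}F_m(\vw)$, i.e.\ that for every index $i$, $\frac{1}{1+\exp(z_i)} \geq \tfrac{1}{2}\log(1+\exp(-z_i))$ where $z_i := \langle \vw, \vx_i^m\rangle \geq 0$.

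The key step is therefore the scalar claim: for every $z \geq 0$,
\begin{equation}
\log(1+e^{-z}) \;\leq\; \frac{2}{1+e^{z}}.
\end{equation}
I would prove this by chaining two standard estimates: $\log(1+x) \leq x$ gives $\log(1+e^{-z}) \leq e^{-z}$, while the hypothesis $z \geq 0$ gives $1 + e^{-z} \leq 2$, hence $e^{-z} = \frac{(1+e^{-z})e^{-z}}{1+e^{-z}} \leq \frac{2e^{-z}}{1+e^{-z}} = \frac{2}{1+e^{z}}$. Combining the two displays finishes the local bound. The global statement follows by the same argument applied to $\nabla F(\vw) = \frac{1}{M}\sum_m \nabla F_m(\vw)$, again projected onto $-\vw_*$ and using $\langle \vx_i^m,\vw_*\rangle \geq \gamma$ for every $(m,i)$, provided $\langle \vw,\vx_i^m\rangle \geq 0$ for every $(m,i)$ so that the scalar inequality applies termwise.

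The main obstacle is conceptual rather than computational: the nonnegativity hypothesis $\langle \vw, \vx_i^m\rangle \geq 0$ is what makes the scalar inequality go through and cannot be dropped (at $z \to -\infty$, $\log(1+e^{-z}) \sim -z$ grows linearly while $\tfrac{2}{1+e^z} \to 2$, so the inequality fails). This matches the statement, which restricts attention to points that already correctly classify every example. No other nontrivial step is involved; once the scalar inequality is in hand, the factor of $\gamma/2$ emerges directly from the Cauchy-Schwarz step.
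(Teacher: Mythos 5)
Your proof is correct: the projection onto $\vw_*$ via Cauchy--Schwarz extracts the $\gamma$ factor, and the scalar bound $\log(1+e^{-z}) \leq e^{-z} \leq \tfrac{2}{1+e^{z}}$ for $z \geq 0$ is exactly the right elementary inequality, correctly justified. The paper itself imports this lemma from prior work without reproving it, and your argument is the standard one for that result, so there is nothing to reconcile.
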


\begin{lemma}[Lemma 1 from \cite{crawshaw2025local}] \label{lem:hessian_gronwall_ub}
For every $\vw_1, \vw_2 \in \mathbb{R}^d$,
\begin{equation}
    \|\nabla^2 F_m(\vw_2)\| \leq F_m(\vw_1) \left( 1 + \|\vw_2 - \vw_1\| \left( 1 + \exp(\|\vw_2 - \vw_1\|^2) \left( 1 + \frac{1}{2} \|\vw_2 - \vw_1\|^2 \right) \right) \right).
\end{equation}
\end{lemma}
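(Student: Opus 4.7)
The plan is to bound $\|\nabla^2 F_m(\vw_2)\|$ by controlling how the Hessian evolves along the straight-line path from $\vw_1$ to $\vw_2$, then reducing the resulting expression to a function of $F_m(\vw_1)$ via Lemma \ref{lem:obj_grad_ub}. Throughout, let $D = \|\vw_2 - \vw_1\|$ and $\vw(t) = \vw_1 + t(\vw_2 - \vw_1)$ for $t \in [0,1]$.

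First I would establish a family of pointwise derivative bounds for the logistic loss, namely $|\ell^{(k)}(z)| \leq \ell''(z)$ for small $k \geq 3$. This follows because each higher derivative can be written as $\ell''(z)$ times a bounded polynomial in $\sigma(z) = 1/(1+e^{-z})$ (e.g.\ $\ell'''(z) = \ell''(z)(1 - 2\sigma(z))$, so $|\ell'''(z)| \leq \ell''(z)$, and similarly for $\ell^{(4)}$). Combined with $\|\vx_i^m\| \leq 1$ and the tensor formula $\nabla^k F_m(\vw) = \tfrac{1}{n}\sum_i \ell^{(k)}(\langle \vx_i^m, \vw\rangle)(\vx_i^m)^{\otimes k}$, this produces operator-type norm bounds $\|\nabla^k F_m(\vw)[\vu_1, \dots, \vu_{k-2}]\|_{\mathrm{op}} \leq \|\nabla^2 F_m(\vw)\|\prod_j \|\vu_j\|$ for $k = 3, 4$.

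Next I would apply Taylor's theorem with integral remainder to $\nabla^2 F_m$ along the path, expanding to second order:
\begin{equation*}
\nabla^2 F_m(\vw_2) = \nabla^2 F_m(\vw_1) + \nabla^3 F_m(\vw_1)[\vw_2-\vw_1] + \int_0^1 (1-t) \nabla^4 F_m(\vw(t))[\vw_2-\vw_1]^{\otimes 2} dt.
\end{equation*}
By Step 1 the first two terms are bounded in operator norm by $\|\nabla^2 F_m(\vw_1)\|(1 + D) \leq F_m(\vw_1)(1+D)$ using Lemma \ref{lem:obj_grad_ub}. For the remainder, Step 1 gives $\|\nabla^4 F_m(\vw(t))[\vw_2-\vw_1]^{\otimes 2}\|_{\mathrm{op}} \leq D^2\|\nabla^2 F_m(\vw(t))\|$, so I need to control the Hessian at intermediate points.

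To bound the intermediate Hessian, I would apply a Gronwall argument: differentiating along the path gives $\tfrac{d}{dt}\nabla^2 F_m(\vw(t)) = \nabla^3 F_m(\vw(t))[\vw_2-\vw_1]$, so $\tfrac{d}{dt}\|\nabla^2 F_m(\vw(t))\| \leq D\,\|\nabla^2 F_m(\vw(t))\|$, yielding $\|\nabla^2 F_m(\vw(t))\| \leq \|\nabla^2 F_m(\vw_1)\|\,e^{tD}$. Integrating $\int_0^1 (1-t)e^{tD} dt$ against the $D^2$ prefactor and combining with the linear-in-$D$ term from Step 2 gives an upper bound of the form $F_m(\vw_1)\,(1 + D + D\cdot h(D))$ for an explicit $h(D)$. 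The main obstacle is matching the precise stated form: the direct Gronwall argument produces an $e^D$ factor, whereas the statement has $e^{D^2}$. I expect that reconciling these requires either (i) re-expressing the $(1-t) e^{tD}$ integral using a change of variables that exposes the quadratic exponent, or (ii) applying the Gronwall step to a modified potential whose growth rate scales linearly with $t$ (so integration yields $t^2 D^2/2$ in the exponent); I would experiment with both before committing. Once the remainder integral is folded into the $e^{D^2}(1 + D^2/2)$ shape, the result assembles cleanly.
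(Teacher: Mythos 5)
First, note that the paper does not prove this lemma at all: it is quoted verbatim as Lemma~1 of \citet{crawshaw2025local}, so there is no in-paper proof to match. Your machinery is essentially sound --- the self-bounding identities $\ell'''(z)=\ell''(z)(1-2\sigma(z))$ and $\ell^{(4)}(z)=\ell''(z)(1-6\sigma(z)+6\sigma(z)^2)$ do give $|\ell^{(3)}|,|\ell^{(4)}|\le\ell''$, and the Gronwall step along the segment is the right engine. (One technical imprecision: $\|\nabla^k F_m(\vw)[\cdot]\|$ is not bounded by $\|\nabla^2F_m(\vw)\|$ itself, since the operator norm of $\frac1n\sum_i c_i\vx_i^m(\vx_i^m)^\top$ can be far smaller than $\frac1n\sum_i c_i\|\vx_i^m\|^2$; run the argument instead on the scalar $g(t)=\frac1n\sum_i\ell''(\langle\vw(t),\vx_i^m\rangle)$, which dominates all the tensor norms you need, satisfies $|g'(t)|\le \|\vw_2-\vw_1\|\,g(t)$, and obeys $g(0)\le F_m(\vw_1)$ via $\ell''\le|\ell'|\le\ell$.)

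The genuine gap is your final step, which you explicitly leave open. Writing $D=\|\vw_2-\vw_1\|$, your Gronwall bound gives $\|\nabla^2F_m(\vw_2)\|\le g(1)\le g(0)e^{D}\le F_m(\vw_1)e^{D}$ directly (the second-order Taylor expansion is superfluous: the linear term plus the remainder $D^2\int_0^1(1-t)e^{tD}dt=e^D-1-D$ just reassemble into $e^D$). There is no change of variables or modified potential that will convert $e^{tD}$ into $e^{D^2}$, so option (i)/(ii) of your plan is a dead end. What you are missing is that no such conversion is needed: your bound is \emph{stronger} than the stated one, since $e^{D}\le 1+D\bigl(1+e^{D^2}(1+\tfrac12D^2)\bigr)$ for all $D\ge0$. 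Indeed, the right-hand side is at least $1+2D$, and $e^D\le 1+2D$ for $D\le 1$; while for $D\ge1$ one has $e^{D^2}\ge e^{D}$, so the right-hand side is at least $1+De^{D}\ge e^{D}$. Adding this one-line elementary comparison closes the proof; without it, the argument as written does not establish the stated inequality.
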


\begin{lemma} \label{lem:grad_diff_ub}
For $\vw_1, \vw_2 \in \mathbb{R}^d$, if $\|\vw_1 - \vw_2\| \leq 1$, then
\begin{equation}
    \|\nabla F_m(\vw_2) - \nabla F_m(\vw_1)\| \leq 7 F_m(\vw_1) \|\vw_2 - \vw_1\|.
\end{equation}
\end{lemma}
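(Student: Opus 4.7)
The plan is to reduce the claim to a uniform Hessian bound on the segment joining $\vw_1$ and $\vw_2$, and then invoke Lemma \ref{lem:hessian_gronwall_ub}, which already expresses the Hessian at any point in terms of the objective value at a nearby reference point. Concretely, I would start by writing
\begin{equation*}
\nabla F_m(\vw_2) - \nabla F_m(\vw_1) = \int_0^1 \nabla^2 F_m\bigl(\vw_1 + t(\vw_2 - \vw_1)\bigr)\,(\vw_2 - \vw_1)\,dt,
\end{equation*}
which is valid because $F_m$ is $C^2$. Taking norms and pulling the supremum out of the integral yields
\begin{equation*}
\|\nabla F_m(\vw_2) - \nabla F_m(\vw_1)\| \leq \Bigl(\sup_{t \in [0,1]} \|\nabla^2 F_m(\vw_1 + t(\vw_2-\vw_1))\|\Bigr)\,\|\vw_2 - \vw_1\|.
\end{equation*}

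Next, I would apply Lemma \ref{lem:hessian_gronwall_ub} with $\vw_1$ as the reference point and $\vw = \vw_1 + t(\vw_2-\vw_1)$, so that $\|\vw - \vw_1\| = t\|\vw_2-\vw_1\| \leq 1$ by the hypothesis. This gives
\begin{equation*}
\|\nabla^2 F_m(\vw)\| \leq F_m(\vw_1)\Bigl(1 + \|\vw - \vw_1\|\bigl(1 + \exp(\|\vw - \vw_1\|^2)(1 + \tfrac12\|\vw - \vw_1\|^2)\bigr)\Bigr).
\end{equation*}
Since $\|\vw - \vw_1\| \leq 1$, I would bound each factor crudely: $\|\vw - \vw_1\| \leq 1$, $\exp(\|\vw - \vw_1\|^2) \leq e$, and $1 + \tfrac12\|\vw - \vw_1\|^2 \leq \tfrac32$, so the parenthesized expression is at most $1 + 1 + \tfrac{3e}{2} = 2 + \tfrac{3e}{2} < 7$.

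Combining the two displays, $\sup_t \|\nabla^2 F_m(\cdot)\| \leq 7 F_m(\vw_1)$ uniformly on the segment, and multiplying by $\|\vw_2 - \vw_1\|$ yields the claim. There is no real obstacle here beyond verifying that the numerical constant from Lemma \ref{lem:hessian_gronwall_ub} indeed collapses to $7$ under $\|\vw_1 - \vw_2\| \leq 1$; the only mild subtlety is being careful to use $\vw_1$ (not $\vw_2$ or the midpoint) as the reference point so that the final bound is expressed in terms of $F_m(\vw_1)$ as stated.
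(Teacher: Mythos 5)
Your proposal is correct and follows essentially the same route as the paper: a fundamental-theorem-of-calculus representation of the gradient difference along the segment, followed by Lemma \ref{lem:hessian_gronwall_ub} with $\vw_1$ as the reference point and the numerical observation that the parenthesized factor is at most $2 + \tfrac{3e}{2} < 7$ when the displacement is at most $1$. The only cosmetic difference is that you parametrize the segment over $[0,1]$ while the paper uses arc length; your explicit verification of the constant $7$ is a nice touch that the paper leaves implicit.
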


\begin{proof}
The proof is a direct calculation, leveraging the upper bound of the objective's Hessian
norm from Lemma \ref{lem:hessian_gronwall_ub}.

Let $\lambda = \|\vw_2 - \vw_1\|$ and $\vv = \frac{\vw_2 - \vw_1}{\|\vw_2 - \vw_1\|}$.
By the fundamental theorem of calculus,
\begin{align}
    \nabla F_m(\vw_2) - \nabla F_m(\vw_1) &= \int_0^{\lambda} \nabla^2 F_m(\vw_1 + t \vv) \vv ~dt \\
    \left\| \nabla F_m(\vw_2) - \nabla F_m(\vw_1) \right\| &= \left\| \int_0^{\lambda} \nabla^2 F_m(\vw_1 + t \vv) \vv ~dt \right\| \\
    &\leq \int_0^{\lambda} \left\| \nabla^2 F_m(\vw_1 + t \vv) \vv \right\| ~dt \\
    &\leq \int_0^{\lambda} \left\| \nabla^2 F_m(\vw_1 + t \vv) \right\| ~dt \\
    &\Eqmark{i}{\leq} \int_0^{\lambda} 7 F_m(\vw_1) ~dt \\
    &= 7 F_m(\vw_1) \lambda,
\end{align}
where $(i)$ uses Lemma \ref{lem:hessian_gronwall_ub}, noting that the condition
$\|(\vw_1 + t \vv) - \vw_1\| \leq 1$ is satisfied by the assumption $\|\vw_2 - \vw_1\|
\leq 1$.
\end{proof}

\begin{lemma}[Restatement of Lemma \ref{lem:descent}] \label{lem:app_descent}
For $\vw, \vw' \in \mathbb{R}^d$, if $\|\vw - \vw'\| \leq 1$, then
\begin{equation} \label{eq:descent_local}
    F_m(\vw') \leq F_m(\vw) + \langle \nabla F_m(\vw), \vw' - \vw \rangle + 4 F_m(\vw) \|\vw' - \vw\|^2,
\end{equation}
and
\begin{equation} \label{eq:descent_global}
    F(\vw') \leq F(\vw) + \langle \nabla F(\vw), \vw' - \vw \rangle + 4 F(\vw) \|\vw' - \vw\|^2.
\end{equation}
\end{lemma}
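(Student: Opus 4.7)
The plan is to apply Taylor's theorem with integral remainder and control the Hessian along the segment joining $\vw$ and $\vw'$ using the Gronwall-type bound of Lemma \ref{lem:hessian_gronwall_ub}. Writing $\vd = \vw' - \vw$, Taylor's theorem gives
\begin{equation}
    F_m(\vw') = F_m(\vw) + \langle \nabla F_m(\vw), \vd \rangle + \int_0^1 (1-t) \, \vd^\top \nabla^2 F_m(\vw + t\vd) \, \vd \,\, dt,
\end{equation}
so it suffices to bound $\|\nabla^2 F_m(\vw + t\vd)\|$ uniformly over $t \in [0,1]$.

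Next, I would invoke Lemma \ref{lem:hessian_gronwall_ub} with $\vw_1 = \vw$ and $\vw_2 = \vw + t\vd$, which yields
\begin{equation}
    \|\nabla^2 F_m(\vw + t\vd)\| \leq F_m(\vw)\Bigl(1 + t\|\vd\|\bigl(1 + \exp(t^2\|\vd\|^2)(1 + \tfrac{1}{2} t^2 \|\vd\|^2)\bigr)\Bigr).
\end{equation}
Under the hypothesis $\|\vd\| \leq 1$, each factor inside the parentheses is an absolute constant; a direct evaluation shows the whole bracket is at most $2 + \tfrac{3}{2} e < 8$. Consequently $\|\nabla^2 F_m(\vw + t\vd)\| \leq 8 F_m(\vw)$ uniformly in $t$.

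Plugging this into the Taylor remainder and using $\int_0^1 (1-t) \, dt = \tfrac{1}{2}$ gives the remainder bound $\tfrac{1}{2} \cdot 8 F_m(\vw) \|\vd\|^2 = 4 F_m(\vw)\|\vd\|^2$, proving \eqref{eq:descent_local}. The global version \eqref{eq:descent_global} then follows by averaging the per-client inequality over $m \in [M]$, since both $F$ and $\nabla F$ are the uniform averages of $F_m$ and $\nabla F_m$. The only subtle step is verifying the numerical constant $4$; it is not tight but is convenient and sufficient for every downstream application. I do not anticipate any real obstacle — the argument is essentially Taylor's theorem plus Lemma \ref{lem:hessian_gronwall_ub}, together with the observation that $\|\vd\| \leq 1$ turns the Gronwall factor into a harmless absolute constant.
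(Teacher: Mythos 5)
Your proposal is correct and follows essentially the same route as the paper: Taylor's theorem with integral remainder, a uniform Hessian bound along the segment via Lemma \ref{lem:hessian_gronwall_ub} (the paper bounds the bracket by $7$ and gets $\tfrac{7}{2}\le 4$, you bound it by $8$ and get exactly $4$), and averaging over $m$ for the global version. No gaps.
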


\begin{proof}
To prove this fact, we write $F_m$ as a second-order Taylor series centered at $\vw$,
then use Lemma \ref{lem:hessian_gronwall_ub} to upper bound the quadratic term.

Let $\lambda = \|\vw' - \vw\|$ and $\vv = \frac{\vw' - \vw}{\|\vw' - \vw\|}$. Then
\begin{equation} \label{eq:descent_inter}
    F_m(\vw') = F_m(\vw) + \langle \nabla F_m(\vw), \vw' - \vw \rangle + \underbrace{\int_0^{\lambda} (\lambda - t) \langle \vv, \nabla^2 F_m(\vw + t \vv) \vv \rangle ~dt}_{Q}.
\end{equation}
The quadratic term $Q$ can be bounded as follows:
\begin{align}
    Q &\leq \int_0^{\lambda} (\lambda - t) \|\vv\| \left\| \nabla^2 F_m(\vw + t \vv) \vv \right\| ~dt \\
    &\leq \int_0^{\lambda} (\lambda - t) \left\| \nabla^2 F_m(\vw + t \vv) \right\| ~dt \\
    &\Eqmark{i}{\leq} 7 F_m(\vw) \int_0^{\lambda} (\lambda - t) ~dt \\
    &= \frac{7}{2} F_m(\vw) \lambda^2,
\end{align}
where $(i)$ uses Lemma \ref{lem:hessian_gronwall_ub} to bound $\|\nabla^2 F_m(\vw + t
\vv)\|$, using the condition that $\|(\vw + t \vv) - \vw\| \leq \lambda \leq 1$.
Plugging this into \Eqref{eq:descent_inter} gives \Eqref{eq:descent_local}, and
averaging over $m \in [M]$ gives \Eqref{eq:descent_global}.
\end{proof}

\begin{lemma} \label{lem:linear_log_ineq}
For $A, B, C \geq 0$, the inequality
\begin{equation} \label{eq:linear_log_desired}
    x \geq A + B \log(1 + Cx^2)
\end{equation}
is satisfied when
\begin{equation}
    x \geq 2A + B \log(1 + B \sqrt{C}).
\end{equation}
\end{lemma}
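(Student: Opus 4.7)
The plan is to reduce the implicit inequality $x \geq A + B\log(1+Cx^2)$ to an explicit threshold by peeling off the two ingredients that appear on the right-hand side of the claimed sufficient condition, $2A$ and $B\log(1+B\sqrt{C})$. The first step is the algebraic bound $1+Cx^2 \leq (1+\sqrt{C}\,x)^2$ (since expanding the RHS gives $1+2\sqrt{C}\,x+Cx^2$), which yields
\begin{equation*}
B\log(1+Cx^2) \leq 2B\log(1+\sqrt{C}\,x),
\end{equation*}
thereby eliminating the quadratic dependence inside the logarithm and leaving the simpler target $x \geq A + 2B\log(1+\sqrt{C}\,x)$.

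The second step is to exploit log sub-additivity: from $(1+a)(1+b) \geq 1+ab$ one gets $\log(1+ab) \leq \log(1+a) + \log(1+b)$, which applied with $a=B\sqrt{C}$ and $b=x/B$ gives $\log(1+\sqrt{C}\,x) \leq \log(1+B\sqrt{C}) + \log(1+x/B)$. This isolates exactly the $\log(1+B\sqrt{C})$ summand that appears in the stated threshold, at the cost of an extra residual $2B\log(1+x/B)$ that now must be absorbed by the remaining slack $x - 2A - B\log(1+B\sqrt{C})$ from the hypothesis.

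The final step is to close the argument on this residual. One route is to apply an elementary concavity-type bound such as $\log(1+y) \leq y$ or the sharper $\log(1+y) \leq y/e + 1$ and use the assumed lower bound on $x$ in a short bootstrap, substituting a crude estimate for $x$ and iterating. A more structural alternative is to study the function $f(x) = x - A - B\log(1+Cx^2)$ directly: computing $f'(x) = 1 - \tfrac{2BCx}{1+Cx^2}$, one checks that $f'$ becomes and stays non-negative once $x$ is past the larger root of $Cx^2 - 2BCx + 1 = 0$, so that it suffices to verify $f \geq 0$ at the boundary value $x = 2A + B\log(1+B\sqrt{C})$ and appeal to monotonicity for all larger $x$.

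The main obstacle is the self-referential structure of the inequality, where $x$ appears on both sides, so the proof depends on carefully balancing constants so that the three pieces $A$, $B\log(1+B\sqrt{C})$, and the residual logarithm fit together exactly. Tracking these constants so that the threshold matches the stated form $2A+B\log(1+B\sqrt{C})$ rather than a cosmetically weaker expression (off by a constant factor in front of the log or of $A$) is the delicate bookkeeping at the heart of the argument.
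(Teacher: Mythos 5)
Your first two reductions are fine, but the plan cannot be closed at the stated threshold, and the reason is not bookkeeping: the lemma as written is false. Take $A=0$, $B=10$, $C=1$. The claimed threshold is $x = 10\log(11) \approx 24$, but $B\log(1+Cx^2) = 10\log(1+24^2) \approx 64 > 24$, so \eqref{eq:linear_log_desired} fails there (the true threshold in this instance is near $x\approx 90$). Concretely, after your two steps you are left needing
\begin{equation*}
x \;\geq\; A + 2B\log(1+B\sqrt{C}) + 2B\log(1+x/B),
\end{equation*}
and the hypothesis only supplies $x \geq 2A + B\log(1+B\sqrt{C})$: the coefficient on $\log(1+B\sqrt{C})$ has already doubled past the budget, and the residual $2B\log(1+x/B)$ cannot be absorbed by any of the elementary bounds you list ($\log(1+y)\leq y$ gives $2x$, and $\log(1+y)\leq y/e+1$ gives $2x/e + 2B$, both of which leave a coefficient on $x$ too large to cancel). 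Your ``structural alternative'' fails at exactly the same place: $f$ is indeed eventually increasing, but the boundary check $f(2A+B\log(1+B\sqrt{C}))\geq 0$ is the false statement above.

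For what it is worth, your opening step $B\log(1+Cx^2)\leq 2B\log(1+\sqrt{C}\,x)$ is the \emph{correct} version of the paper's first line, which erroneously writes $B\log(1+Cx^2)=\tfrac{B}{2}\log\sqrt{1+Cx^2}$ (the right-hand side should be $2B\log\sqrt{1+Cx^2}$); the paper's proof only reaches the stated threshold because of this factor-of-$4$ slip. The lemma is salvageable with a slightly larger threshold: linearizing $\log(1+\sqrt{C}\,x)$ at $y_0 = 4B\sqrt{C}$ gives $\log(1+\sqrt{C}\,x)\leq \log(1+4B\sqrt{C}) + x/(4B)$, hence $A + B\log(1+Cx^2)\leq A + 2B\log(1+4B\sqrt{C}) + x/2$, so $x \geq 2A + 4B\log(1+4B\sqrt{C})$ suffices. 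Since the lemma is only used to define $\tau$ up to constants inside a $\widetilde{\mathcal{O}}$, this repaired version is all that is needed downstream; you should prove that statement rather than chase the constants in the one given.
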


\begin{proof}
Using concavity of $\sqrt{\cdot}$ and $\log$,
\begin{align}
    A + B \log(1 + Cx^2) &= A + \frac{B}{2} \log(\sqrt{1 + Cx^2}) \\
    &\leq A + \frac{B}{2} \log(1 + \sqrt{C} x) \\
    &\leq A + \frac{B}{2} \left( \log(1 + B \sqrt{C}) + \frac{\sqrt{C}}{1 + B \sqrt{C}} (x - B) \right) \\
    &\leq A + \frac{B}{2} \left( \log(1 + B \sqrt{C}) + \frac{x}{B} \right) \\
    &= A + \frac{B}{2} \log(1 + B \sqrt{C}) + \frac{x}{2}.
\end{align}
So, to satisfy \Eqref{eq:linear_log_desired}, it suffices that
\begin{align}
    x &\geq A + \frac{B}{2} \log(1 + B \sqrt{C}) + \frac{x}{2} \\
    \frac{x}{2} &\geq A + \frac{B}{2} \log(1 + B \sqrt{C}) \\
    x &\geq 2A + B \log(1 + B \sqrt{C}).
\end{align}
\end{proof}

An important part of the proofs of Theorem \ref{thm:stage1} and Lemma
\ref{lem:transition_time} is the lower bound
\begin{equation}
    \beta_{r,i}^m := \frac{1}{K} \sum_{k=0}^{K-1} \frac{|\ell'(b_{r,i,k}^m)|}{|\ell'(b_{r,i}^m)|} \geq \frac{1}{K},
\end{equation}
which comes by ignoring all terms of the sum coming from $k > 0$. This may seem
pessimistic, but the following lemma shows that for the case $n=1$, this bound
is tight up to logarithmic multiplicative factors for certain values of $\vw_r$.

\begin{lemma} \label{lem:beta_ub}
Suppose $n=1$ and $\vw_r = \vzero$. Then $\beta_{r,i}^m \leq \mathcal{O} \left(
\frac{1}{K} + \frac{1}{\eta \gamma^2 K} \log \left( 1 + \eta \gamma^2 K \right)
\right)$, and if additionally $\eta \geq 1$, then $\beta_{r,i}^m \leq
\widetilde{\mathcal{O}} \left( \frac{1}{K} \left( 1 + \frac{1}{\gamma^2} \right)
\right)$.
\end{lemma}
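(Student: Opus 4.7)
The plan is to exploit the one-dimensional structure when $n=1$. With $\vw_r = \vzero$, we have $b_{r,i}^m = 0$ and $|\ell'(0)| = 1/2$, so $\beta_{r,i}^m = (2/K)\sum_{k=0}^{K-1}(1 + \exp(b_{r,i,k}^m))^{-1}$. Since each local step adds a nonnegative multiple of the sole data point $\vx_i^m$, the inner products $z_k := b_{r,i,k}^m$ satisfy $z_0 = 0$ and the scalar recursion
\begin{equation*}
z_{k+1} = z_k + \frac{\eta a}{1 + \exp(z_k)}, \qquad a := \|\vx_i^m\|^2 \in [\gamma^2, 1],
\end{equation*}
with $a \geq \gamma^2$ since $\gamma \leq \|\vx_i^m\|$. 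In particular $z_k \geq 0$ and is nondecreasing. A telescoping identity then reduces the task to bounding $z_K$: rewriting the recursion as $(1 + \exp(z_k))(z_{k+1} - z_k) = \eta a$ and summing yields $\sum_{k=0}^{K-1}(1 + \exp(z_k))^{-1} = z_K/(\eta a)$, so $\beta_{r,i}^m = 2 z_K/(\eta a K)$.

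The heart of the proof is a discrete analog of the ODE $(1 + \exp(z))\dot z = \eta a$, whose continuous solution satisfies $z(t) + \exp(z(t)) = 1 + \eta a t$. I would use the exact identity $\exp(z_{k+1}) + z_{k+1} - \exp(z_k) - z_k = \int_{z_k}^{z_{k+1}}(1 + \exp(u))\,du$ and upper bound the integrand by $1 + \exp(z_{k+1})$, yielding a per-step increment of at most $\eta a \cdot (1 + \exp(z_{k+1}))/(1 + \exp(z_k))$. The elementary fact $(1 + x e^\delta)/(1 + x) \leq e^\delta$ for $x, \delta \geq 0$, together with $1 + \exp(z_k) \geq 2$, bounds this ratio by $\exp(\eta a /2)$. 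Telescoping from $z_0 = 0$ yields $\exp(z_K) \leq 1 + K \eta a \exp(\eta a /2)$, so $z_K \leq \log(1 + K \eta a \exp(\eta a /2))$.

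To extract the stated asymptotic form I would split on $\eta a$. If $\eta a \leq 2$, then $\exp(\eta a /2) \leq e$ and the inequality $\log(1 + cx) \leq c \log(1 + x)$ for $c \geq 1$ give $z_K \leq e \log(1 + K \eta a)$. If $\eta a > 2$, then $K \eta a \exp(\eta a /2) \geq 1$, so $\log(1 + K \eta a \exp(\eta a /2)) \leq \log 2 + \log(K \eta a) + \eta a /2 = \mathcal{O}(\eta a + \log(1 + K \eta a))$. In either case $z_K = \mathcal{O}(\eta a + \log(1 + K \eta a))$, which gives $\beta_{r,i}^m \leq \mathcal{O}(1/K) + \mathcal{O}(\log(1 + K \eta a)/(\eta a K))$. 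Using $a \geq \gamma^2$ together with the fact that $x \mapsto \log(1 + x)/x$ is decreasing on $(0, \infty)$, we get $\log(1 + K\eta a)/(\eta a K) \leq \log(1 + K\eta \gamma^2)/(\eta \gamma^2 K)$, establishing the first bound. The second bound follows by combining $\eta \geq 1 \Rightarrow 1/(\eta \gamma^2 K) \leq 1/(\gamma^2 K)$ with the observation that $\log(1 + K \eta \gamma^2) = \widetilde{\mathcal{O}}(1)$.

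The main obstacle is the nonuniform behavior of $z_K$ across small and large $\eta a$: small $\eta a$ yields ODE-like convergence $z_K \approx \log(1 + K\eta a)$, while large $\eta a$ produces a big first jump $z_1 = \eta a/2$ after which subsequent increments are exponentially tiny and $z_K \approx \eta a/2$. The combined ODE-type bound $\log(1 + K\eta a \exp(\eta a/2))$ captures both regimes simultaneously, but the case split on $\eta a$ is what lets one read off the asymptotic form $\mathcal{O}(\eta a + \log(1 + K \eta a))$ cleanly.
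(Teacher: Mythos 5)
Your proposal is correct, and it takes a genuinely different route from the paper. The paper's proof introduces the gradient flow $\widetilde{\vw}_r^m(t)$ of $F_m$ started at $\vw_r$, proves by induction (via the intermediate value theorem) that the flow lower-bounds the discrete iterates, solves the flow ODE exactly using the Lambert $W$ function, lower-bounds $W$ to get $\widetilde{b}_r^m(t) \geq \log(\tfrac{1}{2}(1+\eta\gamma_m^2 t))$, and finally controls $\sum_k (1+\exp(b_{r,k}^m))^{-1}$ term by term with an integral comparison. You instead stay entirely discrete: the exact telescoping identity $\sum_{k=0}^{K-1}(1+\exp(z_k))^{-1} = z_K/(\eta a)$ converts the whole problem into bounding the single quantity $z_K$, and you bound $z_K$ by tracking the conserved quantity $z+e^z$ of the underlying ODE through the discrete recursion, with the per-step overshoot controlled by $e^{\eta a/2}$. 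Each step checks out: the ratio bound $(1+xe^{\delta})/(1+x)\leq e^{\delta}$, the Bernoulli-type inequality $\log(1+cx)\leq c\log(1+x)$ for $c\geq 1$, the case split on $\eta a$, the fact that $\|\vx_i^m\|\geq\gamma$ (so $a\geq\gamma^2$), and the monotonicity of $\log(1+x)/x$ (which the paper also uses in its last line). What your approach buys is the elimination of the flow-versus-iterates comparison and the Lambert $W$ machinery, plus the exact identity $\beta_{r,i}^m = 2z_K/(\eta a K)$, which is cleaner than bounding the sum termwise; what the paper's approach buys is an explicit closed-form lower bound on every intermediate $b_{r,k}^m$ (not just an upper bound on the average), which could be reused elsewhere. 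Both yield the same final bound up to constants.
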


\begin{proof}
Since $n=1$, we omit the index $i \in [n]$. We will also denote $\gamma_m = \|\vx^m\|$.
Recall that $\ell(z) = \log(1 + \exp(-z))$, so $|\ell'(z)| = \frac{1}{1 + \exp(z)}$, and
recall the definitions $b_r^m = \langle \vw_r, \vx^m \rangle$ and $b_{r,k}^m = \langle
\vw_{r,k}^m, \vx^m \rangle$. Then we want to upper bound
\begin{equation} \label{eq:gf_lemma_inter}
    \beta_r^m = \frac{1}{K} \sum_{k=0}^{K-1} \frac{1 + \exp(\langle \vw_r, \vx^m \rangle)}{1 + \exp(\langle \vw_{r,k}^m, \vx^m \rangle)}.
\end{equation}
When $n=1$, each local trajectory is relatively simple to analyze, since the
updates $\vw_{r,k+1}^m - \vw_{r,k}^m$ are always parallel to $\vx^m$. For this
case, we will consider the gradient flow trajectory of $F_m$ initialized at
$\vw_r$. Since $n=1$, the gradient flow has a convenient analytical form while
also providing a lower bound for $b_{r,k}^m$, which will in turn give our upper
bound for $\beta_r^m$.

Let $\widetilde{\vw}_r^m: [0, \infty) \rightarrow \mathbb{R}^d$ be the gradient flow of
$F_m$ initialized at $\vw_r$, so that $\widetilde{\vw}_r^m$ is the unique solution to
\begin{equation}
    \frac{d}{dt} \widetilde{\vw}_r^m(t) = -\eta \nabla F_m(\widetilde{\vw}_r^m(t)) \quad \text{and} \quad \widetilde{\vw}_r^m(0) = \vw_r.
\end{equation}
Then define $\widetilde{b}_r^m(t) = \langle \widetilde{\vw}_r^m(t), \vx^m \rangle$, so that
\begin{align}
    \frac{d}{dt} \widetilde{b}_r^m(t) &= \left\langle \frac{d}{dt} \widetilde{\vw}_r^m(t), \vx^m \right\rangle \\
    &= -\eta \left\langle \nabla F_m(\widetilde{\vw}_r^m(t)), \vx^m \right\rangle \\
    &= -\eta \left\langle \frac{-\vx_m}{1 + \exp(\langle \widetilde{\vw}_r^m(t), \vx^m \rangle)}, \vx^m \right\rangle \\
    &= \frac{\eta \gamma_m^2}{1 + \exp(\widetilde{b}_r^m(t))}. \label{eq:gf_ode}
\end{align}
We claim that $\widetilde{b}_r^m(k) \leq b_{r,k}^m$, which we show by induction on $k$.
Clearly it holds for $k=0$, since $\widetilde{b}_r^m(0) = b_r^m = b_{r,0}^m$. So suppose
it holds for some $k \geq 0$. If $\widetilde{b}_r^m(k+1) \leq b_{r,k}^m$, then we are
done, since $b_{r,k+1}^m \geq b_{r,k}^m$. Otherwise, by the intermediate value
theorem, there exists some $t_0 \in [k, k+1]$ such that $\widetilde{b}_r^m(t_0)
= b_{r,k}^m$, so
\begin{align}
    \widetilde{b}_r^m(k+1) &= \widetilde{b}_r^m(t_0) + \int_{t_0}^{k+1} \frac{d}{dt} \widetilde{b}_r^m(t) ~dt \\
    &= b_{r,k}^m + \eta \gamma_m^2 \int_{t_0}^{k+1} \frac{1}{1 + \exp(\widetilde{b}_r^m(t))} ~dt \\
    &\leq b_{r,k}^m + \eta \gamma_m^2 \int_{t_0}^{k+1} \frac{1}{1 + \exp(\widetilde{b}_r^m(t_0))} ~dt \\
    &= b_{r,k}^m + \eta \gamma_m^2 (k+1-t_0) \frac{1}{1 + \exp(b_{r,k+1}^m)} \\
    &\leq b_{r,k}^m + \eta \gamma_m^2 \frac{1}{1 + \exp(b_{r,k+1}^m)} \\
    &= b_{r,k+1}^m.
\end{align}
This completes the induction, so we know $\widetilde{b}_r^m(k) \leq b_{r,k}^m$ for all
$k$. From \Eqref{eq:gf_lemma_inter}, this means
\begin{equation} \label{eq:gf_lemma_inter_2}
    \beta_r^m \leq \frac{1 + \exp(b_r^m)}{K} \sum_{k=0}^{K-1} \frac{1}{1 + \exp(\widetilde{b}_r^m(k))}.
\end{equation}
Also, we can directly solve the ODE in \Eqref{eq:gf_ode} for $\widetilde{b}_r^m(t)$:
\begin{align}
    \frac{d}{dt} \widetilde{b}_r^m(t) &= \frac{\eta \gamma_m^2}{1 + \exp(\widetilde{b}_r^m(t))} \\
    (1 + \exp(\widetilde{b}_r^m(t))) ~d \widetilde{b}_r^m(t) &= \eta \gamma_m^2 dt \\
    \widetilde{b}_r^m(t) + \exp(\widetilde{b}_r^m(t)) &= \eta \gamma_m^2 t + C \\
    \widetilde{b}_r^m(t) + \exp(\widetilde{b}_r^m(t)) &\Eqmark{i}{=} \eta \gamma_m^2 t + b_r^m + \exp(b_r^m),
\end{align}
where $(i)$ comes from the initial condition $\widetilde{b}_r^m(0) = b_r^m$. For a fixed
$t$, we use the substitutions $z = \exp(\widetilde{b}_r^m(t))$ and $b = \eta \gamma_m^2
t + b_r^m + \exp(b_r^m)$ to obtain
\begin{align}
    \log(z) + z &= b \\
    z \exp(z) &= \exp(b) \\
    z &= W(\exp(b)),
\end{align}
where $W$ denotes the principal branch of the Lambert W function. So
\begin{align}
    \exp(\widetilde{b}_r^m(t)) &= W(\exp(\eta \gamma_m^2 t + b_r^m + \exp(b_r^m))) \\
    \widetilde{b}_r^m(t) &= \log(W(\exp(\eta \gamma_m^2 t + b_r^m + \exp(b_r^m)))) \\
    \widetilde{b}_r^m(t) &= \log(W(\exp(1 + \eta \gamma_m^2 t))), \label{eq:gf_ub_inter}
\end{align}
where we used the choice $\vw_r = \vzero \implies b_r^m = 0$. Denoting $w = W(\exp(1 +
\eta \gamma_m^2 t))$, we have by the definition of $W$
\begin{align}
    w \exp(w) &= \exp(1 + \eta \gamma_m^2 t) \\
    w + \log w &= 1 + \eta \gamma_m^2 t \\
    2w &\Eqmark{i}{\geq} 1 + \eta \gamma_m^2 t \\
    w &\geq \frac{1 + \eta \gamma_m^2 t}{2},
\end{align}
where $(i)$ uses $\log w \leq w$. Plugging $w \geq \frac{1}{2} (1 + \eta \gamma_m^2 t)$ back into
\Eqref{eq:gf_ub_inter} yields $\widetilde{b}_r^m(t) \geq \log(\frac{1}{2}(1 + \eta \gamma_m^2 t))$,
and plugging this back into \Eqref{eq:gf_lemma_inter_2} yields
\begin{align}
    \beta_r^m &\leq \frac{1 + \exp(b_r^m)}{K} + \frac{1 + \exp(b_r^m)}{K} \sum_{k=1}^{K-1} \frac{1}{1 + \exp(\widetilde{b}_r^m(k))} \\
    &= \frac{2}{K} + \frac{2}{K} \sum_{k=1}^{K-1} \frac{1}{1 + \exp(\widetilde{b}_r^m(k))} \\
    &\leq \frac{2}{K} + \frac{4}{K} \sum_{k=1}^{K-1} \frac{1}{3 + \eta \gamma_m^2 k} \\
    &\leq \frac{2}{K} + \frac{4}{K} \int_0^{K-1} \frac{1}{3 + \eta \gamma_m^2 t} dt \\
    &= \frac{2}{K} + \frac{4}{\eta \gamma_m^2 K} \left[ \log(3 + \eta \gamma_m^2 t) \right]_0^{K-1} \\
    &= \frac{2}{K} + \frac{4}{\eta \gamma_m^2 K} \log \left( 1 + \frac{\eta \gamma_m^2 (K-1)}{3} \right) \\
    &\leq \frac{2}{K} + \frac{4}{\eta \gamma_m^2 K} \log \left( 1 + \frac{\eta \gamma_m^2 K}{3} \right) \\
    &\leq \frac{2}{K} + \frac{4}{\eta \gamma^2 K} \log \left( 1 + \frac{\eta \gamma^2 K}{3} \right),
\end{align}
where the last line uses $\gamma_m = \|\vx^m\| \geq \gamma$ together with the
fact that $f(x) = \log(1+x)/x$ is decreasing in $x$.
\end{proof}

\section{Additional Experimental Details} \label{app:experiment_details}
The synthetic and MNIST datasets that we use for the experiments in Section
\ref{sec:experiments} are described in full detail below.

\subsection{Synthetic Data}
The synthetic dataset is a simple task with $M=2$ clients and $n=1$ data points per
client, with $d=2$ dimensional data. It was introduced by \citet{crawshaw2025local} with
the goal of inducing conflict between the magnitude and direction of local client
updates. The two data points $\vx_1, \vx_2$ are defined in terms of parameters $\delta,
g$ as follows: $\vw_1 = \gamma_1 \vw_1^*$ and $\vw_2 = \gamma_2 \vw_2^*$, where
\begin{align}
    \vw_1^* &= \left( \frac{\delta}{\sqrt{1 + \delta^2}}, \frac{1}{\sqrt{1 + \delta^2}} \right) \\
    \vw_2^* &= \left( \frac{\delta}{\sqrt{1 + \delta^2}}, -\frac{1}{\sqrt{1 + \delta^2}} \right),
\end{align}
and $\gamma_1 = 1, \gamma_2 = 1/g$. By choosing $\delta$ close to zero and $g$ with
large magnitude, the two local objectives differ significantly in terms of gradient
direction and magnitude. For our experiments, we use $\delta = 0.1$ and $g = 10$.

\subsection{MNIST} \label{app:mnist_details}
Similar to \citet{wu2024large} and \citet{crawshaw2025local}, we use a subset of MNIST
data with binarized labels, and our implementation follows that of
\citet{crawshaw2025local}. First, we randomly select 1000 images from the MNIST dataset,
which we then partition among the $M$ clients using a heterogeneity protocol that is
common throughout the federated learning literature \citep{karimireddy2020scaffold}.
Specifically, for a data similarity parameter $s \in [0, 100]$, the $s\%$ of the data is
allocated to an ``iid pool", which is randomly shuffled, and a ``non-iid pool", which is
sorted by label. When sorting the non-iid pool, we sort according to the 10-way digit
label. We then split the iid pool into $M$ equally sized subsets, and similarly split
the non-iid pool into $M$ equally sized subsets (keeping the sorted order), and each
client's local dataset is comprised of one subset of the iid pool together with one
subset of the non-iid pool. In this way, the local datasets have different proportions
of each digit. If $s=100$, then the 1000 images are allocated uniformly at random to
different clients, and if $s=0$, then the clients will have nearly disjoint sets of
digits in their local datasets. Finally, after images have been allocated to clients, we
replace each image's label with the parity of its depicted digit. For our experiments,
we set $M=5$ and $s=50$. For all images, the pixel values initially fall into the range
$[0, 255]$; we normalize the data by subtracting $127$ from each pixel, then dividing
all pixels by the same scaling factor to ensure that $\max_{m,i} \|\vx_i^m\| = 1$.

\section{Additional Experimental Results}

\subsection{CIFAR-10 Experiments} \label{app:cifar_exp}
In this section, we provide additional experiments on the CIFAR-10 dataset, using
similar protocols as in Section \ref{sec:experiments}. For these experiments, we vary
the step size $\eta \in \{2^6, 2^7, \ldots, 2^{10}\}$, and other details of the setup
exactly match those of our MNIST experiments (see Section \ref{app:mnist_details}),
including the number of communication rounds $R$, the heterogeneity procedure, number of
clients $M$, number of samples per client $n$, data similarity parameter $s$, data
normalization procedure, and choice of interval $K \in \{1, 4, 16, 64\}$. Note that we
used step sizes between $2^6$ and $2^{10}$, since smaller choices led to very slow, very
stable convergence and larger choices led to overflow.

\begin{figure}
\begin{center}
\begin{subfigure}{0.5\linewidth}
    \includegraphics[width=\linewidth]{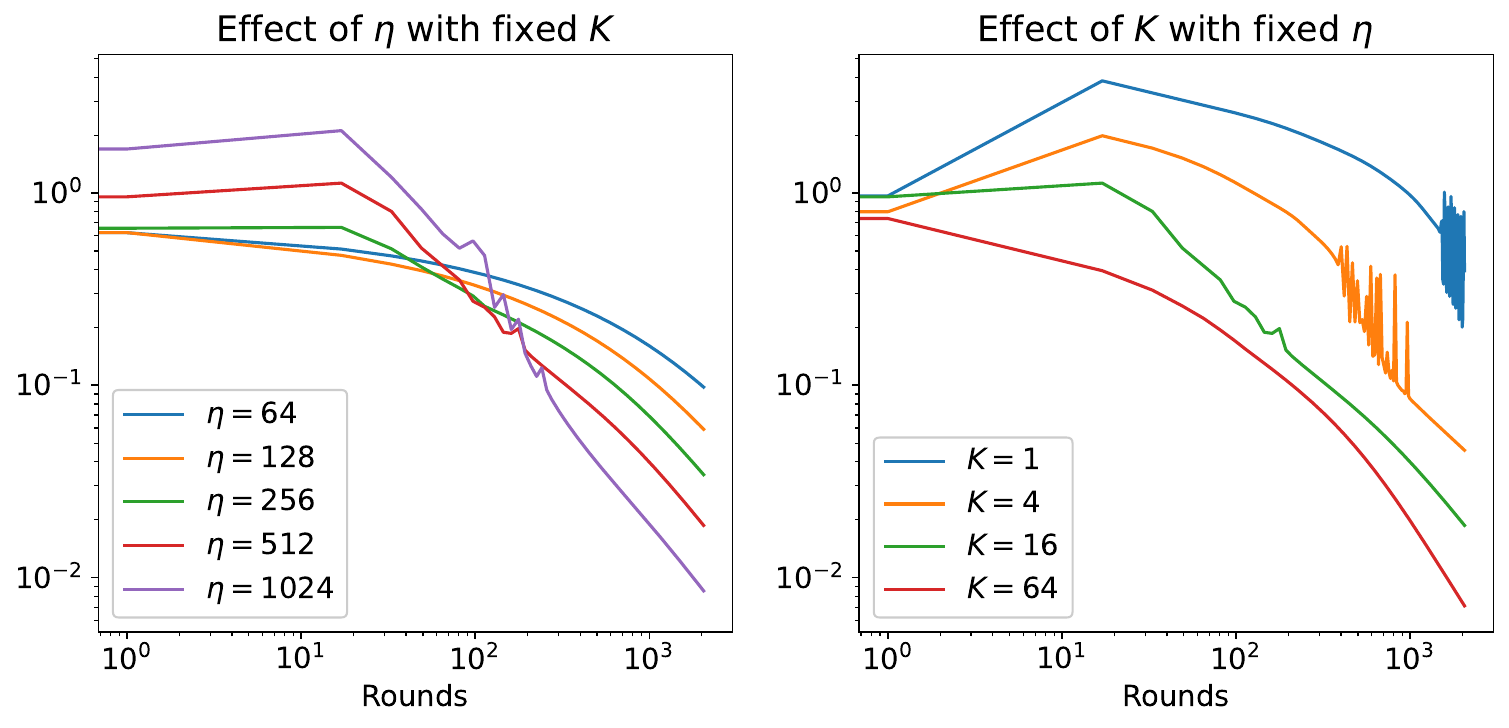}
    \caption{}
\end{subfigure}
\begin{subfigure}{0.225\linewidth}
    \includegraphics[width=\linewidth]{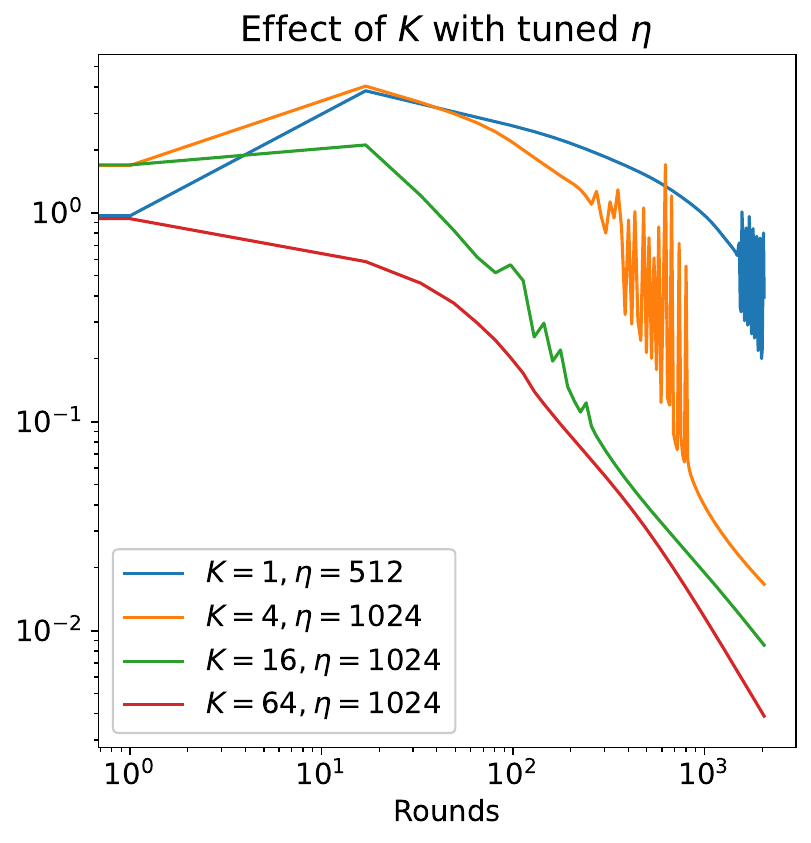}
    \caption{}
\end{subfigure}
\begin{subfigure}{0.225\linewidth}
    \includegraphics[width=\linewidth]{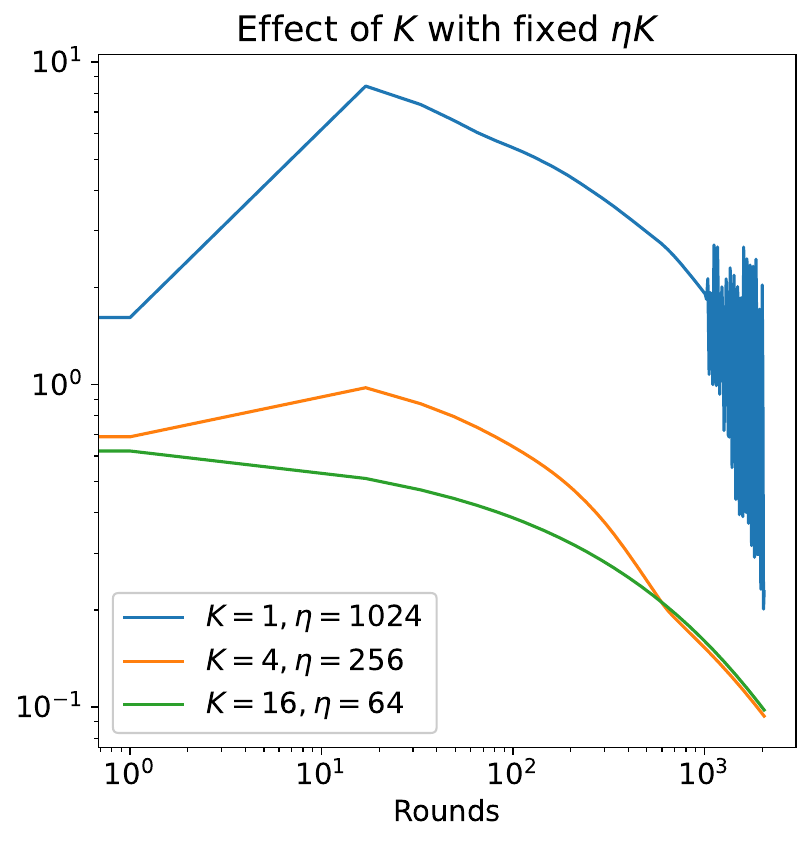}
    \caption{}
\end{subfigure}
\end{center}
\caption{
Train loss of Local GD (step size $\eta$, communication interval $K$) with the CIFAR-10
dataset. Overall, we observe that Local GD converges faster in the long run by choosing
a larger step size/communication interval, despite unstable/slow optimization in early
iterations. For $(a)$, we first fix $K = 16$ while varying $\eta$, then fix $\eta = 2^9$
while varying $K$.
}
\label{fig:cifar_results}
\end{figure}

The results can be seen in Figure \ref{fig:cifar_results}. For these additional
experiments, we used the same evaluation protocol as in Section \ref{sec:experiments}:
Figures \ref{fig:cifar_results}(a) corresponds to Q1 and Figure
\ref{fig:train_comparison}, Figure \ref{fig:cifar_results}(b) corresponds to Q2 and
Figure \ref{fig:tune_eta}, and Figure \ref{fig:cifar_results}(c) corresponds to Q3 and
Figure \ref{fig:constant_eta_K}.

The results on CIFAR-10 further support our theoretical findings. In Figure
\ref{fig:cifar_results}(a), larger step
sizes/communication intervals lead to faster convergence in the long run, despite the
resulting slow/unstable convergence in early iterations. In Figure
\ref{fig:cifar_results}(b), we can see that a larger communication interval $K$ leads to
faster convergence when $\eta$ is tuned to $K$. The results in Figure
\ref{fig:cifar_results}(c) are similar to the MNIST results in Figure
\ref{fig:constant_eta_K}: when $\eta K$ is constant, $K=1$ is less stable and slower
than other choices of $K$, and all other choices have roughly the same final loss. These
results strengthen the evidence that our theoretical findings accurately describe the
behavior of Local GD in practice.

\subsection{Margin Heterogeneity} \label{app:hetero_exp}
While our theoretical analysis makes no assumption about data heterogeneity (it applies
to any linearly separable dataset), the question remains whether the convergence rate
can be improved with a more fine-grained analysis that considers the local margins
$\gamma_m := \max_{\vw \in \mathbb{R}^d, \|\vw\|=1} \min_{(x, y) \in D_m} y \langle \vw,
\vw \rangle$ instead of the global margin $\gamma$ alone. We investigate this question
with a controlled synthetic dataset, by changing the local margins $\gamma_m$ while
preserving the global dataset.

\begin{figure}
\begin{center}
\includegraphics[width=0.32\linewidth]{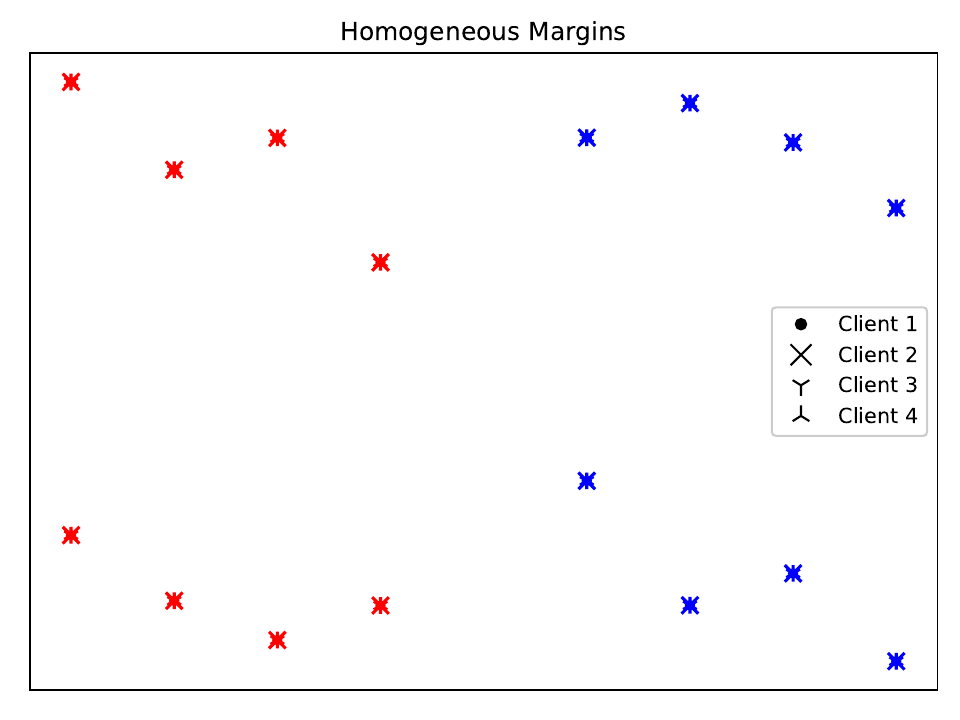}
\includegraphics[width=0.32\linewidth]{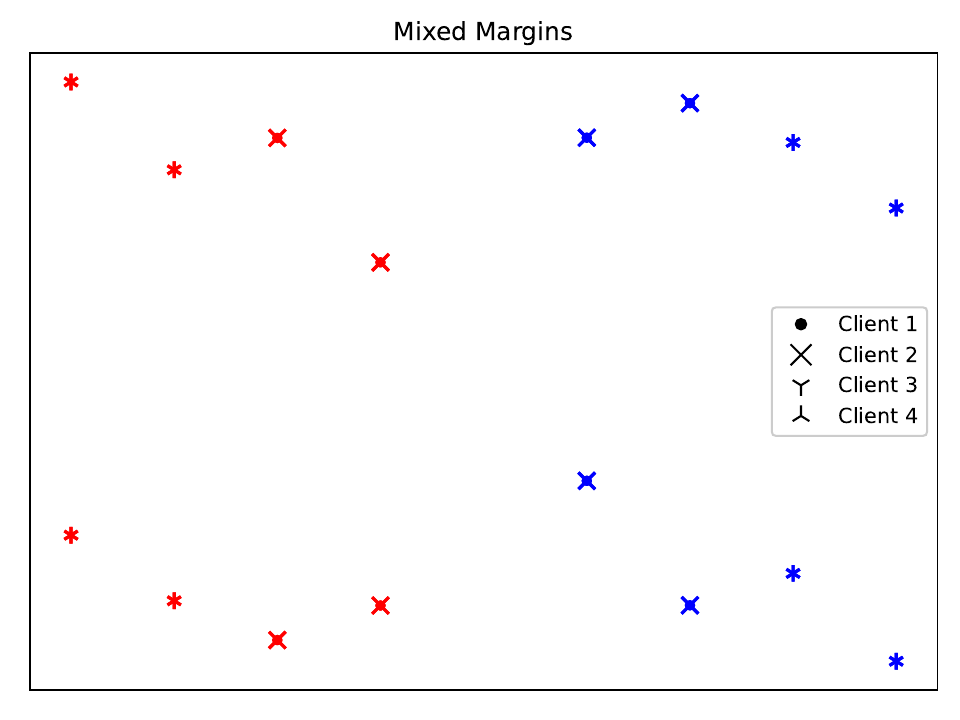}
\includegraphics[width=0.32\linewidth]{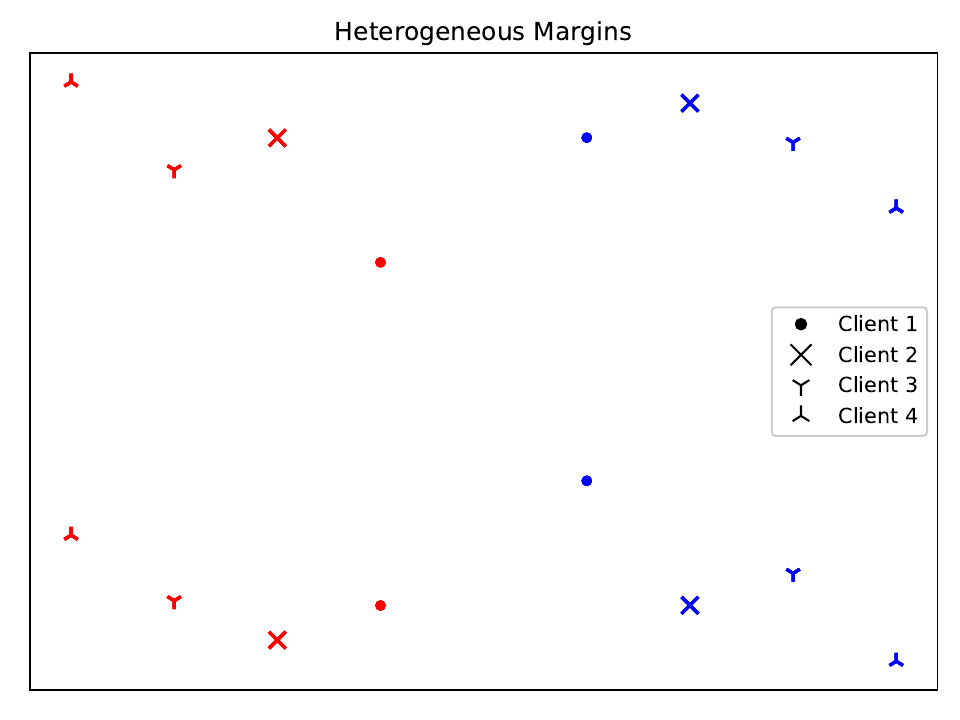}
\end{center}
\caption{
Three splits of a synthetic dataset. Binary labels are shown in red/blue, and client
indices for each data point are shown with markers. Note that some data points are
contained by multiple clients, which is shown with overlapping markers. In the
homogeneous split (left), all clients have the same data, so they all have the same
local margins. For mixed (middle), two clients have local margin $\gamma$, and two
clients have local margin $3 \gamma$. For heterogeneous (right), all four clients have
different local margins. Note that the combined dataset of all four clients is the same
for all three splits.
}
\label{fig:synthetic_splits}
\end{figure}

This synthetic dataset has $M = 4$ clients with a total of $16$ data points. The dataset
can be split among the four clients in three different ways to create either
homogeneous, partially homogeneous (i.e. mixed), or heterogeneous margins among clients,
which are shown in Figure \ref{fig:synthetic_splits}. Note that $\|\vx_i^m\| \leq 1$ for
every data point, so that $H \leq 1/4$, similarly with the datasets of Section
\ref{sec:experiments}. Also, the global dataset (and therefore $\gamma$) is the same for
all three splits. Our theory provides the same convergence rate upper bound for all
three splits, and we verify this prediction by evaluating Local GD with various
hyperparameters on the three splits. Results are shown in Figure
\ref{fig:synthetic_split_results}.

\begin{figure}
\begin{center}
\begin{subfigure}{0.6\linewidth}
    \includegraphics[width=\linewidth]{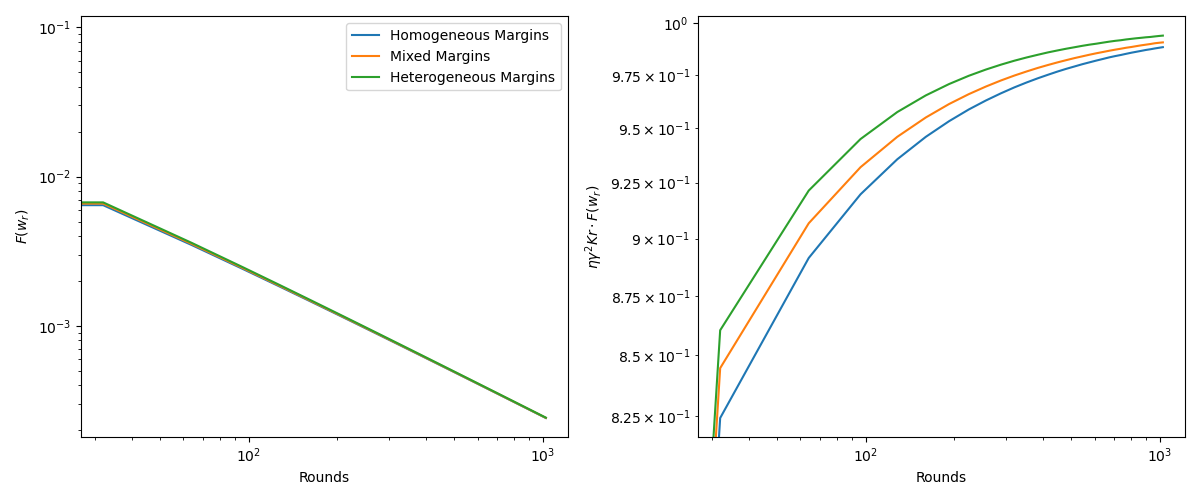}
    \caption{$\eta = 1, K = 16$}
\end{subfigure}
\begin{subfigure}{0.6\linewidth}
    \includegraphics[width=\linewidth]{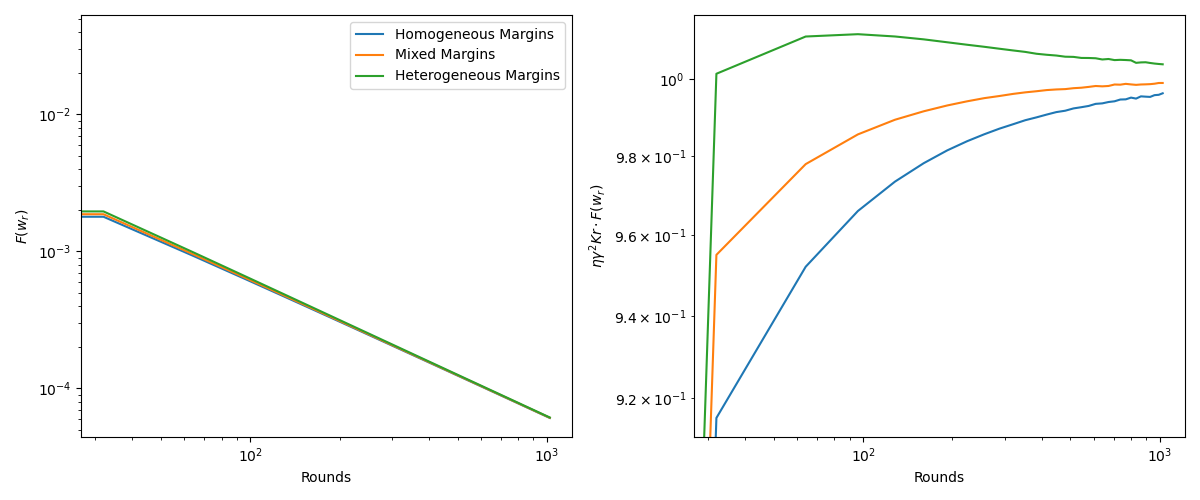}
    \caption{$\eta = 1, K = 64$}
\end{subfigure}
\begin{subfigure}{0.6\linewidth}
    \includegraphics[width=\linewidth]{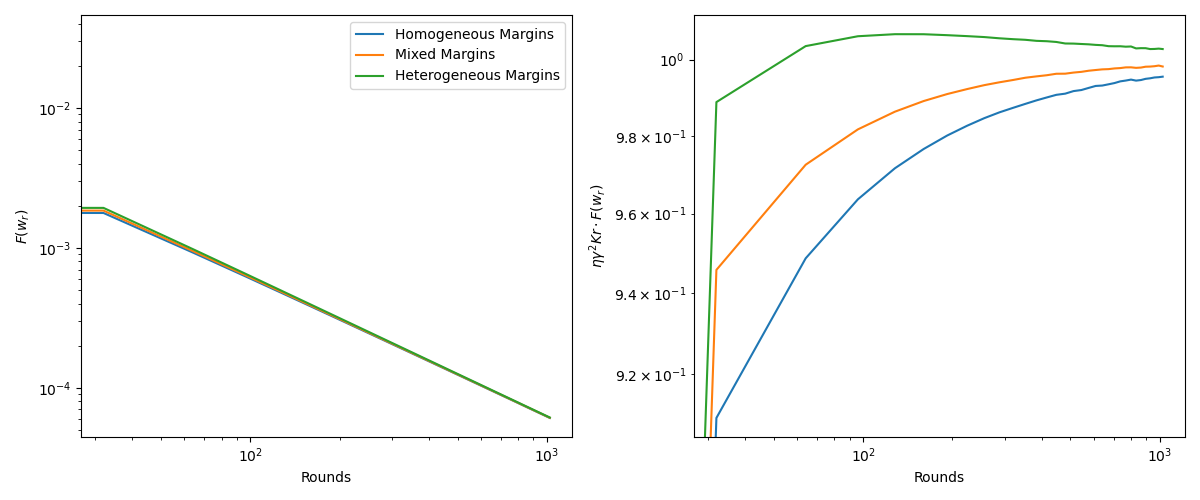}
    \caption{$\eta = 4, K = 16$}
\end{subfigure}
\begin{subfigure}{0.6\linewidth}
    \includegraphics[width=\linewidth]{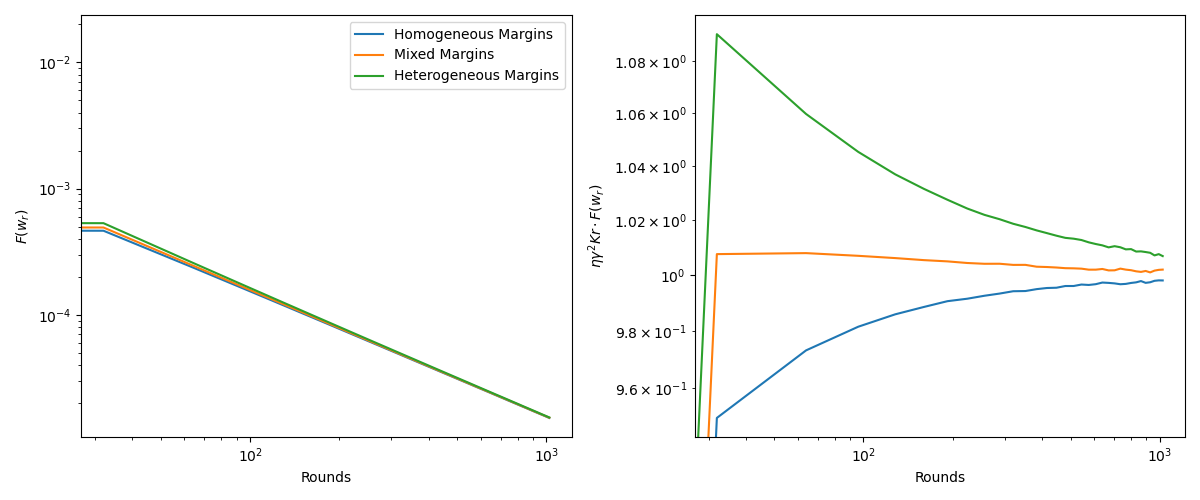}
    \caption{$\eta = 4, K = 64$}
\end{subfigure}
\end{center}
\caption{
Results of Local GD on three splits of the synthetic dataset pictured in Figure
\ref{fig:synthetic_splits}. The right subplots show the asymptotic rate as the number of
iterations goes to $\infty$, similarly to Figures 1(b) and 1(d) of \citep{wu2024large}.
}
\label{fig:synthetic_split_results}
\end{figure}

The left subplots of Figure \ref{fig:synthetic_split_results} show that the losses for
each split are slightly different in early iterations, but quickly become nearly
identical. The right subplots show that all three splits satisfy $\eta \gamma^2 K r
\cdot F(\vw_r) \rightarrow 1$ as $r$ increases, so that the asymptotic convergence rate
is unaffected by heterogeneity in the local margins. This behavior is consistent across
choices of $\eta$ and $K$. These results align with our theoretical prediction that the
convergence rate of Local GD depends on properties of the global dataset, rather than
how that dataset is allocated among clients.


\end{document}